%% file: main.tex
\documentclass[manuscript,nonacm]{acmart}

\usepackage{amsthm}
\usepackage{subcaption}
\usepackage{adjustbox}
\usepackage{bm}
\usepackage{listings}
\usepackage{tikz}
\usepackage{algorithm}
\usepackage{algorithmic}
\usepackage{enumitem}
\usepackage{cleveref}
\usepackage[altpo]{backnaur}
\usepackage{pgfplots}
\usepackage{nicefrac}       
\usepackage{soul}

\usetikzlibrary{arrows,positioning,backgrounds}
\usetikzlibrary{plotmarks}
\pgfplotsset{compat=1.18}

\pgfkeys{/pgf/number format/.cd,
    set thousands separator={}}

\input{macros.tex}

\title{\ffoil: A Declarative Query Language for Explaining Classification Models}

\author{Marcelo Arenas}
\affiliation{%
  \institution{Pontificia Universidad Católica de Chile}
  \city{Santiago}
  \country{Chile}
}

\author{Pablo Barcel\'o}
\affiliation{%
  \institution{Pontificia Universidad Católica de Chile}
  \city{Santiago}
  \country{Chile}
}

\author{Diego Bustamante}
\affiliation{%
  \institution{Pontificia Universidad Católica de Chile}
  \city{Santiago}
  \country{Chile}
}

\author{Jose Caraball}
\affiliation{%
  \institution{Pontificia Universidad Católica de Chile}
  \city{Santiago}
  \country{Chile}
}

\author{Mar\'ia Alejandra Schild}
\affiliation{%
  \institution{Pontificia Universidad Católica de Chile}
  \city{Santiago}
  \country{Chile}
}

\author{Bernardo Subercaseaux}
\affiliation{%
  \institution{Carnegie Mellon University}
  \city{Pittsburgh}
  \state{Pennsylvania}
  \country{USA}
}

\newcommand{\supplementary}{}

\begin{document}
	
\begin{abstract}
The XAI community has studied a wide range of queries and scores for explaining predictions of ML models. From a data management perspective, this proliferation of explanation notions calls for declarative query languages in which such notions can be specified, combined, and analyzed uniformly. In this paper, we develop such a framework for Boolean models. We first revisit $\foil$, an interpretability query language for black-box models, and show that it has two fundamental limitations: it cannot express central optimality-based explanation queries, and its evaluation problem over decision trees is hard for every level of the polynomial hierarchy. We then introduce $\ffoil$, a query language based on $\foil$ with an extended vocabulary and a layered structure. We show that $\ffoil$ can express a broad family of explanation notions, including abductive, contrastive, feature-based, and distance-based queries. We also prove that the evaluation problem for each query in $\ffoil$ belongs to the Boolean hierarchy over every class of Boolean models for which some basic predicates can be evaluated in polynomial time. In particular, that property holds for deterministic and decomposable Boolean circuits. Finally, we introduce $\optfoil$, an optimization-oriented fragment of $\ffoil$ for computing explanations that are minimal with respect to strict partial orders, and prove that its evaluation problem is in $\mathrm{FP}^{\mathrm{NP}}$ under the same tractability assumptions. These complexity results have a direct algorithmic consequence: a fixed ExplAIner query can be evaluated with a fixed number of calls to a SAT solver, while a notion of explanation specified in $\optfoil$ can be computed with a polynomial number of such calls. This is particularly relevant in formal XAI, where SAT solvers have been successfully used to compute explanations for several classes of ML models.
\end{abstract}

 \maketitle

\section{Introduction}\label{sec:intro}
\input{new-sections/new-intro}

\section{Background}\label{sec:background}
\input{new-sections/background}

\section{First Order Interpretability Logic}\label{sec:foil-limitations}
\input{new-sections/foil}

\section{\ffoil: a tractable logic for explainability}\label{sec:dt-foil}
\input{new-sections/f-foil}

\section{\optfoil: computing explanations efficiently}\label{sec:opt-dt-foil} 
\input{new-sections/opt-foil}

\section{Concluding remarks and future work}\label{sec:conclusions}
\input{new-sections/conclusions}

\section{Acknowledgements}\label{sec:acknowledgements}
\input{new-sections/acknowledgements}




\bibliographystyle{ACM-Reference-Format}
\bibliography{main}

\ifdefined\supplementary
\newpage

\appendix
\newpage

\section{Supplementary Material}\label{sec:appendix}
\input{new-sections/app-proofs}
\fi
	
\end{document}

%% file: macros.tex
\newcommand{\bh}{{\rm BH}}

\newcommand{\ph}{{\rm PH}}

\newcommand{\ptime}{{\rm P}}
\newcommand{\fptime}{{\rm FP}}
\newcommand{\np}{{\rm NP}}
\newcommand{\stp}{\Sigma_2^{\rm{P}}}

\newcommand{\conp}{{\rm coNP}}
\newcommand{\fpnp}{\fptime$^{\np}$}
\newcommand{\es}{\mathbf{e}}

\newcommand{\M}{\mathcal{M}}
\newcommand{\T}{\mathcal{T}}
\newcommand{\dm}{{\textit dim}}
\newcommand\false{\mathbf{false}}
\newcommand\true{\mathbf{true}}

\newcommand\var{\mathsf{var}}
\newcommand{\yes}{\textsc{Yes}}
\newcommand{\no}{\textsc{No}}

\newcommand{\C}{\mathcal{C}}
\newcommand{\dt}{\mathsf{DTree}}
\newcommand{\bdd}{\mathsf{BDD}}

\newcommand{\dnf}{\mathsf{DNF}}
\newcommand{\cnf}{\mathsf{CNF}}
\newcommand{\nnf}{\mathsf{NNF}}

\newcommand{\ddnnf}{\mathsf{d}\text{-}\mathsf{DNNF}}
\newcommand{\tcnf}{2\mathsf{CNF}}
\newcommand{\horn}{\mathsf{HORN}}

\newcommand{\pos}{\mathsf{Pos}}

\newcommand{\lel}{\preceq}
\newcommand{\lnel}{\prec}

\newcommand{\led}{\mathsf{LEH}}
\newcommand{\full}{\mathsf{Full}}
\newcommand{\fullset}{\textit{comp}}
\newcommand{\allpos}{\mathsf{AllPos}}
\newcommand{\allneg}{\mathsf{AllNeg}}

\newcommand{\meet}{\mathsf{GLB}}
\newcommand{\node}{\mathsf{Node}}

\newcommand{\comp}{\mathsf{Comp}}
\newcommand{\add}{\mathsf{Add}}
\newcommand{\leh}{\mathsf{LEH}}
\newcommand{\lu}{\mathsf{LU}}

\newcommand{\pred}{\mathsf{Pr}}
\newcommand{\minf}{\text{\rm min}}

\newcommand{\astruct}{\mathfrak{A}}

\newcommand{\EF}{{Ehrenfeucht-Fra\"\i ss\'e\ }}
\newcommand{\FO}{{\rm FO}} 	
\newcommand{\qr}{{\rm qr}}
\newcommand{\dom}{{\rm dom}}  

\newcommand{\foil}{\textsc{FOIL}}
\newcommand{\ffoil}{\textsc{ExplAIner}}

\newcommand{\optfoil}{\textsc{Opt-FOIL}}

\newcommand{\nf}{\mathsf{UF}}

\newcommand{\sem}{\mathsf{wDFS}}
\newcommand{\minsem}{\mathsf{DFS}}

\newcommand{\suf}{\mathsf{SUF}}

\newcommand{\sr}{\mathsf{wAXp}}
\newcommand{\rsr}{\mathsf{rwAXp}}
\newcommand{\minsr}{\mathsf{AXp}}
\newcommand{\msr}{\mathsf{mAXp}}
\newcommand{\nmsr}{\mathsf{nmAXp}}
\newcommand{\rmsr}{\mathsf{rmAXp}}

\newcommand{\axp}{\mathsf{AXp}}

\newcommand{\cx}{\mathsf{CXp}}
\newcommand{\wcx}{\mathsf{wCXp}}
\newcommand{\mcx}{\mathsf{mCXp}}

\newcommand{\mcr}{\mathsf{MCR}}
\newcommand{\mca}{\mathsf{MCA}}

\newcommand{\rfeat}{\mathsf{RF}}
\newcommand{\nfeat}{\mathsf{NF}}

\newcommand{\csr}{\mathsf{CAXp}}
\newcommand{\nsr}{\mathsf{SAXp}}

\newcommand{\sat}{\textsc{SAT}}
\newcommand{\unsat}{\textsc{UNSAT}}

\newcommand{\logicEvaluation}{\textsc{Eval}}
\newcommand{\logicComputation}{\textsc{Comp}}

\newcommand{\pat}{{\operatorname{pat}}}

\newcommand{\ecrowlen}{3.4pt}

\newtheorem{theorem}{Theorem}[section]
\newtheorem{proposition}[theorem]{Proposition}

\newtheorem{corollary}[theorem]{Corollary}

\newtheorem{lemma}[theorem]{Lemma}

\definecolor{codegreen}{rgb}{0,0.6,0}
\definecolor{codegray}{rgb}{0.5,0.5,0.5}
\definecolor{codepurple}{rgb}{0.58,0,0.82}
\definecolor{backcolour}{rgb}{0.95,0.95,0.92}

\lstdefinestyle{mystyle}{
    backgroundcolor=\color{backcolour},   
    commentstyle=\color{codegreen},
    keywordstyle=\color{magenta},
    numberstyle=\tiny\color{codegray},
    stringstyle=\color{codepurple},
    basicstyle=\ttfamily\footnotesize,
    breakatwhitespace=false,         
    breaklines=true,                 
    captionpos=b,                    
    keepspaces=true,                 
    showspaces=false,                
    showstringspaces=false,
    showtabs=false,                  
    tabsize=2
}

\tikzset{
    rt/.style={
		rectangle,
		fill = white,
		draw=black, 
		text centered,
		inner sep=0.5ex
		},
    rtt/.style={ 
    	rt,
    	inner sep=0.1ex
    	},
    ert/.style={ 
     	rt,
     	dashed
     	}, 
    ertt/.style={ 
        rtt,
        dashed
        }, 
    rect/.style={ 
        rectangle,
        fill = white,
        rounded corners,
        draw=black, 
        text centered,
        inner sep=0.8ex
        },
    rectw/.style={
        rect,
        draw=white
        },
    erect/.style={ 
    	rect,
    	dashed
    	},
    erectw/.style={ 
     	rectw,
     	dashed
     	},
    arrout/.style={
           ->,
           -latex,
           },
    arrin/.style={
           <-,
           latex-,
           },
    arrb/.style={
           <->,
           >=latex,
           }
}

\newcommand{\feat}[1]{\mbox{\footnotesize$ #1$}}

\tikzset{
	circ/.style={
		circle,
		draw=black, 
		thick,
		text centered,
	},
	circw/.style={
		circle,
		draw=white, 
		thick,
		text centered,
	},
	arrout/.style={
		->,
		-latex,
		thick,
	},
	arrin/.style={
		<-,
		latex-,
		thick,
	},
	arrw/.style={
		-,
		thick,
	},
	arrww/.style={
		-,
		thick,
		draw=white, 
	}
}

%% file: new-sections/new-intro.tex
\paragraph{Explainability as a query-language problem.}
The increasing use of machine learning (ML) models in decision-making systems has created a pressing need for principled methods to understand the predictions produced by such models. This need has led to a large body of work in explainable AI (XAI)~\cite{gunningDARPAExplainableArtificial2019,DBLP:journals/csur/GuidottiMRTGP19,DBLP:journals/inffus/ArrietaRSBTBGGM20,molnar2022}, and in particular to a variety of queries, scores, and explanation notions aimed at identifying why a model classifies a given input in a particular way~\cite{Marques-Silva_2023,Marques-Silva2024,darwiche2023logic}. Examples include abductive explanations, contrastive explanations, counterfactual-style queries, and feature-necessity or feature-relevance notions~\cite{DBLP:conf/aaai/IgnatievNM19,Darwiche_Hirth_2020,DBLP:conf/aaai/Ribeiro0G18,Huang_Cooper_Morgado_Planes_Marques-Silva_2023}.

From a data management perspective, this proliferation of explanation notions suggests a natural question: rather than designing a separate algorithm or formalism for each explanation task, can we develop a declarative language in which users specify what explanation they are looking for? This is in line with a long tradition in databases: complex computational tasks are exposed through query languages with well-defined syntax and semantics, while the study of their expressive power and evaluation complexity provides a principled understanding of what can be asked and how hard it is to answer~\cite{AbHV95,K90,Vard82,PY99}. In this view, a model becomes an object over which one poses queries, and explanation notions become fixed queries evaluated over that object~\cite{DBLP:conf/nips/ArenasBBPS21,arenas2024data-management-xai}.

This perspective has several advantages. First, it provides a uniform
framework for comparing and combining explanation notions. This is
important because there is no single explanation concept that is best
suited for all users, models, or applications; in many cases, the most
informative explanation is obtained by combining several
criteria~\cite{doshivelez2017rigorous,Marques-Silva_Ignatiev_2023}. Second,
it makes it possible to study explainability through standard
database-theoretic lenses, such as expressiveness and evaluation
complexity~\cite{AbHV95,Vard82,fmt-book}. Third, it opens the door to the
development of general optimization techniques for the operators of a
query language for explainability. Such techniques can reduce the
evaluation time of several explainability queries simultaneously, rather
than treating each query in isolation.

A central issue in such a framework is how to measure the complexity of
query evaluation. Since an explanation notion is intended to be specified
by a fixed formula of the language, the appropriate measure is
\emph{data complexity}: the query is fixed, while the input consists of
the model representation and the instance to be explained~\cite{Vard82}.
From this perspective, polynomial-time data complexity is desirable, but
it is not the only meaningful tractability target. Many explanation tasks
are inherently computationally demanding~\cite{NEURIPS2020_b1adda14,DBLP:journals/jair/WaldchenMHK21},
and therefore a useful explainability language should also allow
controlled forms of non-polynomial complexity. In particular, data
complexity in $\mathrm{P}^{\mathrm{NP}}$ remains a reasonable and useful
target: it corresponds to computation with a polynomial number of calls
to an NP oracle, and it is compatible with the use of SAT solvers as
evaluation engines. This complexity level is especially appropriate in
our setting because the inputs are model representations, not database
instances in the traditional sense. In contrast with large relational
databases, the tree-based models commonly considered in formal
explainability are often of moderate size, and SAT-based methods have
been successfully used to compute explanations for such models~\cite{DBLP:conf/sat/IgnatievS21,DBLP:conf/ijcai/Izza021,DBLP:conf/cp/YuISB20}.
Thus, in this paper we regard polynomial time and $\mathrm{P}^{\mathrm{NP}}$ as desirable data-complexity bounds for an explainability query~language.

\paragraph{Toward declarative languages for model interpretability.}
A first step in this direction was taken by Arenas et al.~\cite{DBLP:conf/nips/ArenasBBPS21}, who introduced FOIL, a first-order interpretability logic for querying ML models. FOIL is model-agnostic: it treats a model as a black box and provides access to its positive instances together with the natural subsumption relation over partial instances. This simple design makes FOIL an appealing foundational language, and it is expressive enough to capture several basic explanation notions.

However, model-agnosticism also has limitations. If the language is too weak, it cannot express explanation concepts that are central in practice. If it is too unconstrained, its evaluation problem may become too complex to support query evaluation in the sense expected from a database-oriented framework. Thus, the challenge is to design a language that balances two requirements. On the one hand, it should be expressive enough to capture a broad family of explanation queries, including optimality-based notions such as minimum or maximum explanations. On the other hand, it should have well-behaved evaluation and computation problems over relevant classes of Boolean models.

In this paper, we address this challenge by developing a declarative framework for explaining Boolean models. Our setting is not tied to decision trees. Instead, we consider models abstractly as Boolean functions, while also studying concrete representation classes such as deterministic and decomposable Boolean circuits and decision trees. This allows us to separate the logical specification of explanation queries from the representation-dependent complexity of evaluating them. Such a separation is particularly natural from a database perspective, where query specification and query evaluation over different representation classes are treated as distinct but connected problems.

\paragraph{The limitations of FOIL}
We begin by revisiting FOIL from the perspective of query-language design. We show that, despite its foundational role, FOIL does not satisfy the requirements above.
First, FOIL lacks the expressive power needed to capture some natural optimality-based explanation notions. In particular, we prove that minimum abductive explanations cannot be expressed in FOIL, even when the underlying model is restricted to be a decision tree. This shows that the limitation is not caused by the use of complex model classes, but by the expressive resources of the language itself.
Second, FOIL has high evaluation complexity. We prove that, for every level of the polynomial hierarchy, there is a fixed FOIL formula whose evaluation problem over decision trees is hard for that level. Thus, even on a class of models traditionally regarded as interpretable, unrestricted FOIL does not provide the kind of controlled data complexity that one would expect from a practical declarative language for explanations. This complements earlier complexity-theoretic approaches to model interpretability, which study the difficulty of answering explanation queries over different model classes~\cite{NEURIPS2020_b1adda14,DBLP:journals/jair/WaldchenMHK21}.

\paragraph{\ffoil: a tractable logic for explanation queries.}
Motivated by these limitations, we introduce \ffoil, a first-order logic designed to express explanation queries over Boolean models while retaining controlled evaluation complexity. The language extends the basic FOIL vocabulary with a relation that compares partial instances according to the number of defined features. 
This addition is essential for expressing optimality conditions based on cardinality, such as minimum abductive explanations~\cite{NEURIPS2020_b1adda14,Darwiche_Hirth_2020} and maximum contrastive explanations~\cite{DBLP:conf/aaai/IgnatievNM19,Huang_Cooper_Morgado_Planes_Marques-Silva_2023}.

\ffoil\ is organized in layers. Its atomic layer captures structural
properties of partial instances; its quantified layer allows formulas to
refer to the behavior of the model by combining formulas from the atomic
layer with the predicates $\allpos$ and $\allneg$, which express whether
all completions of a partial instance are classified positively or
negatively; and its topmost layer permits Boolean combinations of
explanation properties. This organization is designed to provide enough
expressive power for explanation tasks while keeping evaluation under
control.

We show that \ffoil\ can express the explanation notions studied in this paper, including weak abductive explanations, abductive explanations, and minimum abductive explanations~\cite{DBLP:conf/aaai/IgnatievNM19,Darwiche_Hirth_2020,NEURIPS2020_b1adda14,ijcai2022p91}; weak contrastive explanations, contrastive explanations, and maximum contrastive explanations~\cite{DBLP:conf/aaai/IgnatievNM19,NEURIPS2020_b1adda14,darwiche2023logic}; minimum change required and maximum change allowed~\cite{NEURIPS2020_b1adda14}; and necessary features and relevant features~\cite{Izza2021EfficientEW,Huang_Cooper_Morgado_Planes_Marques-Silva_2023}. At the same time, we prove that the evaluation problem for \ffoil\ belongs to the Boolean hierarchy over every class of models for which the basic $\allpos$ and $\allneg$ checks are tractable. This includes decision trees and also richer representation classes with suitable tractability properties, such as deterministic and decomposable Boolean circuits~\cite{DarwicheCompilation,DBLP:conf/aaai/ArenasBBM21}.

\paragraph{\optfoil: computing explanations.}
Evaluation is only one part of the problem. A language for explainability should also support the computation of explanations. For this reason, we introduce \optfoil, an optimization-oriented fragment built from the quantified layer of \ffoil\ together with a minimization operator over definable strict partial orders.

\optfoil\ captures explanation tasks in which one seeks an object satisfying a logical specification and minimal with respect to a user-defined preference order. This includes standard subset-minimal explanations, cardinality-minimum explanations, and distance-based notions such as minimum change required. By changing the order, the same formalism can also express maximality-based notions, such as maximum contrastive explanations and maximum change allowed.

Our main computational result for \optfoil\ is that its computation problem belongs to $\mathrm{FP}^{\mathrm{NP}}$ over every class of models for which $\allpos$ and $\allneg$ can be evaluated in polynomial time, where $\mathrm{FP}$ is the class of functions that can be computed in polynomial time. In the terminology above, this means that computing explanations specified in \optfoil\ has controlled data complexity: the formula is fixed, and the cost is measured as a function of the model representation and the input instance. This places \optfoil\ within the complexity regime identified above as suitable for declarative explainability languages, while allowing the language to capture optimization-based explanation tasks that are unlikely to admit polynomial-time algorithms in full generality.

\paragraph{Technical contributions.}
The following are the technical contributions of the paper.

\begin{itemize}
    \item We prove two limitations of FOIL over decision trees. On the
    expressiveness side, we show that no FOIL formula can define the
    minimum abductive explanation query. On the complexity side, we
    prove that for every level $\Sigma_k^P$ of the polynomial
    hierarchy, there is a fixed FOIL formula whose evaluation problem
    is $\Sigma_k^P$-hard.

    \item We introduce \ffoil, a logic based on \foil\ with an extended
    vocabulary and a layered structure: the atomic layer, the quantified
    layer, and the full \ffoil\ layer. The vocabulary of \ffoil\ consists
    of the predicates $\subseteq$, $\preceq$, $\allpos$, and $\allneg$,
    where $\subseteq$ is the subsumption relation on partial instances
    and $\preceq$ compares partial instances by their number of defined
    features. We show that both $\subseteq$ and $\preceq$ are necessary
    by proving that neither relation is first-order definable from the
    other over decision trees.

    \item We show that \ffoil\ is expressive enough to encode the
    explanation queries considered in the paper, including weak abductive
    explanations, subset-minimal abductive explanations,
    cardinality-minimum abductive explanations, weak and maximal
    contrastive explanations, minimum change required, maximum change
    allowed, necessary features, and relevant features. Each of these
    notions is expressed by a fixed query in \ffoil.

    \item We establish complexity bounds for the three layers
    of \ffoil.  First, we show that the evaluation problem for queries
    in the atomic layer can be solved in polynomial time over every
    class of Boolean models. Second, for every class of Boolean models
    over which the predicates $\allpos$ and $\allneg$ can be decided
    in polynomial time, we show that the evaluation problem for
    queries in the quantified layer is in $\np$.  Moreover, we show
    that there exists a query in the quantified layer whose evaluation
    problem is $\np$-complete over the class of decision trees. Third,
    for every class of Boolean models over which $\allpos$ and
    $\allneg$ can be decided in polynomial time, we show that the
    evaluation problem for \ffoil\ queries is in the Boolean
    hierarchy; equivalently, such queries can be evaluated with a
    fixed number of calls to an $\np$ oracle. Furthermore, we show
    that for every level $\bh_k$ of the Boolean hierarchy, there
    exists an \ffoil\ query whose evaluation problem is $\bh_k$-hard
    over the class of decision trees.

Importantly, the assumption that the predicates $\allpos$ and $\allneg$
can be decided in polynomial time is not specific to decision trees.
This condition also holds for more general classes of Boolean models,
including deterministic and decomposable Boolean circuits. Thus, the
upper bounds above apply beyond tree-based representations and cover
circuit classes that are central in knowledge compilation~\cite{DarwicheCompilation}.

    \item We define \optfoil\ as an optimization-oriented fragment of
    \ffoil, based on the quantified layer together with a minimization
    operator over strict partial orders. For every class of Boolean
    models over which the predicates $\allpos$ and $\allneg$ can be
    decided in polynomial time, we show that the computation problem for
    \optfoil\ queries is in $\mathrm{FP}^{\mathrm{NP}}$. Hence,
    explanations specified in \optfoil\ can be computed with a polynomial
    number of calls to an $\np$ oracle.

    \item As a result of independent interest, we use Presburger
    arithmetic to show that the problem of verifying whether a sentence
    over the atomic layer of \ffoil\ is valid is decidable. This result
    is needed to provide an effective syntax for \optfoil, since
    \optfoil\ requires verifying that a sentence in the atomic layer
    defines a strict partial order.

    \item Finally, we show that, under standard complexity-theoretic
    assumptions, \optfoil\ is strictly contained in \ffoil, and \ffoil\
    is strictly contained in \foil\ over the extended vocabulary
    consisting of the predicates $\subseteq$, $\preceq$, $\allpos$, and
    $\allneg$.
\end{itemize}

\paragraph{Organization of the paper.}
The remainder of the paper is organized as follows.
Section~\ref{sec:background} introduces the basic notions used
throughout the paper, including Boolean models, partial instances,
model representations, and the explanation queries studied in our
framework. Section~\ref{sec:foil-limitations} revisits $\foil$ and
establishes its limitations in terms of expressiveness and evaluation
complexity. Section~\ref{sec:dt-foil} introduces \ffoil\ and proves its
main expressiveness and evaluation results. Section~\ref{sec:opt-dt-foil}
presents \optfoil, our optimization-oriented language for computing
explanations, together with its computational guarantees.
Section~\ref{sec:conclusions} presents concluding remarks and
directions for future work. Finally, the appendix contains
supplementary material, including technical proofs that are deferred
for readability.

%% file: new-sections/background.tex
	
	We begin by introducing the main components of our framework, followed by a review of various explainability queries that will be addressed in the subsequent sections.

	\subsection{Models and instances}
	
	We use an abstract notion of a model of dimension $n$,
	and define it as a Boolean function $\M :  \{0,1\}^n \to \{0,
	1\}$.\footnote{We focus on Boolean models, as is common in 
	formal XAI research \citep{DBLP:journals/jair/WaldchenMHK21,audemard2021explanatory, 10497107}.}  
    We write $\dm(\M)$ for the dimension of a model
	$\M$. A {\em partial instance} of dimension $n$ is a tuple $\es \in
	\{0,1,\bot\}^n$, where $\bot$ is used to represent undefined features.
	We define $\es_\bot =  \{i \in \{1,\dots,n\} \mid \es[i] = \bot\}$.  
	An {\em instance} of dimension $n$ is a tuple $\es \in \{0,1\}^n$, that is, a partial instance without undefined features. For every instance $\es$ of dimension $\dm(\M)$, we write $\M(\es)$ for the value assigned by $\M$ to $\es$.
	
	Given two partial instances $\es$, $\es'$ of
	dimension $n$, we write $\es \subseteq \es'$ and say that $\es$ is {\em subsumed} by $\es'$ if and only if, for every $i \in \{1, \ldots, n\}$, whenever $\es[i] \neq \bot$, we have $\es[i] = \es'[i]$. In other words, $\es'$ can be obtained from $\es$ by replacing
	some occurrences of $\bot$ by Boolean values. For example, $(1,\bot)$ is subsumed by $(1,0)$, but it is not subsumed by $(0,0)$.
	A partial instance $\es$ can be
	seen as a compact representation of the set of instances $\es'$ such that $\es$ is subsumed by $\es'$. Such instances $\es'$ are called the {\em completions} of $\es$ and the set of all of them is denoted by $\fullset(\es)$. 
	
	For each $n$, partial instances of dimension $n$ are partitioned into $n+1$ levels: for each $i \in \{0, \ldots, n\}$, level $i$ consists of all partial instances with exactly $i$ defined features. Given partial instances $\es$, $\es'$, we write $\es \lel \es'$ and say that $\es$ is on a \emph{less or equal level} than $\es'$ if and only if $|\es_\bot| \geq |\es'_\bot|$. In other words, $\es'$ has at least as many defined features as $\es$. 

	We will also use the symbols $\subset$ and $\lnel$ for the corresponding strict relations.
	
	In several proofs, we write $\es \cdot \es'$ for the concatenation of both instances. Moreover, for a value $s \in \{0,1,\bot\}$, we denote by $\{s\}^{n}$ the partial instance of dimension $n$ whose entries are all equal to $s$.
	
	We next introduce several classes of Boolean functions that will be used throughout the paper.
	
	\paragraph{Boolean Circuits.} A \emph{Boolean circuit} of dimension $n$ is a directed acyclic graph over a set of variables~$\{x_1, ..., x_n\}$ such that: 
	\begin{enumerate}
		\item[(i)] Every node without incoming edges is either a {\em variable gate} or a {\em constant gate}. A variable gate is labeled with a
		variable, and a constant gate is labeled with either~$0$ or~$1$;
		
		\item[(ii)] Every node with incoming edges is a {\em
		logic gate}, and is labeled with a symbol~$\land$,~$\lor$ or~$\lnot$. If it is labeled with the symbol~$\lnot$, then it has exactly one incoming edge;
		
		\item[(iii)] Exactly one node does not have any outgoing edges, and
		this node is called the {\em output gate}.
	\end{enumerate}

	Given a Boolean circuit $C$ and an instance $\es$ of dimension $n$, the value $C(\es)$ is defined as the value of the output gate of~$C$ when we evaluate~$C$ on input~$\es$. Note that we are identifying inputs of the circuit as instances of the Boolean model.
	
	Several restrictions of Boolean circuits with good computational properties have been studied.

\paragraph{Negation Normal Form.} A \emph{negation normal form} ($\nnf$) circuit of dimension $n$ is a Boolean circuit of dimension $n$ such that the incoming edge of every negation gate comes from a variable gate.

\paragraph{Determinism and decomposability.} Let $X$ be a set of variables, let $C$ be a circuit over $X$, and let ~$g$ be a gate of~$C$. We define $C_g$ to be the Boolean circuit over~$X$ induced by
the set of gates~$g'$ of~$C$ for which there exists a directed path from~$g'$ to~$g$ in~$C$. Note that~$g$ is the output gate of~$C_g$. An~$\lor$-gate~$g$ of~$C$ is said to be
\emph{deterministic} if, for every pair~$g_1$,~$g_2$ of distinct input gates of~$g$, there is no instance~$\es$ such that~$C_{g_1}(\es) = C_{g_2}(\es) = 1$. The circuit~$C$ is called \emph{deterministic} if every~$\lor$-gate of~$C$ is deterministic. For every gate $g$ of $C$, define $\var(g)$ as the set of variables~$x \in X$ such that there exists a variable gate labeled by~$x$ in~$C_g$.  An~$\land$-gate~$g$ of~$C$ is said to be \emph{decomposable} if for every pair~$g_1$,~$g_2$ of distinct input gates of~$g$, we have~$\var(g_1) \cap \var(g_2) = \emptyset$. The circuit~$C$ is called \emph{decomposable} if every
$\land$-gate of~$C$ is decomposable.

\begin{figure}[ht]
	\begin{center}
		\begin{center}
			\begin{tikzpicture}
				\node[circ, minimum size=8mm, inner  sep=-2] (n1) {\feat{x_2}};
				\node[circ, right=12mm of n1, minimum size=8mm] (n2) {\feat{x_3}};
				\node[circ, above=1.3mm of n2, minimum size=8mm] (nneg) {$\neg$}
				edge[arrin] (n1);
				\node[circ, right=12mm of n2, minimum size=8mm] (n3) {\feat{x_4}};
				\node[circ, above=10mm of n3, minimum size=8mm] (n4) {$\land$}
				edge[arrin] (nneg)
				edge[arrin] (n2)
				edge[arrin] (n3);
				\node[circ, above=18mm of n2, minimum size=8mm] (n5) {$\lor$}
				edge[arrin] (n4)
				edge[arrin] (n1);
				\node[circ, above=27mm of n1, minimum size=8mm] (n6) {$\land$}
				edge[arrin] (n5);
				\node[circw, left=12mm of n5, minimum size=8mm] (n5a) {};
				\node[circ, left=12mm of n5a, minimum size=8mm, inner sep=-2] (n0) {\feat{x_1}}
				edge[arrout] (n6);
			\end{tikzpicture}
		\end{center}
		\caption{A $\ddnnf$ circuit of dimension 4. \label{fig:ddnnf}}
		\Description[Example of a four-variable d-DNNF circuit.]
		{The figure shows a four-variable circuit. The output is an and gate with two inputs: variable x1 and an or gate. The or gate has two inputs. One input is variable x2. The other is an and gate whose three inputs are not x2, x3, and x4.}
	\end{center}
\end{figure}

\paragraph{Deterministic Decomposable Negation Normal Form.} A \emph{deterministic decomposable negation normal form} ($\ddnnf$) circuit of dimension $n$ is an $\nnf$ circuit of dimension $n$ that is both deterministic and decomposable. An example is shown in \autoref{fig:ddnnf}.

\paragraph{Binary Decision Diagram.} A \emph{binary decision diagram} ($\bdd$) of dimension $n$ is a directed acyclic graph with a unique root, and whose nodes and edges are labeled as follows: (i) every leaf is labeled by $\true$ or $\false$ and (ii) every non-leaf node is labeled by a feature in $\{1, ..., n\}$ and has exactly two outgoing edges, one labeled by $0$ and the other by $1$.

Let $B$ be a binary decision diagram and let $\es$ be an instance of dimension $n$. The value $B(\es)$ is defined as the Boolean value of the leaf obtained by starting at the root and following the path such that, at each non-leaf node labeled by $i$, the outgoing edge labeled by $\es[i]$ is chosen. 

\paragraph{Decision Trees.}  A \emph{decision tree} ($\dt$) over instances of dimension $n$ is a binary decision diagram of dimension $n$ such that (i) its underlying graph is a tree and (ii) no feature appears more than once on any root-to-leaf path. 
An example is shown in \autoref{fig:dtree}.

\begin{figure}[ht]
	\begin{center}
		\begin{center}
			\begin{tikzpicture}
				\node[circw, minimum size=9mm, inner  sep=-4] (n1) {};
				\node[circw, right=6mm of n1, minimum size=9mm, inner  sep=-4] (n2) {};
				
				\node[circ, left=-3mm of n1, minimum size=9mm, inner  sep=-2] (n10) {$\true$};
				\node[circ, right=-3mm of n1, minimum size=9mm, inner  sep=-2] (n11) {$\false$};

				\node[circw, right=6mm of n2, minimum size=9mm, inner  sep=-4] (n3) {};
				\node[circ, left=-3mm of n3, minimum size=9mm, inner  sep=-2] (n12) {$\true$};
				\node[circ, right=-3mm of n3, minimum size=9mm, inner  sep=-2] (n13) {$\false$};
				\node[circ, above=8mm of n3, minimum size=9mm] (n4) {\feat{x_4}}
				edge[arrout] (n12)
				edge[arrout] (n13);

				\node[circ, above=8mm of n1, minimum size=9mm] (n7) {\feat{x_4}}
				edge[arrout] (n10)
				edge[arrout] (n11);
				\node[circ, above=24mm of n2, minimum size=9mm] (n5) {\feat{x_3}}
				edge[arrout] (n7)
				edge[arrout] (n4);
				\node[circ, above left=37mm and -2.4mm of n1, minimum size=9mm] (n6) {\feat{x_2}}
				edge[arrout] (n5);
				\node[circw, left=10mm of n5, minimum size=9mm] (n5a) {};
				\node[circ, left=10mm of n5a, minimum size=9mm, inner sep=-2] (n0) {\feat{x_1}}
				edge[arrin] (n6);
				
				\node[circw, below=8.56mm of n0, minimum size=2mm, inner sep=-4] (n14) {};
				\node[circ, left=-1mm of n14, minimum size=9mm, inner  sep=-2] (n15) {$\false$}
				edge[arrin] (n0);
				\node[circ, right=-1mm of n14, minimum size=9mm, inner  sep=-2] (n16) {$\true$}
				edge[arrin] (n0);
				\node[text width=0.2cm] at (-0.45,1.0) {\feat{0}};
				\node[text width=0.2cm] at (0.51,1.0) {\feat{1}};
				\node[text width=0.2cm] at (2.6,1.0) {\feat{0}};
				\node[text width=0.2cm] at (3.57,1.0) {\feat{1}};
				\node[text width=0.2cm] at (-2.76,2.66) {\feat{0}};
				\node[text width=0.2cm] at (-1.80,2.66) {\feat{1}};
				\node[text width=0.2cm] at (0.68,2.66) {\feat{0}};
				\node[text width=0.2cm] at (2.49,2.66) {\feat{1}};
				\node[text width=0.2cm] at (-1.43,3.98) {\feat{0}};
				\node[text width=0.2cm] at (0.75,3.98) {\feat{1}};
			\end{tikzpicture}
		\end{center}
		\caption{A decision tree of dimension 4. For the sake of readability, we label nodes by $x_i$ rather than by indices. \label{fig:dtree}}
		\Description[Example of a decision tree of dimension 4.]
		{The figure shows a decision tree with root x2. If x2 equals 0, the tree moves to x1. From x1, edge 0 leads to a false leaf and edge 1 leads to a true leaf. If x2 equals 1, the tree moves to x3. Both outgoing edges of x3 lead to a node labeled x4. In each x4 node, edge 0 leads to a true leaf and edge 1 leads to a false leaf.}
	\end{center}
\end{figure}

	\subsection{Explainability queries}\label{sec:queries}
	
	We now define the explainability queries studied in this work.
	
	\paragraph{Weak Abductive Explanation.} Given an instance $\es$ and a model $\M$, a partial instance $\es_1$ is a \emph{weak abductive explanation} ($\sr$) for $\es$ on $\M$ if $\es_1 \subseteq \es$ and, for every $\es_2 \in \fullset(\es_1)$, the condition $\M(\es)=\M(\es_2)$ holds \citep{Huang_Cooper_Morgado_Planes_Marques-Silva_2023}. This notion is also known as \emph{sufficient reason} in the literature \citep{arenas2024uniform}. For example, in \autoref{fig:ddnnf}, $(1,1,1,\bot)$ is a weak abductive explanation for the instance $(1,1,1,1)$.
	
	\paragraph{Abductive Explanation.} Given a pair $(\es,\M)$, a partial instance $\es_1$ is an \emph{abductive explanation} ($\minsr$) for $\es$ on $\M$ if it is a weak abductive explanation for $\es$ on $\M$ and there is no weak abductive explanation $\es_2$ such that $\es_2 \subset \es_1$ \citep{DBLP:conf/aaai/IgnatievNM19}. This notion of explanation has been extensively studied and it can be found in the literature under names such as \emph{sufficient reason} \citep{inbook2019,Darwiche_Hirth_2020}, \emph{prime implicant} \citep{shih2018symbolic}, and \emph{minimal sufficient reason} \citep{arenas2024uniform}. Modeling abduction using propositional logic or first-order logic, and the complexity of computing such an explanation has been studied for many decades \citep{Marquis91}. In \autoref{fig:ddnnf}, $(1,1,1,\bot)$ is a weak abductive explanation for $(1,1,1,1)$, but not an abductive explanation. By contrast, one can check that $(1,1,\bot,\bot)$ is indeed an abductive explanation.

	\paragraph{Minimum Abductive Explanation.} Given a pair $(\es,\M)$, a partial instance $\es_1$ is a \emph{minimum abductive explanation} ($\msr$) for $\es$ on $\M$ if it is a weak abductive explanation for $\es$ on $\M$ and there is no weak abductive explanation $\es_2$ such that $\es_2 \lnel \es_1$. Our definition is based on the \emph{minimum sufficient reason} explainability query by \cite{NEURIPS2020_b1adda14} and \cite{arenas2024uniform}. In \autoref{fig:ddnnf}, $(1,1,\bot,\bot)$ is also a minimum abductive explanation for $(1,1,1,1)$.
	
	\paragraph{Weak Contrastive Explanation.} Given a pair $(\es,\M)$, the partial instance $\es_1$ is a \emph{weak contrastive explanation} ($\wcx$) for $\es$ on $\M$ if $\es_1 \subseteq \es$ and there is an instance $\es_2 \in \fullset(\es_1)$ such that the condition $\M(\es)\not =\M(\es_2)$ holds \cite{Marques-Silva2024}. 

	\paragraph{Contrastive Explanation.} Given a pair $(\es,\M)$, the partial instance $\es_1$ is a \emph{contrastive explanation} ($\cx$) for $\es$ on $\M$ if it is a weak contrastive explanation for $\es$ on $\M$ such that there is no weak contrastive explanation $\es_2$ satisfying $\es_1 \subset \es_2$. It can be shown that this definition is equivalent to the one by \cite{Marques-Silva2024}.
	
	\paragraph{Maximum Contrastive Explanation.} Given a pair $(\es,\M)$, the partial instance $\es_1$ is a \emph{maximum contrastive explanation} ($\mcx$) for $\es$ on $\M$ if it is a weak contrastive explanation for $\es$ on $\M$ such that there is no weak contrastive explanation $\es_2$ satisfying $\es_1 \lnel \es_2$. In \autoref{fig:ddnnf}, the partial instance $(\bot,1,1,1)$ is a maximum contrastive explanation for $(1,1,1,1)$, thus also a $\cx$ and a $\wcx$.

	\paragraph{Minimum Change Required.} Given a pair $(\es,\M)$, an instance $\es_1$ is a solution to the \emph{minimum change required} query ($\mcr$) for $\es$ on $\M$ if $\M(\es) \not = \M(\es_1)$ and $\M(\es) = \M(\es_2)$, for every instance $\es_2$ at smaller Hamming distance (meaning the number of flipped features between two instances) from $\es$ than $\es_1$. The instance $\es_1$ represents the minimum distance required to change the value on the model. This notion is based on the query introduced by \cite{NEURIPS2020_b1adda14}. Considering the instance $(1,1,1,1)$ in \autoref{fig:ddnnf}, one possible explanation for the query is $(0,1,1,1)$.
	
	\paragraph{Maximum Change Allowed.} Given a pair $(\es,\M)$, an instance $\es_1$ is a solution to the \emph{maximum change allowed} query ($\mca$) for $\es$ on $\M$ if $\M(\es) = \M(\es_1)$ and $\M(\es) \not = \M(\es_2)$, for every instance $\es_2$ at greater Hamming distance from $\es$ than $\es_1$. This notion is based on the query introduced by \cite{alfano2024evenif}. As the authors argue, only studying counterfactual queries like minimum change required may not capture the whole picture for explaining certain situations. That is why we include their semifactual version of the problem. In \autoref{fig:ddnnf}, the maximum change allowed for the negative input $(0,1,1,1)$ is the instance $(0,0,0,0)$.
	
	The original versions of $\mcr$ and $\mca$ \citep{NEURIPS2020_b1adda14, alfano2024evenif} are defined similarly with respect to each other, but have very different interpretations. Given a distance $k$ for $\mcr$ it is not trivially easier to decide any distance $k'$ for $\mca$; and vice versa. We only know that the inequality $k \leq k' + 1$ holds. Thus, presenting both queries has additional value and lets us present another useful case of maximization.
	
	\paragraph{Necessary Feature.} Given a pair $(\es,\M)$, a partial instance $\es_1$ with exactly one defined feature is a \emph{necessary feature} ($\nfeat$) for $\es$ on $\M$ if, for every weak abductive explanation $\es_2$ for $\es$ on $\M$, $\es_1 \subseteq \es_2$ holds. An equivalent formulation appears in \cite{Huang_Cooper_Morgado_Planes_Marques-Silva_2023}, where the authors define the property \emph{feature necessity} on their own framework using abductive explanations instead of weak abductive explanations. It is easy to see that both definitions are equivalent. Considering $(1,1,1,1)$ in \autoref{fig:ddnnf}, the instance has two abductive explanations: $(1,1,\bot,\bot)$ and $(1,\bot,1,1)$. Therefore, the feature $x_1=1$, represented by the partial instance $(1,\bot,\bot,\bot)$, is a necessary feature.
	
	\paragraph{Relevant Feature.} Given a pair $(\es,\M)$, a partial instance $\es_1$ with exactly one defined feature is a \emph{relevant feature} ($\rfeat$) for $\es$ on $\M$ if there exists an abductive explanation $\es_2$ for $\es$ on $\M$ such that $\es_1 \subseteq \es_2$. This notion appears in the literature as the \emph{AXp membership problem} \cite{DBLP:conf/kr/HuangII021} and as \emph{feature relevancy} \cite{Huang_Cooper_Morgado_Planes_Marques-Silva_2023}. Considering $(1,1,1,1)$ and its abductive explanations $(1,1,\bot,\bot)$ and $(1,\bot,1,1)$ in \autoref{fig:ddnnf}, $x_2=1$ is one of the relevant features, and can be represented by the partial instance $(\bot,1,\bot,\bot)$.

%% file: new-sections/foil.tex
In this section, we introduce an initial interpretability logic designed for expressing queries. We demonstrate that it faces limitations in expressive power and exhibits high computational complexity for its evaluation problem.


Our work is inspired by the {\em first-order interpretability logic} ($\foil$)
\citep{DBLP:conf/nips/ArenasBBPS21}, which is a simple explainability language rooted in first-order logic. $\foil$ is simply first-order
logic over two relations on the set of partial instances of a given
dimension: a unary relation $\pos$ whose interpretation is the set of instances that are positively classified by the model, and a binary relation $\subseteq$ that represents
the subsumption relation among partial instances.

Given a vocabulary $\sigma$ consisting of relations $R_1$, $\ldots$, $R_\ell$, recall that a structure $\astruct$ over $\sigma$ consists of a domain over which quantifiers range, and an interpretation for each relation $R_i$. Moreover, given a first-order formula $\varphi$ defined over the vocabulary $\sigma$, we write $\varphi(x_1, \ldots, x_k)$ to indicate that the free variables of $\varphi$ are among $\{x_1,\ldots,x_k\}$. Finally, given a structure $\astruct$ over the vocabulary $\sigma$ and elements $a_1$, $\ldots$, $a_k$ in the domain of $\astruct$, we use $\astruct \models \varphi(a_1, \ldots, a_k)$ to indicate that the formula $\varphi$ is satisfied by $\astruct$ when each variable $x_i$ is replaced by element $a_i$ ($1 \leq i \leq k$).


Consider a model $\M$ with $\dm(\M) = n$. The structure $\astruct_\M$ representing
$\M$ over the vocabulary formed by $\pos$ and $\subseteq$ is defined as follows.
The domain of $\astruct_\M$ is the set $\{0,1, \bot\}^n$ of all
partial instances of dimension $n$.
A partial instance $\es \in \{0,1, \bot\}^n$ belongs to the interpretation of
$\pos$ in $\astruct_\M$ if and only if $\es \in \{0,1\}^n$ and $\M(\es) = 1$.
Moreover, a pair $(\es_1,\es_2)$ is in the interpretation of
relation $\subseteq$ in $\astruct_\M$ if and only if $\es_1$ is
subsumed by $\es_2$.
%
Finally, given a formula $\varphi(x_1, \ldots, x_k)$ in $\foil$ and
partial instances $\es_1$, $\ldots$, $\es_k$ of dimension $n$, the model
$\M$ is said to {\em satisfy} $\varphi(\es_1, \ldots, \es_k)$, denoted
by $\M \models \varphi(\es_1, \ldots, \es_k)$, if
$\astruct_\M \models \varphi(\es_1, \ldots, \es_k)$.

Notice that for a succinctly-represented model $\M$, the structure $\astruct_\M$ can be exponentially larger than the representation of $\M$. Hence, $\astruct_\M$ is a theoretical construction needed to formally define the semantics of $\foil$, but it should not be constructed explicitly when checking in practice if a formula $\varphi$ is satisfied by $\M$.

\subsection{Expressing interpretability queries in \foil}\label{sec:expressing-in-foil}

It will be instructive for the rest of our presentation to see a few examples of how $\foil$ can be used to express some natural explainability queries on models.
In these examples, we make use of the following $\foil$ formula:
$$\full(x) \ := \ \forall y \, (x \subseteq y \, \rightarrow \, y \subseteq x).$$%
Notice that if $\M$ is a model and $\es$ is a partial instance, then
$\M \models \full(\es)$ if and only if $\es$ is also an instance (i.e., it has no undefined features).
We also use the formula
$$\allpos(x) \ := \ \forall y \, \big((x \subseteq y \wedge \full(y)) \, \rightarrow \, \pos(y)\big),$$
such that $\M \models \allpos(\es)$ if and only if every instance in $\fullset(\es)$ is classified as positive by $\M$. Analogously, we define a formula $\allneg(x)$. The definitions of both predicates are inspired by important knowledge compilation queries such as a conditioning transformation of the input and a consistency or validity check on the conditioned model \citep{DarwicheCompilation}. $\allpos(x)$ and $\allneg(x)$ will be important components of a new logic for explainability defined in \Cref{sec:dt-foil} since they enable the expression of a wide range of explainability queries. For example, we can now define weak abductive explanations (refer to \Cref{sec:queries}) in $\foil$ as follows:
$$
\sr(x,y) := \full(x) \wedge y \subseteq x \ \wedge \\  (\pos(x) \to \allpos(y)) \wedge (\neg \pos(x) \to \allneg(y)).
$$

In fact, it is easy to see that $\M \models \sr(\es,\es')$ if and only if $\es'$ is a weak abductive explanation for $\es$ over $\M$.
Notice that $\es$ is always a weak abductive explanation for itself. However, we are typically interested in explanations that satisfy some optimality criterion. A common such criterion is that of being {\em minimal} \citep{shih2018symbolic,DBLP:journals/corr/abs-2010-11034,NEURIPS2020_b1adda14}.
Let us write $x \subset y$ for $x \subseteq y \wedge \neg (y \subseteq x)$. Then, for
\[
\minsr(x,y) :=  \sr(x,y) \land
\forall z \,(z \subset y \, \rightarrow \neg \sr(x,z)),
\]%
we have that $\M \models \minsr(\es,\es')$ if and only if $\es'$ is an abductive explanation for $\es$ over $\M$. We could similarly express local explainability queries like contrastive explanations through the same approach using $\foil$.

As we show below, $\foil$ fails to meet either of the two criteria we are looking for in a practical language that provides explanations.  The first issue is its limited expressiveness: there are important notions of explanations that cannot be expressed in this language, even when restricted to decision trees, which are traditionally deemed to be easily interpretable.
The second issue is its high computational complexity: there are queries in $\foil$ that cannot be evaluated with a polynomial number of calls to an NP oracle. Both facts firmly establish the inadequacy of $\foil$ as a practical language.

\subsection{\foil~presents limited expressiveness}
\label{sec-lim-foil}
In some scenarios we want to express a stronger condition for abductive and contrastive explanations: not only that they are minimal, but also that they are {\em minimum} (see \Cref{sec:queries}).
In the case of abductive explanations, they can be minimal without being minimum. The following theorem shows that $\foil$ cannot express the query that asks whether a partial instance $\es'$ is a minimum abductive explanation for a given instance $\es$ over decision trees.


	\begin{theorem}\label{thm:ne-foil}
		There is no formula $\msr(x,y)$ in \textnormal{\foil} such that, for every decision tree $\T$, instance $\es$ and partial instance $\es'$, we have that
		$\T \models \msr(\es,\es') \Leftrightarrow \es'$ is a minimum abductive explanation for $\es$ over $\T$.
	\end{theorem}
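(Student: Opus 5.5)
The plan is an Ehrenfeucht--Fra\"iss\'e / Feferman--Vaught argument. First I fix a family of decision trees in which the minimum abductive explanation is decided by comparing two block sizes. Then I show that first-order logic over $\pos$ and $\subseteq$ cannot compare those sizes once they are large relative to the quantifier rank.

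\textbf{The trees.} For $a,b\geq 1$, let $n=a+b$ and let $\T_{a,b}$ be the decision tree computing
\[
\M(\es)=1 \iff (\es[1]=\dots=\es[a]=1)\ \vee\ (\es[a+1]=\dots=\es[n]=1).
\]
It is realised as a chain testing features $1,\dots,a$. Every $0$-exit of that chain goes to a copy of the chain testing features $a+1,\dots,n$, and the $1$-exit is $\true$. Its size is $O(ab)$ and no feature repeats on a path.

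Take $\es=1^{n}$. I will check that its only subset-minimal weak abductive explanations are $1^a\bot^b$ and $\bot^a1^b$: any weak abductive explanation must fix one of the two blocks entirely to $1$. Hence $\es'=1^a\bot^b$ is a minimum abductive explanation iff $a\leq b$.

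Suppose a formula $\msr(x,y)$ of quantifier rank $k$ existed. Consider $\T_{m,m+1}$ with $\es'=1^m\bot^{m+1}$, which is minimum, and $\T_{m+1,m}$ with $\es'=1^{m+1}\bot^m$, which is not. Permuting coordinates (swapping the blocks) is an isomorphism of the structures $\astruct_\M$. So the second case is isomorphic to $\T_{m,m+1}$ with $\es'=\bot^m1^{m+1}$. It therefore suffices to show that, for $m$ large enough, the pairs $(1^n,1^m\bot^{m+1})$ and $(1^n,\bot^m1^{m+1})$ satisfy the same rank-$k$ formulas in $\astruct_{\T_{m,m+1}}$.

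\textbf{Reduction to powers of a small poset.} Let $P$ be the three-element poset $\{\bot,0,1\}$ with $\bot<0$ and $\bot<1$. Let $B_d$ be the direct power $P^d$ expanded with the constants $\bot^d$ and $1^d$. Then $\astruct_{\T_{m,m+1}}$ is first-order interpretable in the direct product $B_m\times B_{m+1}$ with no change of domain:
\begin{itemize}
\item $\subseteq$ is the product order;
\item $\full(x)$ is definable from the order (maximality);
\item $\pos(x)$ is $\full(x)\wedge(\pi_1(x)\geq 1^m \vee \pi_2(x)\geq 1^{m+1})$.
\end{itemize}
I make the last item precise by working in the two-sorted product, or by noting that "the left projection is $\geq 1^m$" is $x\supseteq 1^m\bot^{m+1}$, and $1^m\bot^{m+1}$ is definable from the product constants. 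The two parameters become $(1^m,1^{m+1})$ together with $(1^m,\bot^{m+1})$ in one case and $(\bot^m,1^{m+1})$ in the other. By the Feferman--Vaught theorem for direct products, the rank-$k$ type of a tuple in $B_m\times B_{m+1}$ is determined by the rank-$k'$ types (for some $k'$ depending on $k$) of its projections in the factors. Hence it suffices to prove
\[
(B_m,1^m,\bot^m)\ \equiv_{k'}\ (B_{m+1},1^{m+1},\bot^{m+1})\quad\text{for all } m\geq f(k').
\]
Given this, the projection types of the two parameter tuples in the two factors can be interchanged, and the claim follows.

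\textbf{Main obstacle.} The hard step is this last equivalence: FO-equivalence of large direct powers of a fixed finite structure with coordinatewise-constant parameters. I would first try the classical result that the theory of $A^d$ stabilises for $d$ beyond a threshold depending on $k$, which again follows from Feferman--Vaught since the relevant counts can be truncated at a threshold depending on $k$. The constants $1^d$ and $\bot^d$ are diagonal, so they are absorbed by expanding $P$ with constants $1$ and $\bot$. If I prefer a self-contained argument, I would play the EF game directly. Each element of $P^d$ is encoded by the partition of $[d]$ into its $\bot$-, $0$- and $1$-coordinates. Duplicator maintains a correspondence between the atoms of the Boolean algebra generated by the chosen elements' partitions, such that corresponding atoms have equal sizes or both have size $\geq 2^{k}$. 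This invariant suffices because $\subseteq$ between chosen elements depends only on which atoms are empty. Verifying that the invariant can be maintained through $k$ rounds is the routine but fiddly part.
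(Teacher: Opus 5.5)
Your proposal is correct and follows essentially the paper's proof: the same two-block trees, and a composition step (your Feferman--Vaught appeal plays the role of Lemma~\ref{lemma:comp}) that reduces everything to the equivalence of large powers of the three-element poset pointed at $1^d$, which is exactly Lemma~\ref{lemma:pointed}. The paper differs only in keeping one block fixed and comparing $\M_{2^k,2^k}$ with $\M_{2^k,2^k+1}$ instead of swapping blocks, and in proving both lemmas by explicit Ehrenfeucht--Fra\"iss\'e strategies instead of citing them. Two small repairs are needed. First, $1^m\bot^{m+1}$ is not definable from the diagonal product constants, since coordinate permutations are automorphisms of the plain product, so use the generalized-product (two-sorted) version you mention. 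Second, in your direct game each new element splits an atom three ways, so the size threshold must drop by a factor of $3$ per round, giving $3^k$ (as in the paper) rather than $2^k$.
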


	\begin{proof}
		The proof extends techniques from \citep{fmt-book} such as the games for \FO\ distinguishability. We now present notions that will be used in this and the following arguments throughout this paper.

		The {\em quantifier rank} of an $\FO$ formula $\varphi$, denoted by $\qr(\varphi)$,
		is the maximum depth of
		quantifier nesting in it. For a structure $\astruct$, we write $\dom(\astruct)$ to denote its domain. An {\em \EF} (EF) game is played in two structures, $\astruct_1$ and $\astruct_2$,
		of the same schema, by two players, the {\em spoiler} and
		the {\em duplicator}. In round $i$ the spoiler selects a structure,
		say $\astruct_1$, and an element $c_i$ in $\dom(\astruct_1)$; the duplicator responds
		by selecting an element $e_i$ in $\dom(\astruct_2)$. The duplicator {\em wins} in
		$k$ rounds, for $k \geq 0$, if $\{(c_i,e_i) \mid i \leq k\}$ defines a
		partial isomorphism between $\astruct_1$ and $\astruct_2$. If the duplicator wins no
		matter how the spoiler plays, we write $\astruct_1 \equiv_k \astruct_2$. A classical
		result states that $\astruct_1 \equiv_k \astruct_2$ iff $\astruct_1$ and $\astruct_2$ agree on all
		$\FO$ sentences of quantifier rank $\leq k$ (cf. \citep{fmt-book}).

		Also, if $\bar a$ is an $m$-tuple in $\dom(\astruct_1)$ and $\bar b$ is an $m$-tuple in
		$\dom(\astruct_2)$, where $m \geq 0$,
		we write $(\astruct_1,\bar a) \equiv_k (\astruct_2,\bar b)$ whenever the duplicator
		wins in $k$ rounds no matter how the spoiler plays, but
		starting from position $(\bar a,\bar b)$. In the same way, $(\astruct_1,\bar a) \equiv_k (\astruct_2,\bar b)$ iff for every $\FO$ formula $\varphi(\bar x)$
		of quantifier rank $\leq k$, it holds that $\astruct_1 \models \varphi(\bar a) \Leftrightarrow \astruct_2
		\models \varphi(\bar b)$.

		It is well-known (cf.~\citep{fmt-book}) that
		there are only finitely many $\FO$ formulae of quantifier rank
		$k$, up to logical equivalence. The {\em rank-$k$ type}
		of an $m$-tuple $\bar a$ in a structure $\astruct$ is the set of all formulae
		$\varphi(\bar x)$ of quantifier rank $\leq k$ such that $\astruct \models
		\varphi(\bar a)$. Given the above, there are only finitely many rank-$k$
		types, and each one of them is definable by an $\FO$ formula
		$\tau_k^{(\astruct,\bar a)}(\bar x)$ of
		quantifier rank $k$.

		We now introduce some terminology necessary for the proof.

		Let $\M$, $\M'$ be models of dimension $n$ and $p$, respectively,
		and consider the structures $\astruct_\M =  \langle \{0,1,\bot\}^n,\subseteq^{\astruct_\M},\pos^{\astruct_\M} \rangle$ and
		$\astruct_{\M'} =  \langle \{0,1,\bot\}^p,\subseteq^{\astruct_{\M'}},\pos^{\astruct_{\M'}} \rangle$. We write $\astruct_\M \oplus \astruct_{\M'}$ for the structure over the same vocabulary that satisfies the following:
		\begin{itemize}
			\item The domain of $\astruct_\M \oplus \astruct_{\M'}$ is $\{0,1,\bot\}^{n+p}$.
			\item The interpretation of $\subseteq$ on $\astruct_\M \oplus \astruct_{\M'}$ is the usual subsumption relation on $\{0,1,\bot\}^{n+p}$.
			\item The interpretation of $\pos$ on $\astruct_\M \oplus \astruct_{\M'}$ is the set of instances $\es \in \{0,1\}^{n+p}$ such that
			$(\es[1],\cdots,\es[n]) \in \pos^{\astruct_\M}$ or $(\es[n+1],\cdots,\es[n+p]) \in \pos^{\astruct_{\M'}}$.
		\end{itemize}

		We will also consider structures of the form $\astruct_n = \langle \{0,1,\bot\}^n,\subseteq^\astruct \rangle$, where $\subseteq$ is interpreted as the subsumption relation
		over $\{0,1,\bot\}^n$. For any such structure, we
		write $\astruct_n^+$ for the structure over the vocabulary $\{\subseteq,\pos\}$ that extends $\astruct$ by adding only the tuple $\{1\}^n$
		to the interpretation of $\pos$.

		We now state two crucial lemmas, whose proofs can be found in the appendix of this work (refer to Sections \ref{proof:pointed} and \ref{proof:comp}).



		\begin{lemma}
			\label{lemma:pointed}
			If $n,p \geq 3^k$, then $(\astruct_n,\{1\}^n) \ \equiv_k \  (\astruct_p,\{1\}^p)$. In particular, $\astruct_n^+ \equiv_k \astruct_p^+$.
		\end{lemma}



		\begin{lemma}
			\label{lemma:comp}
			Consider models $\M$, $\M_1$, and $\M_2$ of dimension $n$, $p$, and $q$, respectively,
			and assume that $(\astruct_{\M_1},\{1\}^p) \equiv_k (\astruct_{\M_2},\{1\}^q)$. Then it is the case that
			\begin{align*}
				\big(\astruct_\M \oplus \astruct_{\M_1},\{1\}^{n+p},\{\bot\}^n \cdot \{1\}^p\big) \ \equiv_k \
				\big(\astruct_\M \oplus \astruct_{\M_2},\{1\}^{n+q},\{\bot\}^n \cdot \{1\}^q\big).
			\end{align*}
		\end{lemma}

		We now proceed with the proof of Theorem \ref{thm:ne-foil}. Assume, for the sake of contradiction, that there is in fact a formula
		$\msr(x,y)$ in $\foil$ such that, for every decision tree $\M$, instance $\es$, and partial instance $\es'$, we have that
		$\astruct_\M \models \msr(\es,\es')$ iff $\es'$ is a minimum abductive explanation for $\es$ over $\M$. Let $k \geq 0$ be the quantifier rank of this formula.
		We show that there exist decision trees $\M_1$ and $\M_2$, instances $\es_1$ and $\es_2$ over $\M_1$ and $\M_2$, respectively,
		and partial instances $\es'_1$ and $\es'_2$ over $\M_1$ and $\M_2$, respectively, for which the following holds:
		\begin{itemize}
			\item $(\M_1,\es_1,\es'_1) \equiv_k (\M_2,\es_2,\es'_2)$, and hence
			\begin{align*}
				\M_1 \models \msr(\es_1,\es'_1) \ \Leftrightarrow \
				\M_2 \models \msr(\es_2,\es'_2).
			\end{align*}
			\item It is the case that $\es'_1$ is a minimum abductive explanation for $\es_1$ under $\M_1$, but
			$\es'_2$ is not a minimum abductive explanation for $\es_2$ under $\M_2$.
		\end{itemize}
		This is our desired contradiction.

		Let $\M_{n,p}$ be a decision tree of dimension $n+p$ such that, for every instance $\es \in \{0,1\}^{n+p}$, we have that $\M_{n,p}(\es) = 1$ iff
		$\es$ is of the form $\{1\}^n \cdot \{0,1\}^p$, i.e., the first $n$ features of $\es$ are set to 1, or $\es$ is
		of the form $\{0,1\}^n \cdot \{1\}^p$, i.e., the last $p$ features of $\es$ are set to 1.
		Take the instance $\es = \{1\}^{n+p}$. It is easy to see that $\es$ only has two abductive explanations in $\M_{n,p}$; namely, $\es_1 = \{1\}^n \cdot \{\bot\}^p$
		and $\es_2 = \{\bot\}^n \cdot \{1\}^p$.

		We define the following:
		\begin{itemize}
			\item $\M_1 := \M_{2^{k},2^{k}}$ and $\M_2 := \M_{2^k,2^k+1}$.
			\item $\es_1 := \{1\}^{2^k + 2^k}$ and $\es_2 := \{1\}^{2^k+2^k+1}$.
			\item $\es'_1 :=  \{\bot\}^{2^k} \cdot \{1\}^{2^k}$ and $\es'_2 :=  \{\bot\}^{2^k} \cdot \{1\}^{2^k+1}$.
		\end{itemize}
		From our previous observation, $\es'_1$ is an abductive explanation for $\es_1$ over $\M_1$
		and $\es'_2$ is an abductive explanation for $\es_2$ over $\M_2$.

		We show first that $(\astruct_{\M_1},\es_1,\es'_1) \equiv_k (\astruct_{\M_2},\es_2,\es'_2)$.  It can be observed that $\astruct_{\M_1}$ is of the form $\astruct_{N} \oplus \astruct_{N_1}$, where $N$ is a model of dimension $2^k$ that only accepts the tuple $\{1\}^{2^k}$ and the same holds for $N_1$. Analogously, $\astruct_{\M_2}$ is of the form $\astruct_{N} \oplus \astruct_{N_2}$, where $N_2$ is a model of dimension $2^k + 1$ that only accepts the tuple $\{1\}^{2^k + 1}$.
		From Lemma \ref{lemma:pointed}, we have that
		$$(\astruct_{N_1},\{1\}^{2^k}) \ \equiv_k \
		(\astruct_{N_2},\{1\}^{2^k+1}).$$
		Notice that indeed any winning strategy for the Duplicator on this game must map
		the tuples $\{1\}^{2^k}$ in $\astruct_{N_1}$ and $\{1\}^{2^k+1}$ into each other.

		Now, from Lemma \ref{lemma:comp}, we obtain that
		\begin{align*}
			\big(\astruct_N \oplus \astruct_{N_1},\{1\}^{2^k+2^k},\{\bot\}^{2^k} \cdot \{1\}^{2^k}\big) \ \equiv_k \
			\big(\astruct_N \oplus \astruct_{N_2},\{1\}^{2^k+2^k+1},\{\bot\}^{2^k} \cdot \{1\}^{2^k+1}\big).
		\end{align*}
		We can then conclude that $(\astruct_{\M_1},\es_1,\es'_1) \equiv_k (\astruct_{\M_2},\es_2,\es'_2)$, as desired.

		Notice now that $\es'_1$ is a minimum abductive explanation for $\es_1$ over $\M_1$. In fact, by our previous observations, the only other abductive explanation
		for $\es_1$ over $\M_1$ is $\es''_1 = \{1\}^{2^k} \cdot \{\bot\}^{2^k}$, which has the same number of undefined features as $\es'_1$.
		In turn, $\es'_2$ is not a minimum abductive explanation for $\es_2$ over $\M_2$. This is because $\es''_2 = \{1\}^{2^k} \cdot \{\bot\}^{2^k+1}$ is also an abductive explanation for $\es_2$ over $\M_2$, and $\es''_2$ has more undefined features than $\es'_2$.
	\end{proof}

\subsection{Evaluating \foil~is intractable}
\label{sec-hc-foil}
For each query $\varphi(x_1, \ldots, x_k)$ in $\foil$ and $\C$ a class of models, we define its associated problem \logicEvaluation$(\varphi, \C)$ as follows (we assume models and instances have the same dimension):
%
	%
	\begin{center}
		\fbox{\begin{tabular}{rl}
				{\sc Problem:} & \logicEvaluation$(\varphi, \C)$\\
				{\sc Input:} & A model $\M \in \C$ and partial instances $\es_1, \ldots, \es_k$\\
				{\sc Output:} & \textsc{Yes}, if $\M \models \varphi(\es_1, \ldots, \es_k)$, and \textsc{No} otherwise
		\end{tabular}}
	\end{center}

	It is known that there exists a formula $\varphi(x)$ in $\foil$ for which its evaluation problem over the class of decision trees is $\np$-hard~\citep{DBLP:conf/nips/ArenasBBPS21}. We want to determine whether the language $\foil$ is appropriate for implementation using SAT encodings.
	Thus, it is natural to ask whether the evaluation problem for formulas in this logic can always be decided in polynomial time by using a $\np$ oracle.
	However, we prove that this is not always the case. Although the evaluation of $\foil$ formulas is always in the polynomial hierarchy (PH), there exist formulas in $\foil$ for which their corresponding evaluation problems are hard for every level of PH. Based on widely held complexity assumptions, we can conclude that $\foil$ contains formulas whose evaluations cannot be decided in polynomial time by using a $\np$ oracle even on decision trees ($\dt$).


	\begin{theorem}
		\label{thm:eval-folistar}
		The following statements hold:
		\begin{enumerate}
		\item
		Let $\varphi$ be a \textnormal{\foil} formula. Then, there exists $k \geq 0$ such that
			\textnormal{\logicEvaluation$(\varphi, \nnf)$} is in the $\Sigma_k^{\rm{P}}$ complexity class.
			\item
			For every $k \geq 0$, there is an \textnormal{\foil}-formula $\varphi_k$ such that \textnormal{\logicEvaluation$(\varphi_k, \dt)$}
			is $\Sigma_k^{\rm{P}}$-hard.
			\end{enumerate}
	\end{theorem}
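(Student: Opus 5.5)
For part (1) we argue by induction on formula structure. The upper bound is a direct translation of FOIL quantifiers into polynomial-hierarchy quantifiers. Every element of the domain of $\astruct_\M$ is a partial instance in $\{0,1,\bot\}^n$, which has size polynomial in the input. So a quantifier $\exists x$ or $\forall x$ becomes a polynomially bounded existential or universal guess. The atomic predicates are easy to check: $x \subseteq y$ is decidable in linear time, and $\pos(x)$ only requires checking that $x\in\{0,1\}^n$ and evaluating the NNF circuit on $x$, which is polynomial. We first put $\varphi$ in prenex normal form, with $m$ quantifier alternations. The evaluation problem is then in $\Sigma_{m+1}^{\rm P}$, since a leading universal block can be preceded by a dummy existential block. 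Because $\varphi$ is fixed, $k=m+1$ is a constant.

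For part (2) we reduce from $\Sigma_k$-QBF. Instances have the form $\exists \bar x_1 \forall \bar x_2 \cdots Q \bar x_k\, \psi$, where we take $\psi$ in CNF when $Q=\forall$ and in DNF when $Q=\exists$. The reduction must produce decision trees that are polynomial in the formula size. A CNF or DNF cannot in general be encoded by a polynomial-size decision tree, so we represent the matrix through its clauses or terms one at a time. We use the known idea behind NP-hardness of FOIL over decision trees from \citep{DBLP:conf/nips/ArenasBBPS21}: a partial instance $\es$ is consistent with a term exactly when some completion of $\es$ reaches a positive leaf of a tree encoding that term.

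Concretely, we build a tree $\T$ over roughly $N+\log m$ features, where $N$ is the number of propositional variables and $m$ is the number of clauses. It consists of a small selector subtree on the extra features that picks a clause or term index $j$. Below each index $j$ sits a path testing only the variables of the $j$-th term, and that path ends in a $\true$ leaf exactly when the term is satisfied. Each path has length at most $N$ and there are $m$ of them, so $\T$ has polynomial size. The DNF $\psi$ is then true under an assignment $\es$ iff some completion of $\es$ that leaves the selector bits free is positive. In FOIL this reads $\exists y\,(x\subseteq y \wedge \pos(y))$. The CNF case is dual and uses $\allpos$-style universal quantification over completions.

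The quantifier blocks are simulated with FOIL formulas. An assignment to block $\bar x_i$ is a partial instance, and we combine blocks by requiring a common extension. The hard part is making the FOIL formula $\varphi_k$ independent of the input, since it cannot mention the block boundaries. We handle this by passing as parameters partial instances $\mathbf{b}_1,\dots,\mathbf{b}_k$, where $\mathbf{b}_i$ is a marker that is $1$ exactly on the positions of block $i$ and $\bot$ elsewhere. Then "$y$ assigns precisely the block $i$ variables" is FOIL-definable relative to $\mathbf{b}_i$. It says that $y$ and $\mathbf{b}_i$ have the same set of defined positions, which we express as: for every single-defined-feature partial instance $z$ below one of them, some single-defined-feature partial instance with the same position lies below the other. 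Single-position partial instances are definable as atoms just above $\{\bot\}^n$, and "same position" is definable through the two values $0$ and $1$ sharing a common upper bound pattern.

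I expect this same-position and same-support definability to be the main obstacle. The fallback is to encode block membership directly into the tree via gadget features instead. The final formula is
\[
\exists y_1 \forall y_2 \cdots \big(\bigwedge_i \mathrm{Blk}(y_i,\mathbf{b}_i)\big) \to/\wedge \ \exists z\,(y_1,\dots,y_k \subseteq z \wedge \mathrm{Sel}(z) \wedge \pos(z)),
\]
and its correctness follows from the construction of the tree.
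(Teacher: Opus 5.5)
Your part (1) is fine and matches the paper. In part (2) the architecture is also the paper's: block markers passed as parameters, a same-support guard playing the role of $\mathsf{MaxRel}$, and selector features so that the matrix is checked one clause or term at a time. The gap is that you fix the wrong normal form, and with it the hardness claim fails.

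With $\psi$ a DNF under a final $\exists$, or a CNF under a final $\forall$, the innermost block is decidable in polynomial time once the outer blocks are fixed. A DNF is satisfiable iff some term is consistent with the fixed values. A CNF is valid iff every clause is already satisfied or contains a complementary pair. So your source problem lies in $\Sigma_{k-1}^{\rm P}$: for $k=1$ it is DNF satisfiability, which is in P, and for $k=2$ it is in NP. Your quantifier over selector completions has the same type as the last block, so it merges with it and adds no alternation. The reduction itself is correct, but it only yields $\Sigma_{k-1}^{\rm P}$-hardness.

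This is exactly the phenomenon behind the paper's lemma that $\Sigma_{k+1}$-QBF over decision trees is only $\Sigma_k^{\rm P}$-complete. Decision trees admit polynomial-time satisfiability and validity checks, so the innermost block over the tree is free. You therefore need a selector quantifier of the opposite type. The fix is the standard pairing: CNF under a final $\exists$, DNF under a final $\forall$. Then $z$ ranges universally over clause selectors (resp. existentially over term selectors), and $\varphi_k$ has one more alternation than the QBF. In the CNF case, unused binary selector codes must lead to $\true$ leaves. This is the paper's construction, where the clause-selector features form the final universal block. Alternatively, keep your pairing and use $k+1$ blocks.

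Two smaller points. First, the displayed formula cannot collect all guards into one antecedent or one conjunct. With the antecedent version, $\exists y_1 \forall y_2\,(\mathrm{Blk}(y_1,\mathbf{b}_1)\wedge \mathrm{Blk}(y_2,\mathbf{b}_2) \to \cdots)$ is satisfied vacuously by choosing $y_1$ outside its block. With $\wedge$, the universal quantifier fails on any $y_2$ outside its block. Each quantifier must be relativized separately, as $\exists y_i\,(\mathrm{Blk}(y_i,\mathbf{b}_i)\wedge\cdots)$ and $\forall y_i\,(\mathrm{Blk}(y_i,\mathbf{b}_i)\to\cdots)$, which is what the paper's guarded quantifiers do.

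Second, the step you flagged as the main obstacle is routine, but state it precisely. Two distinct single-feature partial instances share a position iff they have \emph{no} common upper bound; at distinct positions they always have one. With that, your singleton-based definition of same support works.
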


\begin{proof} 
	For the first item, consider a fixed $\foil$ formula $\varphi(x_1,\dots,x_m)$. We assume without loss of generality that $\varphi$ is in prenex normal form, i.e.,
	it is of the form $$\exists \bar y_1 \forall \bar y_2 \cdots Q_k \bar y_k \, \psi(x_1,\dots,x_m,\bar y_1,\dots,\bar y_k), \quad \quad (k \geq 0)$$
	where $Q_k = \exists$ if $k$ is odd and $Q_k = \forall$ otherwise, and $\psi$ is a quantifier-free formula.
	A FOIL formula of this form is called a $\Sigma_k$-$\foil$ {\em formula}.
	Consider that $\M$ is a negation normal form of dimension $n$,
	and assume that we want to check whether $\M \models \varphi(\es_1,\dots,\es_m)$, for $\es_1,\dots,\es_m$ given partial instances of dimension $n$. We know that the predicates $\pos$ and $\subseteq$ can be decided in polynomial time on $\M$. Additionally, the formula $\varphi$ is fixed, and thus the length of each tuple $\bar y_i$, for $i \leq k$, is constant. Therefore, we can decide this problem in polynomial time by
	using a $\Sigma_k$-alternating Turing machine (as the fixed size quantifier-free formula $\psi$ can be evaluated in polynomial time over $\M$).

	We now deal with the second item.
	We start by studying the complexity of the well-known {\em quantified Boolean formula} (QBF) problem for the case when the underlying formula (or,
	more precisely, the underlying Boolean function) is defined by a decision tree. More precisely, suppose that $\M$ is a
	decision tree
	over
	instances of dimension $n$. A $\Sigma_k$-QBF over $\M$, for $k > 1$, is an expression
	$$\exists P_1 \forall P_2 \cdots Q_k P_k \, \M,$$
	where $Q_k = \exists$ if $k$ is odd and $Q_k = \forall$ otherwise, and $P_1,\dots,P_k$ is a partition of $\{1,\dots,n\}$ into $k$ equivalence classes.
	As an example, if $\M$ is of dimension 3 then $\exists \{2,1\} \forall \{3\} \, \M$ is a $\Sigma_2$-QBF over $\M$. The semantics of these expressions is standard.
	For instance, $\exists \{1,2\} \forall \{3\} \, \M$ holds if there exists a partial instance $(b_1,b_2,\bot) \in \{0,1\} \times \{0,1\} \times \{\bot\}$ such that both
	$\M(b_1,b_2,0) = 1$ and $\M(b_1,b_2,1) = 1$.

	For a fixed $k > 1$, we introduce then the problem $\Sigma_k$-\textsc{QBF}$(\dt)$. It takes as input a $\Sigma_k$-QBF $\alpha$ over $\M$, for $\M$ a decision tree, and asks whether $\alpha$ holds.
	We establish the following result, which we believe of independent interest, as (to the best of our knowledge) the complexity of the
	QBF problem over decision trees has not been studied in the literature (refer to Section \ref{proof:qbf} for the proof).

	\begin{lemma} \label{lemma:qbf}
		For every odd $k \geq 1$, the problem \textnormal{$\Sigma_{k+1}$-\textsc{QBF}$(\dt)$} is $\Sigma_{k}^{\text{P}}$-complete.
	\end{lemma}

	For the second item, we can now finish the proof of the theorem with the help of Lemma \ref{lemma:qbf} and a reduction from $\Sigma_{k+1}$-\textsc{QBF}$(\dt)$. We can assume that $k$ is odd because for proving that there are \textnormal{\foil}-formulas $\varphi_{k'}$ such that \textnormal{\logicEvaluation$(\varphi_{k'}, \dt)$} is $\Sigma_{k'}^{\rm{P}}$-hard it is enough to show that there are \textnormal{\foil}-formulas $\varphi_{k}$ such that \textnormal{\logicEvaluation$(\varphi_{k}, \dt)$} is $\Sigma_{k'}^{\rm{P}}$-hard for some $k \geq k'$.
	The input to $\Sigma_{k+1}$-\textsc{QBF}$(\dt)$ is given by an expression $\alpha$ of the form
	$$\exists P_1 \forall P_2 \cdots \exists P_k \forall P_{k+1} \, \M,$$
	for $\M$ a decision tree of dimension $n$ and $P_1,\dots,P_{k+1}$ a partition of $\{1,\dots,n\}$.
	We explain next how the formula $\varphi_k(x_1,\dots,x_{k+1})$ is defined.

	We start by defining some auxiliary terminology.
	We use $x[i]$ to denote the $i$-th feature of
	the partial instance that is assigned to variable $x$. We define the following formulas.
	\begin{itemize}
		\item ${\sf Undef}(x) := \neg \exists y (y \subset x)$. That is, ${\sf Undef}$ defines the set that only consists of the partial instance $\{\bot\}^{n}$ in which all components are undefined.

		\item ${\sf Single}(x) := \exists y (y \subset x) \wedge \forall y (y \subset x \, \rightarrow \, {\sf Undef}(y))$.
		That is, ${\sf Single}$ defines the set that consists precisely of those partial instances in $\{0,1,\bot\}^{n}$ which have exactly one defined component.

		\item $(x \sqcup y = z) := (x \subseteq z) \wedge (y \subseteq z) \wedge \neg \exists w \big((x \subseteq w) \wedge (y \subseteq w) \wedge (w \subset z)\big)$. That is, $z$, if it exists, is the {\em join} of $x$ and $y$. In other words, $z$ is defined if every feature that is defined over $x$ and $y$ takes the same value in both partial instances,
		and, in such case, for each $1 \leq i \leq n$ we have that $z[i] = x[i] \sqcup y[i]$, where $\sqcup$ is the commutative and idempotent
		binary operation that satisfies
		$\bot \sqcup 0 = 0$ and $\bot \sqcup 1 = 1$.

		As an example, $(1,0,\bot,\bot) \sqcup (1,\bot,\bot,1) = (1,0,\bot,1)$, while $(1,\bot) \sqcup (0,0)$ is undefined.

		\item $(x \sqcap y = z) := (z \subseteq x) \wedge (z \subseteq y) \wedge \neg \exists w \big((w \subseteq x) \wedge (w \subseteq y) \wedge (z \subset w)\big)$. That is, $z$
		is the {\em meet} of $x$ and $y$ (which always exists). In other words, for each $1 \leq i \leq n$ we have that $z[i] = x[i] \sqcap y[i]$,
		where $\sqcap$ is the commutative and idempotent
		binary operation that satisfies
		$\bot \sqcap 0 = \bot \sqcap 1 = 0 \sqcap 1 = \bot$.

		As an example, $(1,0,\bot,\bot) \sqcap (1,\bot,\bot,1) = (1,\bot,\bot,\bot)$, while $(1,\bot) \sqcap (0,0) = (\bot,\bot)$.

		\item ${\sf Comp}(x,y) := \exists w \exists z ({\sf Undef}(z) \, \wedge \, x \sqcup y = w \, \wedge \, x \sqcap y = z)$.
		That is, ${\sf Comp}$ defines the pairs $(\es_1,\es_2)$
		of partial instances in $\{0,1,\bot\}^n \times \{0,1,\bot\}^n$ such that no feature that is defined in $\es_1$ is also defined in $\es_2$, and vice versa. In fact, assume
		for the sake of contradiction that this is not the case. By symmetry, we only have to consider the following two cases.
		\begin{itemize}
			\item There is an $i \leq n$ with $\es_1[i] = 1$ and $\es_2[i] = 0$. Then the join of $\es_1$ and $\es_2$ does not exist.
			\item There is an $i \leq n$ with $\es_1[i]  = \es_2[i] = 1$. Then the $i$-th component of the meet of $\es_1$ and $\es_2$ takes value 1,
			and hence $\es_1 \sqcap \es_2 \neq \{\bot\}^n$.
		\end{itemize}
		\item ${\sf MaxComp}(x,y) := {\sf Comp}(x,y) \, \wedge \, \neg \exists z \big((y \subset z) \wedge {\sf Comp}(x,z)\big)$.
		That is, ${\sf MaxComp}$ defines the pairs $(\es_1,\es_2)$ such that the components that are defined in $\es_1$ are precisely the ones that are undefined in $\es_2$, and vice versa.
		\item ${\sf Rel}(x,y) := \neg \exists z \big((z \subseteq y)  \, \wedge \, {\sf Single}(z) \, \wedge\, {\sf Comp}(x,z)\big)$.
		That is, ${\sf Rel}$ defines the pairs $(\es_1,\es_2)$
		of partial instances in $\{0,1,\bot\}^n \times \{0,1,\bot\}^n$ such that every feature that is defined in $\es_1$ is also defined in $\es_2$.
		\item ${\sf MaxRel}(x,y) := {\sf Rel}(x,y) \, \wedge \, \neg \exists z \big((z \subset y) \wedge {\sf Rel}(x,z)\big)$.
		That is, ${\sf MaxRel}$ defines the pairs $(\es_1,\es_2)$ such that the features defined in $\es_1$ and in $\es_2$ are the same.
		%
	\end{itemize}

	For defining the formula $\varphi_k(x_1,\dots,x_{k+1})$ we will use guarded quantifiers.
	For each $i$ with $1 \leq i \leq k+1$ consider
	\begin{align*}
		\exists^{\mathcal{G}(x_i)} y_i \, \psi \ &= \ \exists y_i \, \big({\sf MaxRel}(x_i,y_i) \ \wedge \ \psi \big) \\
		\forall^{\mathcal{G}(x_i)} y_i \, \psi \ &= \ \forall y_i \, \big({\sf MaxRel}(x_i,y_i) \ \rightarrow \ \psi \big)
	\end{align*}

	We now define the formula $\varphi_k(x_1,\dots,x_{k+1})$ as
	\begin{align*}
		\exists^{\mathcal{G}(x_1)} y_1 \forall^{\mathcal{G}(x_2)} y_2 \cdots \exists^{\mathcal{G}(x_k)} y_k \forall^{\mathcal{G}(x_{k+1})} y_{k+1} \ \forall z \big(z = y_1 \sqcup y_2 \sqcup \cdots \sqcup y_{k+1} \, \rightarrow \, \pos(z)\big)
	\end{align*}


	For each $i$ with $1 \leq i \leq k+1$, let $\es_i$ be the partial instance of dimension $n$ such that
	$$\es_i[j] \ = \ \begin{cases}
		1 \quad \quad & \text{if $j \in P_i$,} \\
		\bot & \text{otherwise.}
	\end{cases}$$
	That is, $\es_i$ takes value 1 over the features in $P_i$ and it is undefined over all other features.
	We claim that $\alpha$ holds if, and only if, $\M \models \varphi_k(\es_1,\dots,\es_{k+1})$.
	The result then follows since $\M$ is a decision tree.

	For the sake of presentation we only prove the aforementioned equivalence
	for the case when $k = 1$, since the extension to $k > 1$ is standard (but cumbersome). That is, we consider the case when $\alpha = \exists P_1 \forall P_2 \M$ and, therefore,
	\begin{align*}
		\varphi_2(x_1,x_2) = \exists^{\mathcal{G}(x_1)} y_1 \forall^{\mathcal{G}(x_2)} y_2 \ \forall z \big(z = y_1 \sqcup y_2 \, \rightarrow \, \pos(z)\big).
	\end{align*}

	\begin{itemize}
		\item[$(\Leftarrow)$] Assume first that $\M \models \varphi_2(\es_1,\es_2)$. Hence, there exists a partial instance $\es'_1$ such that
		\begin{equation} \label{eq_case_one}
			\M \, \models \, \big({\sf MaxRel}(\es_1,\es'_1) \ \wedge \ \forall^{\mathcal{G}(\es_2)} y_2 \ \forall z \big(z = \es'_1 \sqcup y_2 \, \rightarrow \, \pos(z)\big) \big).
		\end{equation}
		This means that the features defined in $\es_1$ and $\es'_1$ are exactly the same, and hence $\es'_1$ is a partial instance that is defined precisely over the features
		in $P_1$. We claim that every instance $\es$ that is a completion of $\es'_1$ satisfies $\M(\es) = 1$, thus showing that $\alpha$ holds. In fact, take $\es$ to be an arbitrary completion. By definition, $\es$ can be written as $\es'_1 \sqcup \es'_2$, where $\es'_2$ is a partial instance that is defined precisely over those features not in $P_1$, i.e.,
		over the features in $P_2$. Thus in the formula \eqref{eq_case_one} we can assign the partial instance $\es'_2$ to the variable $y_2$ and the instance $\es$ to the variable $z$,
		which allows us to conclude that $\M \models \pos(\es)$. This tells us that $\M(\es) = 1$.

		\item[$(\Rightarrow)$] Assume in turn that $\alpha$ holds, and hence that there is a partial instance $\es'_1$ that is defined precisely over the features
		in $P_1$ such that every instance $\es$ that is a completion of $\es'_1$ satisfies $\M(\es) = 1$. We claim that
		\begin{align*}
			\M \, \models \, \big({\sf MaxRel}(\es_1,\es'_1) \ \wedge \ \forall^{\mathcal{G}(\es_2)} y_2 \ \forall z \big(z = \es'_1 \sqcup y_2 \, \rightarrow \, \pos(z)\big) \big),
		\end{align*}
		which implies that $\M \models \varphi_2(\es_1,\es_2)$. In fact, let $\es'_2$ be an arbitrary instance such that ${\sf MaxRel}(\es_2,\es'_2)$ holds. By definition, $\es'_2$ is defined precisely over the features in $P_2$. Let $\es = \es'_1 \sqcup \es'_2$. Notice that $\es$ is well-defined since the sets of features defined in $\es'_1$ and $\es'_2$, respectively, are disjoint. Moreover, $\es$ is a completion of $\es'_1$ as $P_1 \cup P_2 = \{1,\dots,n\}$. We then have that $\M(\es) = 1$ as $\alpha$ holds. This allows us to conclude that
		$\M \models \pos(\es)$, and hence that $\M \models \varphi_2(\es_1,\es_2)$.
	\end{itemize}
	This concludes the proof of the theorem.
\end{proof}

%% file: new-sections/f-foil.tex

In the previous section we identified two limitations of $\foil$ that must be addressed in order to build a practical logic for explanations. On one hand, we must extend $\foil$ to increase its expressive power, and on the other hand, we must constrain the resulting logic
to ensure that its evaluation complexity is appropriate. In this
section we define \ffoil, a logic that takes both criteria into account and in which explainability notions can be expressed naturally.




\subsection{The atomic layer of \ffoil}
\label{sec-dtfoil-c-v-1}
$\foil$ cannot express properties such as minimum abductive explanations
that involve comparing cardinalities of sets of features. As a first
step, we solve this issue by extending the vocabulary of $\foil$ with a
simple binary relation $\lel$ defined as:
$$\M \models \es \lel \es' \ \ \Longleftrightarrow \ \
|\es_\bot| \geq |\es'_\bot|.$$ As we will show later, the use of this
predicate indeed allows us to express many notions of
explanations. Note that we could not simply keep only one of $\subseteq$ and $\lel$ when defining the new logic, as we show that they cannot
be defined in terms of each other. First, we show that predicate $\lel$ cannot be defined in terms of predicate $\subseteq$.
\begin{proposition}
	There is no formula $\varphi(x,y)$ in \textnormal{\foil} defined over the vocabulary
	$\{\subseteq\}$ such that, for every decision tree $\T$ and pair
	of partial instances $\es$, $\es'$, we have that
	$$\T \models \varphi(\es,\es') \ \Longleftrightarrow \ |\es_\bot| \geq |\es'_\bot|.$$
\end{proposition}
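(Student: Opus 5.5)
We argue by contradiction, using the same Ehrenfeucht--Fra\"\i ss\'e machinery as in the proof of Theorem~\ref{thm:ne-foil}. Since $\varphi$ only uses the vocabulary $\{\subseteq\}$, the model plays no role: $\T \models \varphi(\es,\es')$ depends only on the structure $\astruct_n = \langle \{0,1,\bot\}^n, \subseteq\rangle$ with $n = \dm(\T)$. So suppose $\varphi(x,y)$ defines $\lel$ on every $\astruct_n$, and let $k = \qr(\varphi)$. The plan is to find a single dimension $N$ and two pairs $(\es_1,\es'_1)$, $(\es_2,\es'_2)$ in $\astruct_N$ with $(\astruct_N,\es_1,\es'_1) \equiv_k (\astruct_N,\es_2,\es'_2)$, but with $|(\es_1)_\bot| \geq |(\es'_1)_\bot|$ and $|(\es_2)_\bot| < |(\es'_2)_\bot|$.

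To build these pairs, we split the coordinates into two disjoint blocks and decompose $\astruct_{a+b}$ as a product $\astruct_a \times \astruct_b$. This is valid because subsumption is coordinatewise, so the subsumption order on $\{0,1,\bot\}^{a+b}$ is the direct product of the orders on the two blocks. We use the elements $\es = \{1\}^a \cdot \{\bot\}^b$ and $\es' = \{\bot\}^a \cdot \{1\}^b$, which have $b$ and $a$ undefined features, respectively. Thus $\es \lel \es'$ holds iff $b \geq a$. Concretely, fix $N = 2m+1$ with $m \geq 3^k$. In the first configuration take blocks of sizes $(m+1, m)$; then $\es_1 = \{1\}^{m+1}\cdot\{\bot\}^{m}$ and $\es'_1=\{\bot\}^{m+1}\cdot\{1\}^m$, so $|(\es_1)_\bot| = m < m+1 = |(\es'_1)_\bot|$ and the relation fails. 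In the second configuration take blocks of sizes $(m, m+1)$; then $\es_2 = \{1\}^{m}\cdot\{\bot\}^{m+1}$ and $\es'_2 = \{\bot\}^m\cdot\{1\}^{m+1}$, and the relation holds.

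The key step is a product (composition) lemma for EF games in the style of Lemma~\ref{lemma:comp}. If $(\astruct_a,\bar u) \equiv_k (\astruct_{a'},\bar u')$ and $(\astruct_b,\bar v)\equiv_k(\astruct_{b'},\bar v')$, then
\[
(\astruct_a\times\astruct_b, \bar u\cdot\bar v) \equiv_k (\astruct_{a'}\times\astruct_{b'},\bar u'\cdot\bar v').
\]
Duplicator plays the two component games in parallel, answering each move by splitting it into its two coordinates. Atomic $\subseteq$-facts hold in the product exactly when they hold in both components, so partial isomorphisms are preserved. We apply this with pointed pairs $(\astruct_{m+1}, \{1\}^{m+1}, \{\bot\}^{m+1})$ versus $(\astruct_m,\{1\}^m,\{\bot\}^m)$ on the first block, and $(\astruct_m,\{\bot\}^m,\{1\}^m)$ versus $(\astruct_{m+1},\{\bot\}^{m+1},\{1\}^{m+1})$ on the second.

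Each block then needs $(\astruct_p,\{1\}^p,\{\bot\}^p)\equiv_k(\astruct_q,\{1\}^q,\{\bot\}^q)$ for $p,q\geq 3^k$. This should follow from Lemma~\ref{lemma:pointed}, because $\{\bot\}^p$ is the unique minimum of $\subseteq$. It is definable, so Duplicator must map it to the minimum anyway, and adding it as a constant does not change the game. We therefore get the equivalence of the two configurations. Since both live in $\astruct_{2m+1}$, there is one structure on which $\varphi$ must agree on the two pairs, contradicting that exactly one satisfies $\lel$. Any decision tree of dimension $2m+1$, for example a single leaf, witnesses this.

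The main obstacle is justifying the product lemma and the extension of Lemma~\ref{lemma:pointed} with the extra minimum constant. The parallel-strategy argument is standard, but one must check it carefully for pointed structures. If the product decomposition proves awkward, the fallback is to adapt the proof of Lemma~\ref{lemma:comp} directly, since it is essentially the same composition argument applied to the $\subseteq$-reduct.
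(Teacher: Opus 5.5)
Your argument is correct, but it follows a different route from the paper. The paper gives a two-line reduction. If some $\varphi(x,y)$ over $\{\subseteq\}$ defined $\lel$, then $\sr(x,y) \wedge \forall z\,\big(\sr(x,z) \to (\varphi(z,y) \to \varphi(y,z))\big)$ would be a \foil\ formula defining minimum abductive explanations over decision trees, contradicting Theorem~\ref{thm:ne-foil}.

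You instead play the Ehrenfeucht--Fra\"{\i}ss\'e game directly on the $\subseteq$-reduct. You combine three ingredients: the isomorphism $\astruct_{a+b}\cong\astruct_a\times\astruct_b$; a parallel-play product lemma in which both factors may change, a mild generalization of Lemma~\ref{lemma:comp}, where the first factor is fixed and $\pos$ is present; and Lemma~\ref{lemma:pointed}.

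The paper's route is shorter because it reuses Theorem~\ref{thm:ne-foil}, whose proof rests on the same two lemmas. Yours never mentions models or $\pos$. It also places the two indistinguishable pairs inside a single structure $\astruct_{2m+1}$. So it shows that no formula of rank $k$ defines $\lel$ even on one fixed, sufficiently large $\astruct_N$.

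One step should be tightened. Definability of the minimum does not make the extra constant free. In the last round, a winning strategy for the $k$-round game need not respect the minimum. For example, it may answer a non-minimal move with $\{\bot\}^q$. So naming $\{\bot\}^p$ costs one round.

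The fix is to take $m \geq 3^{k+1}$ and start from $(\astruct_p,\{1\}^p)\equiv_{k+1}(\astruct_q,\{1\}^q)$. Let Spoiler open with $\{\bot\}^p$. Suppose Duplicator answered some $d\neq\{\bot\}^q$. Then Spoiler would next play $\{\bot\}^q\subset d$ in $\astruct_q$, and Duplicator would need an element strictly below $\{\bot\}^p$, which does not exist. Hence Duplicator must answer $\{\bot\}^q$. For $k\geq 1$, the remaining $k$ rounds then give $(\astruct_p,\{1\}^p,\{\bot\}^p)\equiv_k(\astruct_q,\{1\}^q,\{\bot\}^q)$. For $k=0$ the claim is only a check of atomic facts among the constants. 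With this change, the rest of your argument goes through as written.
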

\begin{proof}
	For the sake of contradiction, assume that $\varphi(x,y)$ is
	definable in $\foil$ over the vocabulary $\{\subseteq\}$. Then
	the following are formulas in $\foil$:
	\begin{eqnarray*}
		\sr(x,y) &:=& \full(x) \wedge \, y \subseteq x \wedge \forall z \, \big(y \subseteq z \wedge \full(z) \, \rightarrow \, (\pos(z) \leftrightarrow \pos(\ x))\big),\\
		\msr(x,y) &:=& \sr(x,y) \wedge \forall z \, \big(\sr(x,z) \to (\varphi(z,y) \to \varphi(y,z))\big).
	\end{eqnarray*}
	But the second formula verifies if a partial instance $y$ is a minimum abductive explanation for a given
	instance $x$, which contradicts the inexpressibility result of
	Theorem \ref{thm:ne-foil}, and hence concludes the proof of the proposition.
\end{proof}
Second, we show that predicate $\subseteq$ cannot be defined in terms of predicate $\lel$.
\begin{proposition}
	There is no formula $\psi(x,y)$ in \textnormal{\foil} defined over the vocabulary
	$\{\lel\}$ such that, for every decision tree $\T$\footnote{Naturally, this statement does not rely on decision trees at all since it concerns only $\subseteq$ and $\lel$; we only state it in these terms for consistency.} and pair
	of partial instances $\es$, $\es'$, we have that
	$$\T \models \psi(\es,\es') \ \Longleftrightarrow \ \es \text{ is subsumed by } \es'.$$
\end{proposition}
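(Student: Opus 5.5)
The argument is an automorphism argument. The structure $\langle \{0,1,\bot\}^n, \lel \rangle$ only sees the number of defined features of each element. So every bijection of the domain that preserves that number is an automorphism of the structure. On the other hand, such a bijection need not preserve subsumption. First-order formulas are invariant under automorphisms, which gives the contradiction.

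Concretely, assume for contradiction that $\psi(x,y)$ over $\{\lel\}$ defines subsumption on every decision tree. Fix any dimension $n \geq 1$ (take $n = 1$) and any decision tree $\T$ of dimension $n$, for instance a single leaf. The structure used to interpret $\psi$ is then $\astruct = \langle \{0,1,\bot\}^n, \lel^{\astruct} \rangle$, where $\es \lel \es'$ iff $|\es_\bot| \geq |\es'_\bot|$.

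Define $f : \{0,1,\bot\}^n \to \{0,1,\bot\}^n$ as the permutation that flips the first feature of a partial instance whenever it is defined: $0 \mapsto 1$, $1 \mapsto 0$, $\bot \mapsto \bot$. Every other feature is left unchanged. This map is a bijection and it preserves $|\es_\bot|$, so it preserves $\lel$ in both directions. Hence $f$ is an automorphism of $\astruct$. By the standard fact that first-order formulas are preserved under isomorphisms (cf.~\citep{fmt-book}),
\[
\astruct \models \psi(\es,\es') \iff \astruct \models \psi(f(\es), f(\es')).
\]

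Now take $\es = (1, \bot, \dots, \bot)$ and $\es' = (1, \dots, 1)$. Then $\es$ is subsumed by $\es'$. However, $f(\es) = (0, \bot, \dots, \bot)$ is not subsumed by $f(\es') = (0,1,\dots,1)$? It is, so this pair is no counterexample, and I choose the pair differently: apply $f$ only to $\es$ and leave $\es'$ fixed.

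This asymmetric move is not directly allowed, since $f$ must be applied to both arguments. The fix is to choose $\es'$ to be a fixed point of $f$, that is, a partial instance whose first feature is $\bot$. Let $n = 2$, $\es = (\bot, \bot)$... wait, that is also fixed by $f$. Instead take $\es = (1,\bot)$ and $\es' = (\bot,\bot)$? That pair is not a subsumption. The right choice is a permutation $f$ that fixes $\es'$ but moves $\es$. Take $n = 2$, $\es = (\bot,1)$, $\es' = (0,1)$, and let $f$ flip the second feature only on partial instances whose first feature is defined (including $\bot$ in that condition is not needed). Since $f$ acts within each level, it is still an automorphism of $\astruct$. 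Then $f(\es) = \es = (\bot,1)$, because the first feature of $\es$ is undefined, while $f(\es') = (0,0)$. We have $(\bot,1) \subseteq (0,1)$, but $(\bot,1) \not\subseteq (0,0)$. Thus $\psi$ holds on $(\es,\es')$ but fails on its image under an automorphism, which is the desired contradiction.

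The only delicate point is picking a level-preserving permutation that actually breaks subsumption. As shown above, any bijection within levels works, so this causes no real obstacle.
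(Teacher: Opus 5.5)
Your final argument is correct and is essentially the paper's: level-preserving bijections are automorphisms of the $\{\lel\}$-reduct, and one picks such a bijection that fixes one argument while moving the other out of subsumption. The paper takes $n \geq 3$, swaps $(0,\ldots,0,\bot)$ with $(\bot,0,\ldots,0)$ and fixes $(\bot,\bot,0,\ldots,0)$, whereas your conditional flip of the second feature already works for $n=2$. Strip out the false starts, including the initial suggestion $n=1$, which cannot work because in dimension $1$ every level-preserving bijection also preserves subsumption.
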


\begin{proof}
	Intuitively, $\lel$ is invariant under any bijection of partial instances that preserves the number of $\bot$'s, whereas subsumption is not. We formalize this as follows.

	For the sake of contradiction, assume that $\psi(x,y)$ is
	definable in $\foil$ over the vocabulary $\{\lel\}$, and let $n \geq
	3$. Moreover, for every $k \in \{0, \ldots, n\}$, define $L_k$ as the
	following set of partial instances:
	\begin{eqnarray*}
		L_k &=& \{ \es \in \{0,1,\bot\}^n \mid |\es_\bot| = k\},
	\end{eqnarray*}
	and let $f_k : L_k \to L_k$ be an arbitrary bijection from $L_k$ to
	itself. Finally, let $f : \{0, 1, \bot\}^n \to \{0, 1, \bot\}^n$ be
	defined as $f(\es) = f_i(\es)$ if $\es \in L_i$. Clearly, $f$ is a
	bijection from $\{0, 1, \bot\}^n$ to $\{0, 1, \bot\}^n$.

	For a decision tree $\T$ of dimension $n$, define $\mathfrak{A}'_\T$
	as the restriction of $\mathfrak{A}_\T$ to the vocabulary
	$\{\lel\}$. Then function $f$ is an automorphism of
	$\mathfrak{A}'_\T$ since $f$ is a bijection from $\{0, 1, \bot\}^n$ to
	$\{0, 1, \bot\}^n$, and for every pair of partial instances $\es_1,
	\es_2$:
	\begin{align*}
		\mathfrak{A}'_\T \models \es_1 \lel \es_2 \quad \text{ if and only if } \quad
		\mathfrak{A}'_\T \models f(\es_1) \lel f(\es_2).
	\end{align*}
	Then given that $\psi(x,y)$ is definable in first-order logic over the
	vocabulary $\{\lel\}$, we have that for every pair of partial
	instances $\es_1, \es_2$:
	\begin{align}\label{eq-subsumed-aut}
		\mathfrak{A}'_\T \models \psi(\es_1, \es_2) \quad \text{ if and only if } \quad
		\mathfrak{A}'_\T \models \psi(f(\es_1), f(\es_2)).
	\end{align}
	But now assume that $g_k : L_k \to L_k$ is defined as the identity
	function for every $k \in \{0, \ldots, n\} \setminus \{1\}$, and
	assume that $g_1$ is defined as follows for every partial instance
	$\es$:
	\begin{eqnarray*}
		g_1(\es) &=&
		\begin{cases}
			(\bot, 0, \ldots, 0) & \text{if } \es = (0, \ldots, 0, \bot)\\
			(0, \ldots, 0, \bot) & \text{if } \es = (\bot, 0, \ldots, 0)\\
			\es & \text{otherwise}
		\end{cases}
	\end{eqnarray*}
	Clearly, each function $g_i$ is a bijection. Moreover, let $g : \{0,
	1, \bot\}^n \to \{0, 1, \bot\}^n$ be defined as $g(\es) = g_i(\es)$ if
	$\es \in L_i$. Then we have by \eqref{eq-subsumed-aut} that for every
	pair of partial instances $\es_1$, $\es_2$:
	\begin{align*}
		\mathfrak{A}'_\T \models \psi(\es_1, \es_2) \quad \text{ if and only if } \quad
		\mathfrak{A}'_\T \models \psi(g(\es_1), g(\es_2)).
	\end{align*}
	Hence, taking $\es_1 = (\bot, \bot, 0, \ldots, 0)$ and $\es_2
	= (\bot, 0, \ldots, 0)$, given that $g(\es_1) = (\bot, \bot, 0, \ldots, 0)$ and ${g(\es_2)
	= (0, \ldots, 0, \bot)}$, we conclude that:
	\begin{align*}
		\begin{gathered}
			\mathfrak{A}'_\T \models \psi((\bot, \bot, 0, \ldots, 0),\, (\bot, 0, \ldots, 0)) \\
			\text{ if and only if }\\
			\mathfrak{A}'_\T \models \psi((\bot, \bot, 0, \ldots, 0),\, (0, \ldots, 0, \bot)).
		\end{gathered}
	\end{align*}
	But this leads to a contradiction, since $(\bot, \bot, 0, \ldots, 0)$
	is subsumed by $(\bot, 0, \ldots, 0)$, but $(\bot, \bot, 0, \ldots,
	0)$ is not subsumed by $(0, \ldots, 0, \bot)$. This concludes the
	proof of the proposition.
\end{proof}

However, adding $\lel$ to $\foil$ can only add extra complexity. Therefore, our second step is to define the logic $\ffoil$ expressive enough to capture important notions, but keeping the evaluation tractable using $\sat$ solvers. Our logic $\ffoil$ consists of three hierarchical layers, where the first layer does not depend on the structure of the model.

Predicates $\subseteq$ and $\lel$, as well as predicate $\full$ used in Section \ref{sec:expressing-in-foil}, can be regarded as {\em
syntactic} in the sense that they refer to the values of the features
of partial instances, and they do not make reference to classification
models. It turns out that all the syntactic predicates needed in our
logical formalism can be expressed as first-order formulas over the
predicates $\subseteq$ and $\lel$. The {\em atomic formulas} of $\ffoil$ are defined
as first-order formulas over the vocabulary $\{\subseteq, \lel\}$. We now prove that such formulas can be evaluated in polynomial time. We also prove that in the case of \emph{sentences}, that is, formulas without free variables, it is decidable whether a sentence is true in every structure $\mathfrak{B}_n$.

Given $n \geq 0$ and a model $\M$ of dimension $n$, define $\mathfrak{B}_\M$ as a structure over the vocabulary $\{\subseteq, \lel\}$ generated from $\mathfrak{A}_\M$ by removing the interpretation of predicate $\pos$, and adding the interpretation of predicate $\lel$. Notice that, given two models $\M_1$ and $\M_2$ of dimension $n$, we have that $\mathfrak{B}_{\M_1} =
\mathfrak{B}_{\M_2}$, so we define simply $\mathfrak{B}_n$ as $\mathfrak{B}_\M$ for an arbitrary model of dimension $n$. Therefore, when measuring the complexity of evaluating formulas in the atomic layer, we take $n$ in unary as part of the input, since $n$ is the size of the partial instances. Hence, for each formula $\varphi(x_1, \ldots, x_k)$ in the atomic layer of $\ffoil$, we define its associated problem \logicEvaluation$(\varphi)$ as follows:

\begin{center}
	\fbox{\begin{tabular}{rl}
			{\sc Problem:} & \logicEvaluation$(\varphi)$\\
			{\sc Input:} & An integer $n \in \mathbb{N}$ given in unary and partial instances $\es_1, \ldots, \es_k$\\
			{\sc Output:} & \textsc{Yes}, if $\mathfrak{B}_n \models \varphi(\es_1, \ldots, \es_k)$, and \textsc{No} otherwise
	\end{tabular}}
\end{center}

Recall also that the \emph{width} of a first-order formula $\varphi$, denoted $\rm{wd}(\varphi)$, is defined as the maximum number of free variables among all subformulas of $\varphi$ (see \cite{MR2374364} for a reference).

\begin{theorem}\label{theo:ptime-atomic}
	The following statements hold:
	\begin{enumerate}
		\item Let $\varphi$ be a first-order formula defined over the vocabulary $\{\subseteq, \lel\}$. Then \textnormal{\logicEvaluation$(\varphi) \in \ptime$}.
		\item It is decidable whether a given first-order sentence $\varphi$ defined over the vocabulary $\{\subseteq, \lel\}$ is true in every structure $\mathfrak{B}_n$. In particular, it can be solved in $2^{2^{\rm{poly}\big(|\varphi| \cdot 3^{\rm{wd}(\varphi)} \big)}}$ space, and, hence, in $2^{2^{2^{\rm{poly}\big(|\varphi| \cdot 3^{\rm{wd}(\varphi)} \big)}}}$ time.
	\end{enumerate}
\end{theorem}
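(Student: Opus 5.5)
The plan is to exploit the symmetry of $\mathfrak{B}_n$. A tuple of $k$ partial instances of dimension $n$ can be viewed as a sequence of $n$ \emph{columns}, each in $\{0,1,\bot\}^k$. Permuting positions, and swapping $0$ and $1$ within a single position, are automorphisms of $\mathfrak{B}_n$: both preserve $\subseteq$ and the number of $\bot$'s. It follows that the rank-$q$ type of $(\es_1,\ldots,\es_k)$ in $\mathfrak{B}_n$ depends only on the vector of \emph{column counts}. For each equivalence class of columns $c\in\{0,1,\bot\}^k$ (modulo the $0/1$ flip), this vector records how many positions carry that column. Atomic facts are also readable from the counts: $\es_i\subseteq\es_j$ holds iff no column has $c_i\ne\bot$ and $c_i\ne c_j$, and $\es_i\lel\es_j$ is a linear inequality among counts.

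\textbf{Item 1.} I would proceed by induction on $\varphi$ and show the following. For every subformula $\psi(x_1,\ldots,x_m)$, the set of count vectors $(N_c)_{c}$ with $\sum_c N_c=n$ (over all $n$) for which $\psi$ holds is Presburger-definable.
\begin{itemize}
\item \emph{Atoms.} $\subseteq$ gives $\sum_{c\text{ bad}}N_c=0$. For $\lel$, counting $\bot$'s is linear in the $N_c$.
\item \emph{Boolean connectives.} These are immediate.
\item \emph{Quantifier $\exists x_{m+1}$.} A witness refines each class $c$ into three classes $c0,c1,c\bot$ (with the flip coherent). The quantifier therefore becomes $\exists (M_{c,a})_{a}$ with $\sum_a M_{c,a}=N_c$ for each $c$, applied to the Presburger formula for $\psi$ in the refined counts.
\end{itemize}
Hence $\varphi$ corresponds to a fixed Presburger formula $\Phi_\varphi$ over at most $3^{\mathrm{wd}(\varphi)}$ count variables. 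Given the input, we compute the counts in polynomial time; they are numbers at most $n$, with $n$ in unary.

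Deciding $\Phi_\varphi$ on these counts must then be done in polynomial time. Here I would use that a fixed Presburger formula defines a semilinear set, i.e.\ a finite union of linear sets $b+\mathbb{N}P$ with fixed $b,P$. Membership of a concrete vector with polynomially bounded entries in a fixed linear set is an integer-program feasibility problem in fixed dimension, which is polynomial (Lenstra). Alternatively, after quantifier elimination, $\Phi_\varphi$ is a fixed Boolean combination of linear inequalities and congruences, each of which is checked directly. This settles item~1.

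\textbf{Item 2.} A sentence $\varphi$ holds in every $\mathfrak{B}_n$ iff the Presburger sentence $\forall n\,\forall N_{\bot}\,\bigl(N_{\bot}=n\rightarrow\Phi_\varphi\bigr)$ is true. Here the empty tuple has one column class, whose count is $n$, and $\Phi_\varphi$ is built as above. Truth of this sentence is decidable. The size of $\Phi_\varphi$ is polynomial in $|\varphi|\cdot 3^{\mathrm{wd}(\varphi)}$, since each quantifier step introduces at most $3^{\mathrm{wd}}$ new variables and constraints. The known bounds for deciding Presburger arithmetic then give the stated bounds: doubly exponential space (Ferrante--Rackoff/Berman) and triply exponential time.

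\textbf{Main obstacle.} The delicate step is the induction for quantifiers, which is really an EF-style argument. One must check that truth depends only on counts: two tuples with equal column counts are related by a position permutation composed with flips, hence by an automorphism. One must also check that every way of extending the tuple is captured by a refinement of the counts. The size accounting, bounding the formula by $|\varphi|\cdot 3^{\mathrm{wd}}$ rather than by $3^{\text{number of variables}}$, needs care: variables not free in a subformula should be projected out, so that each subformula only tracks classes over its own free variables.
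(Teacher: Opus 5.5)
Your proposal is correct and follows essentially the same route as the paper. Both translate each formula into a Presburger formula over pattern (column) counts, with atoms as linear constraints and each quantifier as quantification over refined counts subject to projection equations, then use quantifier elimination for item~1 and the doubly-exponential-space Presburger decision procedure on $\forall m\,\mathrm{T}(m)$ for item~2. The differences are only cosmetic: the paper uses all $3^k$ patterns instead of quotienting by the $0/1$ flip (harmless, though under the flip the all-$\bot$ column refines into two classes, not three), and your semilinear-set/Lenstra alternative is not needed.
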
%

\begin{proof}
	We will prove both claims by a reduction to Presburger arithmetic. One standard presentation of Presburger arithmetic consists of two constants, $0$ and $1$, a binary relation $<$ and a binary function $+$. We consider the model $\mathbb{N}$ of the non-negative integers with the usual interpretations. We will use the following two well-known facts about Presburger arithmetic:
	\begin{enumerate}
		\item[I.] Presburger arithmetic admits quantifier elimination, that is, for every Presburger formula $\varphi(y_1, \dots, y_m)$ there exists a quantifier-free formula $\psi(y_1, \dots, y_m)$ such that $\varphi$ and $\psi$ are logically equivalent \cite{MR1111343}.
		\item[II.] The problem of determining the truth of sentences in Presburger Arithmetic with respect to the model $\mathbb{N}$ can be solved in double exponential space with respect to the size of the sentence \cite{MR566694}.
	\end{enumerate}

	First, we introduce some terminology. Let $n \in \mathbb{N}$ be the dimension. Then,
	for a tuple $\Gamma=(x_1,\dots,x_k)$ of (distinct) variables, an assignment $s
	\colon \Gamma \to \{0,1,\bot\}^n$, and a coordinate $i \in \{1,\dots,n\}$, we define the ``pattern'' $\pat^{\Gamma, s}_i$ as the tuple
	\(
	  \bigl(s(x_1)_i,\,\dots,\,s(x_k)_i\bigr)\in \{0,1,\bot\}^k.
	\)
	Given a pattern $\rho \in \{0,1,\bot\}^k$, we define its \emph{pattern count} in $(\Gamma,s)$ by
	\[
	c^{\Gamma,s}_\rho
	:=
	\bigl|\{\, i\in \{1,\dots,n\} \, : \, \pat^{\Gamma,s}_i=\rho \,\}\bigr|.
	\]
	Clearly,
	\(
	\displaystyle\sum_{\rho\in \{0,1,\bot \}^{k}} c^{\Gamma,s}_\rho = n.
	\)
	For example, let $\Gamma=(x_1,x_2)$ and let $n=4$. Suppose that $s(x_1)=(1,\bot,0,0)$ and $s(x_2)=(1,1,\bot,\bot)$. Then the four coordinates have patterns
	\[
	\operatorname{pat}^{\Gamma,s}_1=(1,1),\qquad
	\operatorname{pat}^{\Gamma,s}_2=(\bot,1),\qquad
	\operatorname{pat}^{\Gamma,s}_3=(0,\bot),\qquad
	\operatorname{pat}^{\Gamma,s}_4=(0,\bot).
	\]
	Hence
	\[
	c^{\Gamma,s}_{(1,1)}=
	c^{\Gamma,s}_{(\bot,1)}=1,\qquad
	c^{\Gamma,s}_{(0,\bot)}=2,
	\]
	and every other pattern in $\{0,1,\bot \}^2$ has count $0$.

	We are now ready to state the reduction lemma (refer to Section \ref{proof:presburger} for the proof):

	\begin{lemma}\label{presburger}
	Let $\varphi(x_1, \dots, x_\ell)$ be a first-order formula defined over the vocabulary
	$\{\subseteq,\lel\}$, and let $\Gamma \coloneq (x_1, \dots, x_k)$ be a tuple of distinct variables that contains all free variables of $\varphi$. Then there exists a Presburger formula
	\[
	\operatorname{T}_\Gamma(\varphi)\bigl((z_\rho)_{\rho\in\{0,1,\bot\}^{k}}\bigr)
	\]
	such that for every $n\in\mathbb{N}$ and every assignment $s:\Gamma\to \{0,1,\bot \}^n$,
	\[
	\mathfrak{B}_n\models \varphi(s(x_1), \dots, s(x_\ell))
	\quad\Longleftrightarrow\quad
	\mathbb{N} \models
	\operatorname{T}_\Gamma(\varphi)\bigl((c^{\Gamma,s}_\rho)_\rho\bigr).
	\]
	Moreover, if $k = \ell$ (that is, if $\Gamma$ contains exactly the free variables of $\varphi$), then we have that
	$$\bigg|\operatorname{T}_\Gamma(\varphi)\bigl((z_\rho)_{\rho\in\{0,1,\bot\}^{k}}\bigr)\bigg| = O\bigg(|\varphi| \cdot 3^{\rm{wd}(\varphi)} \cdot \mathrm{poly}(\rm{wd}(\varphi))\bigg),$$
	and the reduction can be computed using the same space.
	\end{lemma}

	We now show how Lemma~\ref{presburger} implies both statements of the theorem.

	For the first part, let $\varphi(x_1, \dots, x_k)$ be a first-order formula defined over the vocabulary $\{\subseteq, \lel\}$, and let $\Gamma = (x_1, \dots, x_k)$. Because Presburger arithmetic admits quantifier elimination, we know that there exists a quantifier-free Presburger formula $\psi\bigl((z_\rho)_{\rho\in\{0,1,\bot\}^{k}}\bigr)$ that is logically equivalent to $\operatorname{T}_\Gamma(\varphi)\bigl((z_\rho)_{\rho\in\{0,1,\bot\}^{k}}\bigr)$. From Lemma~\ref{presburger} we know that, for every $n\in\mathbb{N}$ and every assignment $s:\Gamma\to \{0,1,\bot\}^{n}$,
	\[
	\mathfrak{B}_n \models \varphi(s(x_1), \dots, s(x_\ell))
	\quad\Longleftrightarrow\quad
	\mathbb{N} \models
	\psi\bigl((c^{\Gamma,s}_\rho)_\rho\bigr).
	\]
	Now suppose we are given an integer $n \in \mathbb{N}$ in unary and partial instances $\es_1, \ldots, \es_k$. Let $s:\Gamma\to \{0,1,\bot\}^{n}$ be the assignment such that $s(x_i) = \es_i$ for every $i \in \{1, \dots, k\}$. Notice that we can compute in linear time all the values $\{c^{\Gamma,s}_\rho\}_{\rho\in\{0,1,\bot\}^{k}}$, as $3^k$ is constant with respect to the input size of \logicEvaluation$(\varphi)$. Notice that each variable $c^{\Gamma,s}_\rho$ has a value less than or equal to $n$. Because $\psi$ is a fixed, quantifier-free formula, we can evaluate it onto the values $\{c^{\Gamma,s}_\rho\}_{\rho\in\{0,1,\bot\}^{k}}$ in polynomial time. This shows that \textnormal{\logicEvaluation$(\varphi) \in \ptime$}.

	For the second part, let $\varphi$ be a first-order sentence defined over the vocabulary $\{\subseteq, \lel\}$. Because we can take $\Gamma$ as an empty context, there is only one possible pattern. Therefore, the formula $\operatorname{T}_\Gamma(\varphi)\bigl((z)\bigr)$ has exactly one free variable. Recall that, in general, given an assignment for $\Gamma$ in $\mathfrak{B}_n$, the sum over all pattern counts must equal $n$. Hence, in this case, $\varphi$ is true in every structure $\mathfrak{B}_n$ if and only if $\forall m \operatorname{T}_\Gamma(\varphi)\bigl((m)\bigr)$ is a true Presburger sentence. We know from Lemma~\ref{presburger} that the sentence $\forall m \operatorname{T}_\Gamma(\varphi)\bigl((m)\bigr)$ has size $O\bigg(|\varphi| \cdot 3^{\rm{wd}(\varphi)} \cdot \mathrm{poly}(\rm{wd}(\varphi))\bigg)$ and it can be constructed in at most the same space. Because the problem of determining the truth value of a Presburger sentence can be solved in double exponential space, we conclude that we can determine if $\varphi$ is true in every structure $\mathfrak{B}_n$ in $2^{2^{\rm{poly}\big(|\varphi| \cdot 3^{\rm{wd}(\varphi)} \big)}}$ space. This concludes the proof of the theorem.

\end{proof}

\subsection{The quantified layer of \ffoil}
\label{sec-quant}

In this layer we introduce predicates whose interpretation does depend on the model. The vocabulary of this layer is $\{\subseteq, \lel, \allpos, \allneg\}$, where $\M \models \allpos(\es)$ if and only if all instances in $\fullset(\es)$ are classified positively by $\M$, and $\allneg$ is defined analogously. As we will show, at this point we will already be able to express properties over polynomial-size sets of instances and minimality/minimum conditions.

We will need two auxiliary formulas from the atomic layer that we already defined during the proof of Theorem~\ref{thm:eval-folistar}. The first is $${\sf Undef}(x) := \neg \exists y (y \subset x),$$ which defines the set that only contains the partial instance $\{\bot\}^{n}$. The second is $${\sf Single}(x) := \exists y (y \subset x) \wedge \forall y (y \subset x \, \rightarrow \, {\sf Undef}(y)),$$ which defines the set of partial instances with exactly one defined feature.

The \emph{quantified layer} is recursively defined as follows:
\begin{enumerate}
	\item Boolean combinations of formulas from the atomic layer, together with $\allpos(x)$ and $\allneg(x)$, are formulas from the quantified layer.

	\item If $\varphi$ is a formula from the quantified layer, then $\exists x\ \varphi$ is a formula from the quantified layer.

	\item If $\varphi$ is a formula from the quantified layer, then \textnormal{$\forall x\ \left( {\sf Single}(x) \to \varphi\right)$} is a formula from the quantified layer.
\end{enumerate}

Using only the first rule we can already express some basic explainability properties. For example, we can express the query for weak abductive explanations as follows:
$$\sr(x, y) := \full(x) \wedge y \subseteq x \wedge (\allpos(x) \to \allpos(y)) \wedge (\allneg(x) \to \allneg(y)).$$

The third rule involves the concept of {\em guarded} quantification. In that case we only quantify over partial instances with exactly one defined feature, which naturally correspond to assignments of a value to a single feature. On any class of models $\C$, the number of partial instances with one defined feature is at most twice the dimension of the model, so we cannot express universal properties over superpolynomial-size sets of partial instances. Notice that our rules do not allow us to define unguarded universal quantifiers because the first rule only allows us to take Boolean combinations of unquantified formulas. In particular, in this layer we are not allowed to negate formulas that were produced using the second (or third) rule.

\begin{theorem}
	\label{theo:np-guarded}
	The following statements hold:
	\begin{enumerate}
		\item Let $\C$ be a class of models such that \textnormal{\logicEvaluation}$(\allpos(x), \C) \in \ptime$ and \textnormal{\logicEvaluation}$(\allneg(x), \C) \in \ptime$. Then \textnormal{\logicEvaluation}$(\varphi, \C) \in \np$ for every formula $\varphi$ from the quantified layer of \textnormal{\ffoil}.
		\item
		There exists a formula $\varphi$ from the quantified layer of \textnormal{\ffoil} such that \textnormal{\logicEvaluation}$(\varphi, \dt)$ is $\np$-hard.
	\end{enumerate}
\end{theorem}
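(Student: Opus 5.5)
For the first item, I would first put every formula of the quantified layer into a normal form. Existential quantifiers and guarded universal quantifiers can be pulled outward past conjunctions and disjunctions, since negation is only applied at the atomic level. After renaming bound variables, the formula becomes a prefix mixing $\exists x$ and $\forall x\,({\sf Single}(x)\to\cdot)$, followed by a Boolean combination $\psi$ of atomic-layer formulas and atoms $\allpos(\cdot)$, $\allneg(\cdot)$.

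Given $\M\in\C$ of dimension $n$ and partial instances $\es_1,\dots,\es_k$, the NP algorithm works as follows.

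1. Each guarded universal quantifier ranges over at most $2n$ partial instances, namely those with exactly one defined feature. Because the formula is fixed, there are only constantly many such quantifiers. So I would expand them into an explicit conjunction over all choices.
2. Every existential quantifier that lies inside universal ones becomes a Skolem witness, one for each tuple of choices of the enclosing universal variables. The number of witnesses is therefore $O(n^c)$ for a constant $c$ equal to the number of guarded universals in the formula.
3. The certificate is this polynomial-size family of partial instances of dimension $n$.
4. The verifier runs through all $O(n^c)$ branches. In each branch it evaluates $\psi$ on the instantiated variables. Atomic-layer subformulas are decided in polynomial time by Theorem~\ref{theo:ptime-atomic}(1); $n$ is available in unary from the instances themselves. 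The atoms $\allpos$ and $\allneg$ are decided in polynomial time by hypothesis.

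The main point to get right is exactly this bookkeeping: witnesses chosen after a universal quantifier may depend on its value, yet their total number stays polynomial because the formula is fixed. This is also the reason guarding is essential, since an unguarded $\forall$ would range over $3^n$ elements.

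For the second item, I would use the formula
\[
\varphi(x) := \exists y\,\bigl(y \subseteq x \wedge \full(y) \wedge \allpos(y) \wedge \allneg(x)\bigr),
\]
or something similar. Here $\full$ is the atomic-layer formula from Section~\ref{sec:expressing-in-foil}. More simply, I could just ask whether the tree has a positive completion of a partial instance intersecting some condition. But plain satisfiability of a decision tree is polynomial, so the hardness has to come from interaction with syntactic constraints such as $\lel$.

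I would therefore reduce from the minimum sufficient reason problem, which is NP-hard over decision trees~\cite{NEURIPS2020_b1adda14}. The formula is
\[
\varphi(x,z) := \exists y\,\bigl(\sr(x,y) \wedge y\lel z\bigr),
\]
and it lies in the quantified layer: $\sr$ is a Boolean combination of atomic-layer formulas together with $\allpos$ and $\allneg$, and it is existentially quantified.

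Given a tree $\T$, an instance $\es$ and a bound $k$, I let $z$ be any partial instance with exactly $k$ defined features. Then $\varphi(\es,z)$ holds iff $\es$ has a weak abductive explanation with at most $k$ defined features. This is exactly the NP-complete problem from~\cite{NEURIPS2020_b1adda14}, so the reduction is immediate.
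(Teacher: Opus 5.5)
Your argument for the first item is essentially the paper's. You guess witnesses for the existential quantifiers, unfold each guarded universal over the at most $2n$ single-feature partial instances, and evaluate the remaining Boolean combination using Theorem~\ref{theo:ptime-atomic} and the hypothesis on $\allpos$ and $\allneg$. Your explicit Skolem bookkeeping, with one witness per tuple of enclosing universal choices, spells out what the paper leaves implicit. The prenexing step is unnecessary: the grammar of the quantified layer only ever prefixes quantifiers to a rule-1 Boolean combination, so every formula there is already in that form.

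For the second item your proof is correct but takes a genuinely different, more direct route.
\begin{itemize}
\item The paper uses $\nmsr(x,y) := \exists z\,(\neg\sr(x,y) \vee [z\lnel y\wedge\sr(x,z)])$, which is equivalent to $\neg\msr(x,y)$. Lemma~\ref{lemma:maxp-hard} then proves that $\msr$ is coNP-hard over decision trees, by reduction from the same minimum-sufficient-reason problem of \cite{NEURIPS2020_b1adda14} that you use.
\item Because $\msr$ takes no bound as input, the paper must build a gadget: a chain of $k+1$ fresh features whose $1$-edges lead to a leaf, with a copy of the original tree on every $0$-branch. This makes the fixed candidate $\{1\}^{k+1}\cdot\{\bot\}^{\dm(\M')}$ a minimum abductive explanation iff no weak abductive explanation with at most $k$ defined features exists.
\item You instead carry the bound in a free variable. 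You use $\exists y\,(\sr(x,y)\wedge y\lel z)$, with $z$ any partial instance having exactly $k$ defined features. This formula is in the quantified layer, and the reduction leaves $(\T,\es)$ unchanged.
\end{itemize}
You only need to dispose of the trivial case $k\geq n$. There no such $z$ exists, but the answer is \textsc{Yes} because $\es$ explains itself. You can also drop the tentative first formula, since it plays no role.

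Your route is shorter and needs no construction. The paper's route yields a stronger fact of independent interest: verifying a natural explanation notion, the minimum abductive explanation, is already coNP-hard on decision trees. That lemma is reused, through the relativized $\rmsr$, in the $\bh_k$-hardness proof of Theorem~\ref{theo:thm:eval-ffoil}. There, the negation of your formula would serve equally well, since it is also an \ffoil\ formula.
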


\begin{proof}
	For the first item, let $\C$ be a class of models such that \textnormal{\logicEvaluation}$(\allpos(x), \C) \in \ptime$ and \textnormal{\logicEvaluation}$(\allneg(x), \C) \in \ptime$, and let $\varphi$ be a fixed formula from the quantified layer of \textnormal{\ffoil}. The algorithm is the following. For each existential quantifier we nondeterministically guess a partial instance as a polynomial-size witness. Each guarded universal quantifier ranges only over the set of partial instances with exactly one defined feature, whose size is linear in the dimension $n$. Since the formula is fixed, unfolding all guarded universal quantifiers yields only polynomially many cases (to be more precise, at most $n^c$ cases, where $c$ is the quantifier rank of $\varphi$). At every computation path of this process, we are left with a Boolean combination of formulas from the atomic layer, together with $\allpos(x)$ and $\allneg(x)$. Thanks to Theorem~\ref{theo:ptime-atomic} and to the hypothesis that \textnormal{\logicEvaluation}$(\allpos(x), \C) \in \ptime$ and \textnormal{\logicEvaluation}$(\allneg(x), \C) \in \ptime$, and considering that the formula $\varphi$ is fixed, we can do that evaluation in polynomial time.

	For the second item, we consider the following formula from the quantified layer of \textnormal{\ffoil}:
	$$\nmsr(x, y) \ := \ \exists z\ \big( \neg \sr(x, y) \vee [z \lnel y \wedge \sr(x, z)]  \big).$$
	$\nmsr$ defines the pairs $(\es_1,\es_2)$ such that the partial instance $\es_2$ is not a minimum abductive explanation for the instance $\es_1$, that is, it is logically equivalent to $\neg \msr(x, y)$. We conclude the proof using the following intermediate result, whose proof can be found in the appendix of this work (refer to Section~\ref{proof:maxp-hard}).

	\begin{lemma} \label{lemma:maxp-hard}
		\textnormal{\logicEvaluation}$(\msr(x,y), \dt)$ is $\conp$-hard.
	\end{lemma}


\end{proof}




\subsection{The \ffoil~logic}
\label{sec-fflogic}

We define $\ffoil$ as the logic obtained by taking Boolean combinations of formulas from the quantified layer. In particular, since we can negate quantified formulas, in this third layer we are allowed to use unguarded universal quantifiers. Nevertheless, notice that a necessary condition for a formula to have a valid syntax according to the $\ffoil$ logic is that alternations between unguarded quantifiers cannot occur.

We now provide a precise characterization of the complexity of the evaluation problem for $\ffoil$. More specifically, we establish that this problem can always be solved in the {\em Boolean Hierarchy over $\np$}
\citep{DBLP:conf/fct/Wechsung85,DBLP:journals/siamcomp/CaiGHHSWW88}, i.e., in the class consisting of Boolean combinations of $\np$ languages. In fact, we will show that the $\ffoil$ logic captures the entire Boolean Hierarchy.

For the following theorem, we denote the levels of the Boolean Hierarchy by $\bh_k$, and we denote by $\bh$ the Boolean Hierarchy consisting of all these levels.

\begin{theorem}
	\label{theo:thm:eval-ffoil}
	The following statements hold:
	\begin{enumerate}
		\item Let $\varphi$ be an \textnormal{\ffoil} formula. Then there exists a $k \geq 1$ such that, for every class of models $\C$ such that \textnormal{\logicEvaluation}$(\allpos(x), \C) \in \ptime$ and \textnormal{\logicEvaluation}$(\allneg(x), \C) \in \ptime$, it holds that \textnormal{\logicEvaluation}$(\varphi, \C) \in \bh_k$.
		\item For every $k \geq 1$, there exists an \textnormal{\ffoil} formula $\varphi$ such that \textnormal{\logicEvaluation}$(\varphi, \dt)$ is $\bh_k$-hard.
	\end{enumerate}
\end{theorem}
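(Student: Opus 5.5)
The upper bound follows by writing the fixed \ffoil\ formula $\varphi$ as a Boolean combination $\beta(\psi_1,\ldots,\psi_m)$ of quantified-layer formulas $\psi_1,\ldots,\psi_m$. By the first item of Theorem~\ref{theo:np-guarded}, under the hypothesis that \logicEvaluation$(\allpos(x),\C)$ and \logicEvaluation$(\allneg(x),\C)$ are in $\ptime$, each problem \logicEvaluation$(\psi_i,\C)$ is in $\np$. The input for $\psi_i$ is just the model together with the subtuple of partial instances assigned to its free variables, so each $\psi_i$ defines an $\np$ language $L_i$ over the common input $(\M,\es_1,\ldots,\es_k)$. Then \logicEvaluation$(\varphi,\C)$ is the language $\beta(L_1,\ldots,L_m)$, a Boolean combination of $m$ languages in $\np$.

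We then invoke the classical characterization of the Boolean hierarchy: the class of Boolean combinations of $\np$ languages coincides with $\bh$. More precisely, any Boolean combination of $m$ languages in $\np$ lies in $\bh_{2^m}$, for example by writing $\beta$ in a form that is a union of $\le 2^m$ differences of $\np$ sets, or via the normal form of Cai et al. Hence $k=2^m$ works, and it depends only on $\varphi$, not on $\C$. This is routine.

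For hardness, use the standard complete problems. $\bh_{2j}$ is complete for $D_j=\{(x_1,\ldots,x_{2j}) : \bigvee_{i\le j}(x_{2i-1}\in A \wedge x_{2i}\notin A)\}$ with $A$ an $\np$-complete set, and $\bh_{2j+1}$ is complete for the analogous language with an extra disjunct $x_{2j+1}\in A$. Take $A=$\logicEvaluation$(\nmsr,\dt)$, which is $\np$-hard by Lemma~\ref{lemma:maxp-hard} (complement of $\msr$) and in $\np$ by Theorem~\ref{theo:np-guarded}. So $A$ is $\np$-complete, and $\neg\nmsr$ expresses the complement.

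The main obstacle is packing several independent instances $(\T_i,\es_i,\es'_i)$ into a single decision tree and single tuple of partial instances, since a formula is evaluated on one model. The plan is to use a disjoint-feature composition. Take dimension $\sum n_i$ plus selector features, and build a tree that branches on a block of selector bits to the subtree $\T_i$ placed on its own block of features. The formula then uses extra free variables $u_i$ fixed to partial instances that define exactly the selector bits for block $i$ and nothing else. We relativize $\nmsr$ to block $i$ by replacing its quantified variables $z$ with ones satisfying $u_i\subseteq z$ and agreeing with $\bot$ outside block $i$. This relativization is definable in the atomic layer using extra parameters marking the block, such as a partial instance with $0$ exactly on block $i$ together with $\sf Rel$-style formulas.

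The main check is that $\allpos$ and $\allneg$ restricted to partial instances extending $u_i$ coincide with those of $\T_i$, and that the cardinality comparisons $\lnel$ shift uniformly by the fixed number of selector bits. Once this gadget is in place, $\varphi_k=\bigvee_i(\nmsr^{(2i-1)}\wedge\neg\nmsr^{(2i)})$, with the possible extra disjunct, is an \ffoil\ formula. The reduction from $D_j$ is polynomial, which gives $\bh_k$-hardness.
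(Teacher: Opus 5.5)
Your proposal is correct and follows the paper's proof. The upper bound is the same observation: a fixed Boolean combination of quantified-layer formulas, each in $\np$ by Theorem~\ref{theo:np-guarded}, lies in a fixed level of $\bh$ that does not depend on the class of models. The hardness uses the same construction: the decision trees for the $\np$-complete problem $\neg\msr$ are composed along a selector chain on disjoint feature blocks, a parameter fixing the selector bits relativizes the formula, and the pieces are combined according to a $\bh_k$-complete pattern.

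The differences are cosmetic. The paper uses the nested languages $\sat(k)$ rather than your union-of-differences form, and it only requires $w \subseteq z$ for the quantified variable instead of confining $z$ to the block. The paper also pads the full instance with $0$s outside the block so that $\full(x)$ holds; your proposal does not say how $x$ is embedded, and you should make this explicit. Finally, a union of up to $2^m$ differences of $\np$ sets gives $\bh_{2^{m+1}}$, not $\bh_{2^m}$; this is harmless, since any fixed $k$ suffices.
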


This result tells us that $\ffoil$ meets one of the fundamental criteria for an interpretability logic, namely that we can evaluate an $\ffoil$ formula over a tuple of partial instances in polynomial time with a polynomial number of calls to an $\np$ oracle. In fact, by definition of the Boolean hierarchy, the evaluation of a fixed $\ffoil$ formula can be done with a constant number of calls to an $\np$ oracle. Thus, we argue that the technology of $\sat$ solvers will allow us to tractably evaluate $\ffoil$ over classes that support consistency and validity checks in polynomial time. More precisely, Theorem~\ref{theo:thm:eval-ffoil} requires the class of models $\C$ to satisfy that \textnormal{\logicEvaluation}$(\allpos(x), \C) \in \ptime$ and \textnormal{\logicEvaluation}$(\allneg(x), \C) \in \ptime$ in order for $\ffoil$ to be able to tractably solve its evaluation problem over that class. This includes decision trees, but also richer representation classes like $\ddnnf$ circuits. Moreover, this includes fragments of the class of circuits corresponding to propositional formulas in conjunctive normal form ($\cnf$) whose satisfiability can be decided in polynomial time, such as the class of circuits corresponding to CNF formulas in which each clause contains at most two literals ($\tcnf$), and the class of circuits corresponding to Horn CNF formulas ($\horn$). We formally state these results in the following corollary.

\begin{corollary}
	\begin{sloppypar}
Let $\varphi$ be an \textnormal{\ffoil} formula. Then \textnormal{\logicEvaluation}$(\varphi, \ddnnf) \in \bh$,
\textnormal{\logicEvaluation}$(\varphi, \tcnf) \in \bh$, and
\textnormal{\logicEvaluation}$(\varphi, \horn) \in \bh$.
	\end{sloppypar}
\end{corollary}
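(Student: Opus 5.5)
The corollary follows from the first item of Theorem~\ref{theo:thm:eval-ffoil}. Fix an \ffoil\ formula $\varphi$ and let $k$ be the level given by that item. It then suffices to show, for each of the three classes $\C \in \{\ddnnf, \tcnf, \horn\}$, that both \logicEvaluation$(\allpos(x), \C)$ and \logicEvaluation$(\allneg(x), \C)$ are in \ptime. Once this is done, the theorem gives \logicEvaluation$(\varphi,\C) \in \bh_k \subseteq \bh$.

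The common reduction is conditioning. Given a model $\M$ and a partial instance $\es$, we substitute the constant $\es[i]$ for each variable $x_i$ with $\es[i] \neq \bot$; call the result $\M|_\es$. Then $\M \models \allpos(\es)$ iff $\M|_\es$ is valid, and $\M \models \allneg(\es)$ iff $\M|_\es$ is unsatisfiable. So for each class we need two things: that conditioning can be done in polynomial time while staying inside a class with tractable tests, and that the relevant validity and consistency tests are themselves polynomial.

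\textbf{$\ddnnf$.} Replacing variable gates by constants preserves determinism and decomposability, and each literal gate $\neg x_i$ is replaced by the complementary constant. Consistency of a $\ddnnf$ circuit is decidable in linear time by bottom-up propagation (a $\lor$-gate is satisfiable iff some input is, and by decomposability a $\land$-gate is satisfiable iff all inputs are). This settles $\allneg$. For $\allpos$ we use validity, which for a $\ddnnf$ circuit amounts to model counting: determinism and decomposability allow polynomial-time counting by summing at $\lor$-gates and multiplying at $\land$-gates. The circuit is valid iff the count over its remaining $m$ free variables equals $2^m$. One subtlety is that determinism does not make $\lor$-inputs mention the same variables, so we apply the standard smoothing step, or equivalently multiply by $2^{|\text{missing vars}|}$ at each child.

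\textbf{$\tcnf$ and $\horn$.} Conditioning a CNF deletes satisfied clauses and removes falsified literals, and the result stays in $\tcnf$, respectively $\horn$. $\allneg$ is then unsatisfiability of the conditioned formula, which is decidable in polynomial time for $\tcnf$ (implication-graph algorithm) and for $\horn$ (unit propagation). $\allpos$ is validity of a CNF, which is trivial for any CNF: it holds iff every remaining clause is tautological, i.e.\ contains complementary literals, or the formula has no remaining clauses, and this is checked syntactically in linear time.

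We expect the only real care to lie in the $\ddnnf$ validity check. The plan there is to rely on the known tractability of model counting on $\ddnnf$ \citep{DarwicheCompilation}, handling unsmooth $\lor$-gates by the missing-variable factor just described. Everything else is routine.
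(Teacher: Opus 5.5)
Your proposal is correct and follows the paper's route. In the paper, the corollary is an immediate instance of the first item of Theorem~\ref{theo:thm:eval-ffoil}, once one notes that $\allpos$ and $\allneg$ are polynomial-time checks over $\ddnnf$, $\tcnf$ and $\horn$. The paper asserts these tractability facts by appeal to knowledge compilation; you spell them out (conditioning, then bottom-up consistency and smoothed model counting for $\ddnnf$, and syntactic validity plus 2-SAT/Horn-SAT for the CNF classes), and those details are accurate.
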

In what follows, we provide a proof of Theorem \ref{theo:thm:eval-ffoil}.

\begin{proof}[Proof of Theorem \ref{theo:thm:eval-ffoil}]
	We consider languages over a finite alphabet $\Sigma$. First, we introduce Boolean operations between complexity classes as follows \citep{DBLP:conf/fct/Wechsung85}:
	\begin{enumerate}
		\item $A \vee B = \{L_A \cup L_B \ | \ L_A \in A \text{ and } L_B \in B \}$;
		\item $A \wedge B = \{L_A \cap L_B \ | \ L_A \in A \text{ and } L_B \in B \}$;
		\item $\rm{co}A = \{\overline{L} \ | \ L \in A\}$.
	\end{enumerate}
	Then, the Boolean Hierarchy $\bh$ is defined as the union $\cup_{k \geq 1} \bh_{k}$ \citep{DBLP:journals/siamcomp/CaiGHHSWW88}, where:
	\begin{enumerate}
		\item $\bh_{1} = \np$;
		\item $\bh_{2i} =  \bh_{2i-1} \wedge \conp$;
		\item $\bh_{2i + 1} =  \bh_{2i} \vee \np$.
	\end{enumerate}
	Note that $\conp \subseteq \bh_{2}$. In fact, let $L \in \conp$ and note that $L = \Sigma^\ast \cap L$, where $\Sigma^\ast \in \bh_{1}$.

	With this definition, every Boolean combination of $\np$ and $\conp$ languages is contained in $\bh_{k}$ for some positive integer $k$.

	For the first item of the theorem, let $\varphi$ be an \textnormal{\ffoil} formula. We know that $\varphi$ is a fixed Boolean combination of formulas from the quantified layer of \textnormal{\ffoil}. Now let $\C$ be a class of models such that \textnormal{\logicEvaluation}$(\allpos(x), \C) \in \ptime$ and \textnormal{\logicEvaluation}$(\allneg(x), \C) \in \ptime$. Thanks to the first part of Theorem~\ref{theo:np-guarded} we know that \logicEvaluation$(\psi, \C) \in \np$ for every formula $\psi$ from the quantified layer of \textnormal{\ffoil} that appears as a subformula of $\varphi$. This means that the evaluation problem \logicEvaluation$(\varphi, \C)$ corresponds to a fixed Boolean combination of languages in $\np$, and so it must be contained in $\bh_k$ for some $k \geq 1$. Notice that such a $k$ does depend on $\varphi$ but not on $\C$.

	We now turn our attention to the second item of the theorem. We will first describe a family of decision problems known to be complete for every level of the Boolean hierarchy. As usual, let us denote by $\sat$ the language of propositional formulas that are satisfiable, and by $\unsat$ the language of propositional formulas that are not satisfiable. For each $k \geq 1$ we define the language $\sat(k)$ recursively as follows:
	\begin{enumerate}
		\item $\sat(1) \ := \ \{(\varphi_1) \ \mid \ \varphi_1 \in \sat \}$;
		\item $\sat(2i) \ := \ \{(\varphi_1, \dots \varphi_{2i}) \ \mid \ (\varphi_1, \dots \varphi_{2i-1}) \in \sat(2i-1) \, \wedge \, \varphi_{2i} \in \unsat\}$;
		\item $\sat(2i+1) \ := \ \{(\varphi_1, \dots \varphi_{2i+1}) \ \mid \ (\varphi_1, \dots \varphi_{2i}) \in \sat(2i) \, \vee \, \varphi_{2i+1} \in \sat\}$.
	\end{enumerate}
	It is known that, for every $k \geq 1$, $\sat(k)$ is $\bh_k$-complete
	\citep{DBLP:journals/siamcomp/CaiGHHSWW88}. We will now fix a $k \geq 1$ and prove that there exists an $\ffoil$ formula $\varphi_k$ such that $\sat(k)$ can be reduced in polynomial time to \logicEvaluation$(\varphi_k, \dt)$, thus concluding the hardness item of the theorem.

	We know from Lemma~\ref{lemma:maxp-hard} that the following language is $\np$-hard:
	$$L = \{ (\T, \es, \es') \ \mid \ \T \text{ is a decision tree, } \es \text{ and } \es'
	\text{ are partial instances and } \T \models \neg\msr(\es, \es')\}.$$
	Also, because the class of decision trees satisfies the hypothesis of the first item of Theorem~\ref{theo:np-guarded}, we actually know that $L$ is $\np$-complete. Hence, we have a polynomial-time algorithm that, given a propositional formula $\psi$, constructs a decision tree $\T_\psi$ and partial instances $\es_{\psi}$, $\es_{\psi}'$ such that:
	\begin{equation} \label{sat_reduction}
	\psi \in \sat \quad \iff \quad \T_\psi \models \neg \msr(\es_{\psi}, \es_{\psi}').
	\end{equation}

	Let $(\psi_1, \ldots, \psi_k)$ be a tuple of $k$ propositional formulas, and assume that, for each $i \in \{1,\dots,k\}$, the decision tree $\T_{\psi_i}$ has dimension $n_i$. Then a decision tree $\T$ of dimension $d = k + \sum_{\ell=1}^k{n_\ell}$ is defined as follows:
	\begin{center}
		\begin{tikzpicture}
			\node[circle,draw=black] (c1) {$1$};
			\node[below left = 6mm and 6mm of c1] (tc1) {$\T_{\psi_1}$};
			\node[circle,draw=black,below right = 6mm and 6mm of c1] (c2) {$2$};
			\node[below left = 6mm and 6mm of c2] (tc2) {$\T_{\psi_2}$};
			\node[circle,draw=black,below right = 6mm and 6mm of c2] (c3) {$3$};
			\node[below left = 6mm and 6mm of c3] (tc3) {$\T_{\psi_3}$};
			\node[below right = 6mm and 6mm of c3] (d) {$\cdots$};
			\node[circle,draw=black,below right = 6mm and 6mm of d] (cn) {$k$};
			\node[below left = 6mm and 6mm of cn] (tcn) {$\T_{\psi_k}$};
			\node[circle,draw=black,below right = 6mm and 6mm of cn, minimum size=8mm] (o) {$\true$};

			\path[arrout] (c1) edge node[above] {$1$} (tc1);
			\path[arrout] (c1) edge node[above] {$0$} (c2);
			\path[arrout] (c2) edge node[above] {$1$} (tc2);
			\path[arrout] (c2) edge node[above] {$0$} (c3);
			\path[arrout] (c3) edge node[above] {$1$} (tc3);
			\path[arrout] (c3) edge node[above] {$0$} (d);
			\path[arrout] (d) edge node[above] {$0$} (cn);
			\path[arrout] (cn) edge node[above] {$1$} (tcn);
			\path[arrout] (cn) edge node[above] {$0$} (o);
		\end{tikzpicture}
	\end{center}
	where, for each $i \in \{1, \dots, k\}$, $\T_{\psi_i}$ mentions the features:
	$$B_i \ := \ \{s_i +1,\, s_i+2,\, \dots s_i+n_i\} \qquad \text{where } s_i = k + \sum_{\ell=1}^{i-1}{n_\ell}.$$
	This way, we ensure that, for every $i \neq j$, $\T_{\psi_i}$ and
	$\T_{\psi_j}$ are defined over disjoint sets of features. Moreover, define the following partial instances of dimension $d$:
	\begin{itemize}
		\item For each $i \in \{1, \ldots, k\}$, the partial instance
		$\es_i$ is defined as $\{0\}^{i-1} \cdot \{1\} \cdot \{0\}^{k-i} \cdot \{\bot\}^{d-k}$.
		\item For each $i \in \{1, \ldots, k\}$, the partial instances
		$\es_{i,1}$ and $\es_{i,2}$ are defined as follows:
		\begin{itemize}
			\item $\es_{i,1}[i] = \es_{i,2}[i] = 1$;
			\item $\es_{i,1}[j] = \es_{i,2}[j] = 0$ for
			every $j \in \{1, \ldots, k\} \setminus \{i\}$;
			\item $\es_{i,1}[j] = \es_{\psi_i}[j - s_i]$ for every $j \in B_i$;
			\item $\es_{i,2}[j] = \es_{\psi_i}'[j - s_i]$ for every $j \in B_i$;
			\item $\es_{i,1}[j] = 0$ for every $j \in \{k+1, \ldots, d\} \setminus B_i$;
			\item $\es_{i,2}[j] = \bot$ for every $j \in \{k+1, \ldots, d\} \setminus B_i$.
		\end{itemize}
	\end{itemize}
	Moreover, consider the following $\ffoil$ formulas:
	\begin{eqnarray*}
		\rsr(x,y,w) &:=& w \subseteq x \wedge w \subseteq y \wedge \full(x) \wedge y \subseteq x \wedge  (\allpos(x) \to \allpos(y)) \wedge (\allneg(x) \to \allneg(y)); \\
		\rmsr(x,y,w) &:=& \rsr(x,y,w) \ \wedge \ \neg \exists z \, \big(w \subseteq z \, \wedge \, \rsr(x,z,w) \, \wedge \, z \lnel y\big).
	\end{eqnarray*}
	These formulas should be interpreted as the usual predicates, but relativized to one branch of $\T$. More concretely, the partial instance $w$ will serve to select in which of the branches $\T_{\psi_1}, \T_{\psi_2}, \dots, \T_{\psi_k}$ we will look at.

	Notice that the decision tree $\T$ and the partial instances
	$\es_1$, $\ldots$, $\es_k$, $\es_{1,1}$, $\es_{1,2}$, $\ldots$,
	$\es_{k,1}$, $\es_{k,2}$ can be constructed in polynomial time in
	the size of $(\psi_1, \ldots, \psi_k)$. Besides, from the
	definition of these elements, for every $i \in \{1, \ldots, k\}$ it holds that
	\begin{equation} \label{lem:bh-foilpm}
		\T_{\psi_i} \models\ \msr(\es_{\psi_i}, \es_{\psi_i}') \quad \iff \quad \T \models\ \rmsr(\es_{i,1}, \es_{i, 2}, \es_i).
	\end{equation}
	Finally, let $\varphi_k$ be the $\ffoil$ formula obtained by
	constructing the following sequences of formulas $\alpha_1, \dots, \alpha_k$, and then defining $\varphi_k(x_{1,1}, x_{1,2},
	x_1, \ldots, x_{k,1}, x_{k,2}, x_k) := \alpha_k(x_{1,1}, x_{1,2},
	x_1, \ldots, x_{k,1}, x_{k,2}, x_k)$:
	\begin{eqnarray*}
		\alpha_1 &:=& \neg \rmsr(x_{1,1}, x_{1, 2}, x_1);\\
		\alpha_{2 \ell} &:=& (\alpha_{2 \ell - 1} \wedge \rmsr(x_{2\ell,1}, x_{2\ell, 2}, x_{2\ell})); \\
		\alpha_{2 \ell + 1} &:=& (\alpha_{2 \ell} \vee \neg \rmsr(x_{2\ell+1,1}, x_{2\ell+1, 2}, x_{2\ell+1})).
	\end{eqnarray*}
	For example, we have that:
	\begin{eqnarray*}
		\alpha_{2} & = & (\neg \rmsr(x_{1,1}, x_{1, 2}, x_1) \wedge \rmsr(x_{2,1}, x_{2, 2}, x_{2})) \\
		\alpha_{3} & = & (\neg \rmsr(x_{1,1}, x_{1, 2}, x_1) \wedge \rmsr(x_{2,1}, x_{2, 2}, x_{2})) \vee \neg \rmsr(x_{3,1}, x_{3, 2}, x_{3})
	\end{eqnarray*}
	Combining conditions \eqref{sat_reduction} and \eqref{lem:bh-foilpm} with the definition $\varphi_k(x_{1,1}, x_{1,2}, x_1, \ldots, x_{k,1}, x_{k,2}, x_k)$, we conclude that $(\psi_1, \ldots, \psi_k) \in \sat(k)$ if and only if $\T \models \varphi_k(\es_{1,1}, \es_{1,2}, \es_1,
	\ldots, \es_{k,1}, \es_{k,2}, \es_k)$. Given that $\sat(k)$ is $\bh_k$-complete and that the
	decision tree $\T$ and the partial instances $\es_{1,1}$, $\es_{1,2}$,
	$\es_1$, $\ldots$, $\es_{k,1}$, $\es_{k,2}$, $\es_k$ can be
	constructed in polynomial time in the size of $(\psi_1, \ldots,
	\psi_k)$, we conclude that \logicEvaluation$(\varphi_k, \dt)$ is $\bh_k$-hard. This completes the proof of the theorem.
\end{proof}

\subsection{On the expressiveness of \texorpdfstring{$\ffoil$}{Lg}}
\label{expressive-f-foil}
$\ffoil$ allows us to express in a simple way the basic notions of explainability studied in this paper. Moreover, its evaluation problem is tractable given access to $\sat$ solvers.
\autoref{fig:formulas} shows how all the queries defined in Section~\ref{sec:queries} can be expressed in the $\ffoil$ logic. Just for clarity we use unguarded universal quantifiers, which are not allowed according to the $\ffoil$ syntax, because in these cases they can be rewritten as negations of unguarded existential quantifiers, as we discussed in Section~\ref{sec-fflogic}. We also make use of some auxiliary predicates defined in the appendix (refer to Section~\ref{app:def-aux}).

Probably the most complicated formula of this section is the one used to express the query of relevant feature $\rfeat(x, y)$. The idea there is to guess a weak abductive explanation $w$ containing the assigned feature under consideration and verify that undefining the feature makes $w$ lose the property of weak abductiveness. In fact, suppose first that $y$ is indeed a relevant feature for $x$. Then some abductive explanation $w$ contains $y$. Because $w$ is minimal, undefining $y$ from $w$ cannot produce another weak abductive explanation, so the witness pair $w, z$ exists. Conversely, if $y$ is not a relevant feature for $x$, then it cannot be contained in any minimal abductive explanation for $x$. Assume toward a contradiction that there exist witnesses $w$ and $z$ for the formula, where $w$ is a weak abductive explanation containing $y$, $z$ is obtained from $w$ by undefining $y$, and $z$ is not a weak abductive explanation. Now let $u \subseteq w$ be an abductive explanation. As $y$ is not relevant, $u$ cannot contain $y$, and therefore $u \subseteq z$. Since every completion of $z$ is also a completion of $u$, it would follow that $z$ is a weak abductive explanation, a contradiction. Hence no such witness pair $w, z$ can exist.


	\begin{figure}[ht!]
		\normalsize
			\begin{align*}
				\sr(x, y) = \ & \full(x) \wedge y \subseteq x \wedge (\allpos(x) \to \allpos(y)) \wedge (\allneg(x) \to \allneg(y))\\[\ecrowlen]
				\minsr(x, y) = \ & \sr(x, y) \wedge \forall z\ [z \subset y \to \neg \sr(x, z)]\\[\ecrowlen]
				\msr(x, y) = \ & \sr(x, y) \wedge \forall z\ [z \lnel y \to \neg \sr(x, z)]\\[\ecrowlen]
				\wcx(x, y) = \ & \full(x) \wedge y \subseteq x \wedge (\allpos(x) \to \neg \allpos(y)) \wedge (\allneg(x) \to \neg \allneg(y))\\[\ecrowlen]
				\cx(x, y) = \ & \wcx(x, y) \wedge \forall z\ [y \subset z \to \neg \wcx(x, z)]\\[\ecrowlen]
				\mcx(x, y) = \ & \wcx(x, y) \wedge \forall z\ [y \lnel z \to \neg \wcx(x, z)]\\[\ecrowlen]
				\mcr(x, y) = \ & \full(x) \wedge \full(y) \wedge \neg (\allpos(x) \leftrightarrow \allpos(y)) \ \wedge \\
				& \hspace{10.1em}  \forall z\ \big( [\full(z) \wedge \neg (\allpos(x) \leftrightarrow \allpos(z))] \to \leh(x, y, z) \big) \\[\ecrowlen]
				\mca(x, y) = \ & \full(x) \wedge \full(y) \wedge (\allpos(x) \leftrightarrow \allpos(y)) \ \wedge\\
				& \hspace{10.1em} \forall z\ \big( [\full(z) \wedge (\allpos(x) \leftrightarrow \allpos(z))] \to \leh(x, z, y) \big) \\[\ecrowlen]
				\nfeat(x, y) = \ &  {\sf Single}(y) \wedge \full(x) \wedge y \subseteq x \wedge \neg \exists z \ [\sr(x, z) \wedge \neg (y \subseteq z)]\\[\ecrowlen]
				\rfeat(x, y) = \ &  {\sf Single}(y) \wedge \full(x) \wedge y \subseteq x \wedge \exists w, z\ [\sr(x, w) \wedge \add(z, y, w) \wedge \neg \sr(x, z)]
			\end{align*}
		\caption[Formulas that express all queries in Subsection \ref{sec:queries} using $\ffoil$.]{Formulas that express all queries in Section \ref{sec:queries} using $\ffoil$.}
		\label{fig:formulas}
		\Description[ExplAIner formulas for Explainability queries.]
			{The figure lists ExplAIner formulas for the explainability queries defined in Section 2.2: weak abductive explanation, abductive explanation, minimum abductive explanation, weak contrastive explanation, contrastive explanation, maximum contrastive explanation, minimum change required, maximum change allowed, necessary feature, and relevant feature.}
	\end{figure}

It should be noted that if $\C$ is a class of models such that \textnormal{\logicEvaluation}$(\allpos(x), \C) \in \ptime$ but \textnormal{\logicEvaluation}$(\allneg(x), \C) \not\in \ptime$ (under standard complexity-theoretic assumptions),
then we still have that the problem \textnormal{\logicEvaluation}$(\psi, \C)$ is in the Boolean hierarchy for the $\ffoil$ formulas $\psi$ that do not mention the predicate $\allneg(x)$. Hence, the evaluation problem for this restricted fragment is still in the Boolean hierarchy, thus satisfying our criteria for an interpretability logic. This is the case, for example, for $\cnf$ formulas, for which validity checks can be done in polynomial time but checking unsatisfiability is $\conp$-complete. In this case, we can still express queries such as minimum change required and maximum change allowed, since they can be expressed in $\ffoil$ without mentioning the predicate $\allneg(x)$. Similarly, if $\C$ is a class of models such that \textnormal{\logicEvaluation}$(\allneg(x), \C) \in \ptime$ but \textnormal{\logicEvaluation}$(\allpos(x), \C) \not\in \ptime$ (for example, $\dnf$ formulas), then we still have that the evaluation problem is in the Boolean hierarchy for $\ffoil$ formulas $\psi$ that do not mention the predicate $\allpos(x)$.

Despite all the virtues of $\ffoil$, unfortunately it is in general not able to solve computation problems efficiently. In fact, note that the ability to tractably evaluate queries over concrete partial instances does not directly imply that positive answers to the query can be constructed efficiently. We devote the rest of the paper to addressing this problem and propose a third logic that resolves it.

%% file: new-sections/opt-foil.tex
Given an $\ffoil$ formula $\varphi(x, u_1, \ldots, u_k)$, we use the notation $\varphi[u_1, \ldots, u_k](x)$ to indicate that $x$ is a
distinguished variable and $u_1, \ldots, u_k$ are parameters that
define the possible values for $x$. In general, we use this syntax
when $x$ stores an explanation given an assignment for the
variables $u_1$, $\ldots$, $u_\ell$. For example, we write
$\varphi[u](x) = \axp(u, x)$ to indicate that $x$ is an abductive explanation given an assignment for the variable $u$ (that is, $x$ is an abductive explanation for $u$).

For each query $\varphi[x_1, \ldots, x_k](x)$ in $\ffoil$ and $\C$ a class of models, we define the computation problem $\comp(\varphi, \C)$ as follows:

\begin{center}
	\fbox{\begin{tabular}{rl}
			{\sc Problem:} & $\comp(\varphi[x_1, \ldots, x_k](x),\, \C)$\\
			{\sc Input:} & A model $\M \in \C$ and partial instances $\es_1, \ldots, \es_k$\\
			{\sc Output:} & Partial instance $\es$ such that $\M \models \varphi[\es_1, \ldots, \es_k](\es)$, \\
			& and \textsc{No} if no such partial instance exists
	\end{tabular}}
\end{center}

\subsection{The computational drawback of~\ffoil}
\label{sec-qdtfoil}

We proved in Section~\ref{sec-fflogic} that the $\ffoil$ logic admits tractable evaluations over adequate classes of models, which allows us to check if a partial instance is an answer for some explainability query. The next step in the study of $\ffoil$ is to establish the complexity of actually computing such answers. Unfortunately, the following result tells us that this problem cannot be solved with a polynomial number of calls to an $\np$ oracle, showing an important limitation of $\ffoil$.

\begin{theorem}
	\label{thm:comp-ffoil}
	There exists an \textnormal{\ffoil} formula $\varphi[y, z](x)$ such that \textnormal{$\comp(\varphi[y, z](x),\, \dt) \not \in \fptime^{\np}$} unless $\ph$ collapses to $\ptime^{\np}$.
\end{theorem}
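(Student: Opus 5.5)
The plan is to find a fixed \ffoil\ formula $\varphi[y,z](x)$ with the following property: deciding whether \emph{some} $x$ satisfies $\varphi[\es_1,\es_2](x)$ is $\stp$-hard over decision trees. If $\comp(\varphi[y,z](x),\dt)$ were in $\fptime^{\np}$, we could run that algorithm and then check the returned answer. Since the answer is either \textsc{No} or a witness, this decides the existence question in $\ptime^{\np}$. We would obtain $\stp \subseteq \ptime^{\np}$, and since $\ptime^{\np}$ is closed under complement, also $\Pi_2^{\rm P}\subseteq \ptime^{\np}$. Hence $\ph$ collapses to $\ptime^{\np}=\Delta_2^{\rm P}$.

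For the formula I will reuse the atomic-layer gadgets from the proof of Theorem~\ref{thm:eval-folistar}: ${\sf MaxRel}$ and the join $x\sqcup u = t$. These are first-order over $\{\subseteq\}$, hence atomic formulas of \ffoil. I would take
\[
\varphi[y,z](x) \ := \ {\sf MaxRel}(y,x)\ \wedge\ \neg\exists u\,\exists t\,\big({\sf MaxRel}(z,u)\wedge x\sqcup u = t\wedge \allneg(t)\big).
\]
The existential part is in the quantified layer, because it is an existential quantification of a Boolean combination of atomic formulas and $\allneg$. Its negation, conjoined with an atomic formula, is therefore a legal \ffoil\ formula; there is no forbidden alternation.

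Given a QBF $\exists P_1\forall P_2\exists P_3\,\T$ with $\T$ a decision tree, set $\es_1$ to be $1$ on $P_1$ and $\bot$ elsewhere, and define $\es_2$ analogously for $P_2$, exactly as in the proof of Theorem~\ref{thm:eval-folistar}. Then $\T\models\varphi[\es_1,\es_2](\es)$ holds iff $\es$ assigns exactly the features of $P_1$ and, for every assignment $\es'$ to $P_2$, the combined partial instance $\es\sqcup\es'$ is not $\allneg$. The latter means that some completion over $P_3$ is accepted by $\T$. So a witness exists iff the QBF is true. The correctness argument is the same two-direction argument as in the $k=1$ case of Theorem~\ref{thm:eval-folistar}, with $\allneg$ playing the role of the innermost existential block.

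The main obstacle is showing that $\Sigma_3$-\textsc{QBF}$(\dt)$ (pattern $\exists\forall\exists$) is $\stp$-hard. Lemma~\ref{lemma:qbf} only addresses the patterns $\Sigma_{k+1}$ with $k$ odd. I would prove this hardness by adapting the proof of that lemma: reduce from $\exists\forall$-3DNF, with the DNF's terms indexed by the values of a fresh block $P_3$ of selector variables. The decision tree first reads the $P_3$ selector bits to pick a term, and then checks that term's at most three literals on a single short path. This gives a polynomial-size tree $\T$ such that $\exists P_3\,\T$ is equivalent to the DNF, and consequently $\exists P_1\forall P_2\exists P_3\,\T$ is equivalent to the $\stp$-complete $\exists\forall$-DNF instance. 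One detail needs care: the selector encoding must use $O(\log m)$ bits per term index, or one-hot indexing under a single tree spine, so that the tree stays polynomial. If this direct construction runs into trouble, I would instead extract the needed hardness from the construction underlying Lemma~\ref{lemma:qbf} by dropping its innermost universal block.
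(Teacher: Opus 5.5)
Your proposal is correct and follows essentially the same route as the paper. The paper also reduces $\exists\forall$-DNF to the existence question for a formula of the form ${\sf MaxRel}\wedge\neg\exists(\ldots\wedge\allneg(\cdot))$, using a spine of term-selector features $h_1,\dots,h_n$ (the first $h_i$ set to $0$ selects term $i$) as the innermost existential block, which is your one-hot option. The remaining difference is cosmetic: the paper quantifies over $w$ with $x\subseteq w$ and the same defined features as a parameter covering both $\mathbf{p}$ and $\mathbf{q}$, whereas you use the join $x\sqcup u$ with $u$ ranging over assignments to $P_2$.
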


\begin{proof}
	Consider the following \textnormal{\ffoil} formula:
	$$\varphi(x, y, z) \ := \ {\sf MaxRel}(x, y) \wedge \neg \exists w \ [{\sf MaxRel}(w, z) \wedge x \subseteq w \wedge \allneg(w)],$$
	where ${\sf MaxRel}(x,y)$, defined during the proof of Theorem~\ref{thm:eval-folistar}, is a formula from the atomic layer of \textnormal{\ffoil} such that $\M \models {\sf MaxRel}(\es,\es')$ if and only if $\es_\bot = \es'_\bot$, i.e., the sets of undefined features in $\es$ and $\es'$ are the same. Now consider the formula $\varphi(y, z) := \exists x\ \varphi(x, y, z)$, which is a \textnormal{\foil} formula with predicates $\{\subseteq, \lel, \allpos, \allneg\}$. We will show that \logicEvaluation$(\varphi, \dt)$ is $\np^{\np}$-hard by a Karp reduction from the following well-known $\np^{\np}$-complete problem (see \cite{arora2006computational} for a reference): given a propositional formula $\alpha(\textbf{p}, \textbf{q})$ in $\dnf$, where $\textbf{p}$ and $\textbf{q}$ are sets of variables, decide if the quantified propositional formula $\exists \textbf{p} \forall \textbf{q}\ \alpha(\textbf{p}, \textbf{q})$ is true. We will describe a polynomial-time reduction that constructs a decision tree $\T_{\alpha}$ and partial instances $\es_{\alpha}, \es_{\alpha}'$ such that $\exists \textbf{p} \forall \textbf{q}\ \alpha(\textbf{p}, \textbf{q})$ is true if and only if $\T_{\alpha} \models \varphi(\es_{\alpha}, \es_{\alpha}')$.

	Let $n$ be the number of terms in $\alpha$. We construct a decision tree $\T_{\alpha}$ of dimension $n + |\textbf{p}| + |\textbf{q}|$. We partition the features of $\T_{\alpha}$ into three consecutive blocks $H$, $E$ and $U$, where $|H| = n$, $|E| = |\textbf{p}|$, and $|U| = |\textbf{q}|$. The block $H$ will be used to select a term from $\alpha$, while $E$ and $U$ will encode truth assignments to the existential $\textbf{p}$ and to the universal variables $\textbf{q}$, respectively.

	\begin{figure}[ht!]
		\begin{center}
			\includegraphics[width=0.6\textwidth]{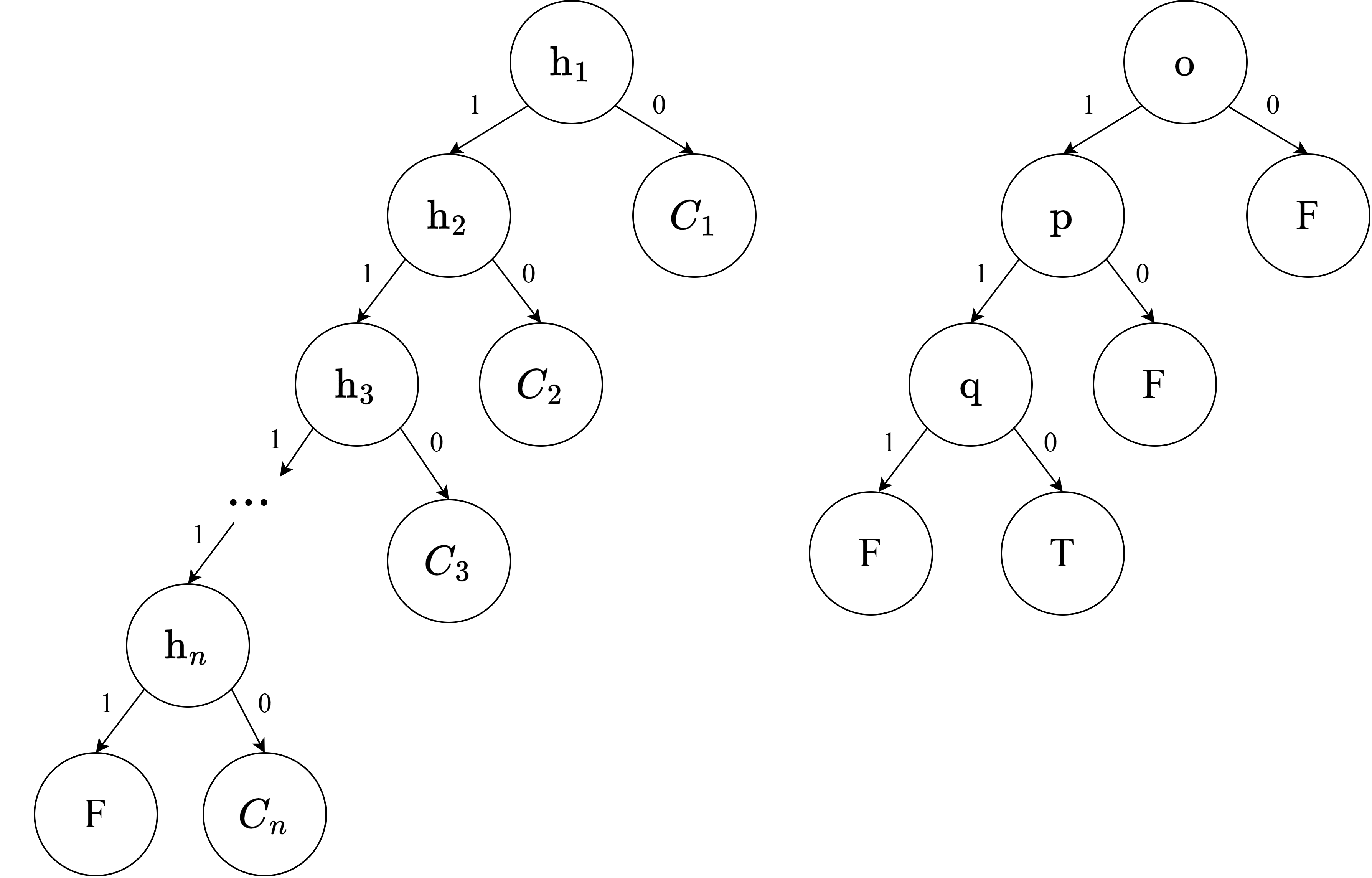}
			\caption{Construction used in the proof of Theorem~\ref{thm:comp-ffoil}. On the left, the decision tree $\T_{\alpha}$. On the right, the gadget for the term $\textnormal{o} \wedge \textnormal{p} \wedge \neg \textnormal{q}$.}
			\label{fig:diagramcomp}
			\Description[Illustration of the construction of the decision tree T_alpha from a DNF formula, together with the gadget for one term.]
			{The figure has two panels. In the left panel, the tree starts at node h_1. Edge 0 leads to the term gadget C_1, while edge 1 continues to h_2. The same pattern repeats: from h_2, edge 0 leads to C_2 and edge 1 continues to h_3, and so on until h_n, where edge 0 leads to C_n and edge 1 leads to a false leaf. Thus, the h-variables are used to select one term gadget. In the right panel, the gadget for the term o and p and not q is shown. The root is o. Edge 0 leads to a false leaf and edge 1 leads to p. From p, edge 0 leads to a false leaf and edge 1 leads to q. From q, edge 0 leads to a true leaf and edge 1 leads to a false leaf. Therefore, the gadget accepts exactly the assignments that satisfy the term o and p and not q.}
		\end{center}
	\end{figure}

	For every term $h_r$ ($1 \leq r \leq n$) we construct a decision tree $C_r$ over $E$ and $U$ features in such a way that an input to that tree encoding a truth assignment reaches a $\true$ leaf if and only if the term $h_r$ evaluated over that assignment is $\true$.
	We now explain how to construct $\T_{\alpha}$. We use the feature corresponding to term $h_{1}$ as the root. For every $i < n$, the outgoing edge of $h_i$ labeled by $1$ is connected to $h_{i+1}$, and the outgoing edge of $h_n$ labeled by $1$ is connected to a $\false$ leaf. Also, for every $i$, we connect the outgoing edge labeled by $0$ of $h_i$ to a copy of the tree $C_i$.
	An example is shown in \autoref{fig:diagramcomp}.

	Set $\es_{\alpha} = \{\bot\}^{n} \cdot \{0\}^{|\textbf{p}|} \cdot \{\bot\}^{|\textbf{q}|}$ and $\es_{\alpha}' = \{\bot\}^{n} \cdot \{0\}^{|\textbf{p}|+|\textbf{q}|}$. We now show that the reduction is correct.

	First suppose that $\exists \textbf{p} \forall \textbf{q}\ \alpha(\textbf{p}, \textbf{q})$ is true. Let $\nu$ be a truth assignment for the variables $\textbf{p}$ such that for every truth assignment $\sigma$ for the variables $\textbf{q}$ it holds that $\alpha(\nu(\textbf{p}), \sigma(\textbf{q}))$ is true. Let $\es_{\alpha}''$ be a partial instance with just its $E$ features defined according to $\nu$. Notice that $\es_{\alpha}''$ and $\es_{\alpha}$ have the same defined features. We claim that $\varphi(\es_{\alpha}'', \es_{\alpha}, \es_{\alpha}')$. In fact, let $\es_{\alpha}'''$ be a partial instance with the same defined features as $\es_{\alpha}'$ and such that $\es_{\alpha}'' \subseteq \es_{\alpha}'''$. Notice that $\es_{\alpha}'''$ naturally encodes a truth assignment $\sigma$ for the variables $\textbf{q}$ together with $\nu$. By taking a completion of $\es_{\alpha}'''$ that has a $0$ in the feature corresponding to a true term under the truth assignment $(\nu, \sigma)$ and a $1$ in the features corresponding to all previous terms, we can see that $\neg \allneg(\es_{\alpha}''')$. This shows that $\varphi(\es_{\alpha}'', \es_{\alpha}, \es_{\alpha}')$, and therefore $\T_{\alpha} \models \varphi(\es_{\alpha}, \es_{\alpha}')$.

	Now suppose that $\T_{\alpha} \models \varphi(\es_{\alpha}, \es_{\alpha}')$, so that there exists a partial instance $\es_{\alpha}''$ such that $\varphi(\es_{\alpha}'', \es_{\alpha}, \es_{\alpha}')$. Because $\es_{\alpha}''$ has the same defined features as $\es_{\alpha}$, we can define a truth assignment $\nu$ for the variables $\textbf{p}$ according to $\es_{\alpha}''$. Now let $\sigma$ be any truth assignment for the variables $\textbf{q}$. We claim that $\alpha(\nu(\textbf{p}), \sigma(\textbf{q}))$ is true. In fact, let $\es_{\alpha}'''$ be the partial instance with the same defined features as $\es_{\alpha}'$ and which corresponds to the pair $(\nu, \sigma)$. Because we have that $\varphi(\es_{\alpha}'', \es_{\alpha}, \es_{\alpha}')$, it must be the case that $\neg \allneg(\es_{\alpha}''')$. That means that there exists a completion of $\es_{\alpha}'''$ that is evaluated as $\true$ by $\T_{\alpha}$. That necessarily means that there is a term in $\alpha(\nu(\textbf{p}), \sigma(\textbf{q}))$ that is being satisfied. Hence $\exists \textbf{p} \forall \textbf{q}\ \alpha(\textbf{p}, \textbf{q})$ is true.

	This reduction shows that \logicEvaluation$(\varphi, \dt)$ is $\np^{\np}$-hard. To conclude the proof of the theorem, assume for the sake of contradiction that \textnormal{$\comp(\varphi[y, z](x),\, \dt) \in \fptime^{\np}$}. Then, it is clear that we would have \logicEvaluation$(\varphi, \dt) \in \ptime^{\np}$. Finally, $\ptime^{\np} = \np^{\np}$ implies that $\ph = \ptime^{\np}$.

\end{proof}

To solve the problem that Theorem~\ref{thm:comp-ffoil} signifies, we now propose $\optfoil$, a logic that is defined by introducing a minimality operator over a subset of $\ffoil$. As we will show in the next section, $\optfoil$ meets all the criteria for an appropriate interpretability logic.

\subsection{The~\optfoil~logic}
Our aim is to capture the right subset of $\ffoil$ that meets all the criteria for an interpretability logic. For this, we will define a third logic called $\optfoil$. We will show that the computation problem for this logic can be solved in polynomial time with a polynomial number of calls to an $\np$ oracle.

We will say that a formula $\rho(x, y, v_1, \ldots, v_\ell)$ from the atomic layer of $\ffoil$ represents a \emph{strict partial order} if, for every natural number $n$ and assignment of partial instances of dimension $n$ to the variables $v_1$, $\ldots$, $v_\ell$, the resulting binary relation over the variables $x$ and $y$ is a strict partial order over the partial instances of dimension $n$. Formally, $\rho(x, y, v_1, \ldots,
v_\ell)$ represents a strict partial order if, for every $n \in \mathbb{N}$,
$$\mathfrak{B}_n \models \ \forall v_1 \cdots \forall v_\ell \, \big[ \forall x \, \neg \rho(x, x, v_1, \ldots, v_\ell) \wedge
\forall x \forall y \forall z \,\big(
(\rho(x, y, v_1, \ldots, v_\ell) \wedge \rho(y, z, v_1, \ldots, v_\ell)) \to \rho(x, z, v_1, \ldots, v_\ell)\big)\big]. $$

The variables $v_1$, $\ldots$, $v_\ell$ in the formula $\rho(x, y, v_1, \ldots, v_\ell)$ are considered as parameters that define a strict partial order. In fact, different assignments for these variables can give rise to different orders. Hence, we use the notation $\rho[v_1, \ldots, v_\ell](x, y)$ to make explicit the distinction between the parameters $v_1$,
$\ldots$, $v_\ell$ and the variables $x$, $y$ that are instantiated with partial instances. For example, the strict partial order determined by the subsumption relation is defined by the formula $\rho_1(x, y) = x \subset y$.

As a second example, consider the case where a certain feature must be disregarded when defining an order on partial instances (for instance, it is often undesirable to use the feature {\em gender} for comparisons). Such an order can be defined as follows. Notice that, with the appropriate values for the variables $v_1$ and $v_2$, the following formula checks whether the $i$-th feature of $x$ is undefined:
$$\nf(x, v_1, v_2) \ := \ \neg(v_1 \subseteq x) \wedge \neg(v_2 \subseteq x).$$
For example, if we are considering partial instances of dimension $5$ and we
need to check whether instance $x$ has value $\bot$ in the first feature, then
we can use the values $c_1 = (0, \bot, \bot, \bot, \bot)$ and
$c_2 = (1, \bot, \bot, \bot, \bot)$ for the variables $v_1$ and $v_2$,
respectively. Moreover, let
$$\pred(x, y) \ := \ x \subset y \wedge \neg \exists z \, (x \subset
z \wedge z \subset y)$$
be a formula that checks whether $x$ is a predecessor of $y$ under the
order $\subset$. Now define
$$\mathsf{Strip}[v_1,v_2](x, y) \ := \ (\nf(x,v_1,v_2) \wedge x=y) \ \vee \ (\neg \nf(x,v_1,v_2) \wedge \pred(y,x) \wedge \nf(y,v_1,v_2)).$$
Notice that, with the appropriate values for $v_1$ and $v_2$, $\mathsf{Strip}[v_1,v_2](x,y)$ holds if and only if $y$ is obtained from $x$ by undefining the distinguished feature when necessary.
Then, the following formula defines a strict partial order based on $\subset$ but that disregards the $i$-th feature when comparing partial instances:
$$\rho_2[v_1,v_2](x,y) \ := \ \exists x' \exists y' \ \bigl(\mathsf{Strip}[v_1,v_2](x,x')
	\, \wedge \, \mathsf{Strip}[v_1,v_2](y,y') \, \wedge \, x' \subset y' \bigr).$$

For example, $\rho_2[c_1, c_2](x, y)$ with constants $c_1$ and $c_2$
mentioned above defines a strict partial order that disregards the first
feature when comparing partial instances of dimension $5$.

Formulas from the atomic layer of $\ffoil$ representing strict partial orders will be
used in the definition of $\optfoil$. Hence, it is necessary to have an algorithm that verifies whether this condition is satisfied in order to have a decidable syntax for $\optfoil$. We will now prove that such an algorithm exists.

\begin{proposition}\label{prop-strict-po}
The problem of verifying, given a formula $\rho[v_1, \ldots, v_\ell](x, y)$ from the atomic layer of $\ffoil$, whether it represents a strict partial order can be solved in $2^{2^{\rm{poly}\big(|\rho| \cdot 3^{\rm{wd}(\rho)} \big)}}$ space, and, hence, in $2^{2^{2^{\rm{poly}\big(|\rho| \cdot 3^{\rm{wd}(\rho)} \big)}}}$ time.
\end{proposition}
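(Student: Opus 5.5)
The plan is to reduce the question directly to the second item of Theorem~\ref{theo:ptime-atomic}. Given $\rho[v_1, \ldots, v_\ell](x, y)$, I would construct the first-order sentence over $\{\subseteq, \lel\}$
$$\Phi_\rho \ := \ \forall v_1 \cdots \forall v_\ell \, \big[ \forall x \, \neg \rho(x, x, \bar v) \wedge \forall x \forall y \forall z \,\big((\rho(x, y, \bar v) \wedge \rho(y, z, \bar v)) \to \rho(x, z, \bar v)\big)\big].$$
By the definition given above, $\rho$ represents a strict partial order exactly when $\mathfrak{B}_n \models \Phi_\rho$ for every $n \in \mathbb{N}$. This is precisely the validity question decided in Theorem~\ref{theo:ptime-atomic}(2). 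Asymmetry need not be checked separately, since irreflexivity together with transitivity implies it.

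The remaining work is to control the size and width of $\Phi_\rho$, so that the bound stated for $\Phi_\rho$ translates into the claimed bound in terms of $\rho$. First, $|\Phi_\rho| = O(|\rho|)$, because $\Phi_\rho$ contains three copies of $\rho$ with renamed variables plus a constant number of extra symbols (the $\ell$ quantifiers over the $v_i$ are already counted in $|\rho|$).

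Second, I would bound the width. Every subformula of $\Phi_\rho$ is either a renamed subformula of $\rho$ or one of the new outer subformulas. In the first case, its free variables are the free variables of the original subformula, possibly together with parameters that are free in $\rho$. In the second case, the free variables are among $v_1, \ldots, v_\ell, x, y, z$. Since $x, y, v_1, \ldots, v_\ell$ are free in $\rho$, we have $\ell + 2 \leq \mathrm{wd}(\rho)$. The subtle point is renaming: when $x$ is substituted into the position of $y$, or when $z$ is introduced, the bound variables of $\rho$ must be renamed apart so that no capture occurs. Done this way, the free variables of any subformula number at most $\mathrm{wd}(\rho) + 1$. Hence $\mathrm{wd}(\Phi_\rho) \leq \mathrm{wd}(\rho) + 1$.

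Combining these, $|\Phi_\rho| \cdot 3^{\mathrm{wd}(\Phi_\rho)} = O\big(|\rho| \cdot 3^{\mathrm{wd}(\rho)}\big)$, where the extra factor of $3$ is absorbed into the constant. Theorem~\ref{theo:ptime-atomic}(2) then yields the space bound $2^{2^{\mathrm{poly}(|\rho| \cdot 3^{\mathrm{wd}(\rho)})}}$. The construction of $\Phi_\rho$ is trivially within this space, and the time bound follows from the standard conversion of space $S$ to time $2^{O(S)}$.

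I expect the main obstacle to be this width bookkeeping, namely ensuring that capture-avoiding renaming does not inflate the width. If that accounting turns out to be delicate, the fallback is to invoke Lemma~\ref{presburger} directly. One would translate $\rho$ once with context $\Gamma = (x, y, \bar v)$, together with the translations needed for $(x, x, \bar v)$ and for the triple $(x, y, z, \bar v)$, where the latter uses pattern counts over $\{0,1,\bot\}^{\ell+3}$. These are then assembled into a single Presburger sentence of the required size, universally quantified over the pattern counts.
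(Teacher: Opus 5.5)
Your proposal is correct and follows the paper's proof essentially verbatim. The paper builds the same sentence $\forall \bar v\,[\forall x\,\neg\rho(x,x,\bar v)\wedge\forall x\forall y\forall z\,((\rho(x,y,\bar v)\wedge\rho(y,z,\bar v))\to\rho(x,z,\bar v))]$, notes that its width is at most $\mathrm{wd}(\rho)+1$, and invokes the second item of Theorem~\ref{theo:ptime-atomic}; your size and renaming bookkeeping only makes explicit what the paper leaves implicit. One small point: the intermediate claim $\ell+2\leq\mathrm{wd}(\rho)$ is not needed, and can fail if some of $x,y,\bar v$ do not occur free in $\rho$, but the bound $\mathrm{wd}(\Phi_\rho)\leq\mathrm{wd}(\rho)+1$ holds regardless.
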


\begin{proof}

	Let $\rho(x, y, v_1, \ldots, v_\ell)$ be an arbitrary formula from the atomic layer of $\ffoil$, that is, a formula over the vocabulary $\{\subseteq, \lel\}$. Then, we consider the following sentence:
	\begin{align*}
		\varphi \ := \ \forall v_1 \cdots \forall v_\ell \, \big[ \forall x \, \neg
		\rho(x, x, v_1, \ldots, v_\ell) \ \wedge \forall x \forall y \forall z
		\,\big( (\rho(x, y, v_1, \ldots, v_\ell) \wedge \rho(y, z, v_1, \ldots,
		v_\ell)) \to \rho(x, z, v_1, \ldots, v_\ell)\big)\big].
	\end{align*}
	Notice that $\varphi$ has width $\rm{wd}(\varphi) \leq \rm{wd}(\rho)+1$. By definition, determining if $\rho$ corresponds to a strict partial order is equivalent to checking if for every structure $\mathfrak{B}_n$ it holds that $\mathfrak{B}_n \models \varphi$. Thanks to the second part of Theorem~\ref{theo:ptime-atomic}, we know that this can be done in the stated space.

\end{proof}

Proposition~\ref{prop-strict-po} serves as a theoretical upper bound to prove that $\optfoil$ has a decidable syntax. Observe that if we restrict ourselves to formulas of bounded width, then the space complexity falls from triple exponential to double exponential in $|\rho|$ (which implies that the time complexity is triple exponential in $|\rho|$).
In practice, we expect formulas representing strict partial orders to be small and to have a simple structure, so we do not expect this theoretical high computational complexity to pose an actual implementation obstacle.

We now explain how strict partial orders will be used in the logic $\optfoil$.

Given a formula $\varphi[u_1, \ldots, u_k](x)$ from the quantified layer of \textnormal{\ffoil} and another formula $\rho[v_1, \ldots, v_\ell](y, z)$ from the atomic layer of \textnormal{\ffoil} that represents a strict partial order, an {\em $\optfoil$ formula} is an expression of the following form:
$$\Psi[u_1, \ldots, u_k, v_1, \ldots, v_\ell](x) \ = \ \minf[\varphi[u_1, \ldots, u_k](x), \rho[v_1, \ldots, v_\ell](y, z)].$$

Notice that $x$, $u_1, \ldots, u_k$, $v_1, \ldots, v_\ell$ are the free
variables of this expression, while the variables $y$, $z$ will be quantified. In particular, $u_1, \ldots, u_k$ are the parameters that define the notion of explanation, $v_1, \ldots, v_\ell$ are the parameters that define the strict partial order, and $x$ is a variable used to store an explanation that is minimal in the sense given by the strict partial order.
The semantics of $\Psi[u_1, \ldots, u_k, v_1, \ldots, v_\ell](x)$ is defined by considering the following \textnormal{\ffoil} formula:
$$\theta_{\minf}(x,
u_1, \ldots, u_k, v_1, \ldots, v_\ell) \ := \ \varphi(x, u_1, \ldots, u_k) \ \wedge \
\neg \exists y \ \big(\varphi(y, u_1, \ldots, u_k) \, \wedge \, \rho(y, x, v_1, \ldots, v_\ell)\big).$$
More precisely, given a model $\M$ of dimension $n$ and partial
instances $\es$, $\es'_1, \ldots, \es'_k$, $\es''_1, \ldots, \es''_\ell$ of
dimension $n$, we define that $\M \models \Psi[\es'_1, \ldots, \es'_k, \es''_1,
\ldots, \es''_\ell](\es)$ if and only if $\M \models \theta_{\minf}(\es,
\es'_1, \ldots, \es'_k, \es''_1, \ldots, \es''_\ell)$.

\input{new-sections/comp-optdtfoil}

We now begin the study of the expressiveness of $\optfoil$.
As is customary, we say that a logic $\mathcal{L}_1$ is {\em contained} in a
logic $\mathcal{L}_2$ if for every formula in $\mathcal{L}_1$ there
exists an equivalent formula in $\mathcal{L}_2$. Moreover,
$\mathcal{L}_1$ is {\em properly contained} in $\mathcal{L}_2$ if
$\mathcal{L}_1$ is contained in $\mathcal{L}_2$ and
$\mathcal{L}_2$ is not contained in $\mathcal{L}_1$. The following proposition shows that the
expressive power of $\optfoil$ is less than that of $\ffoil$, which in turn has less expressive power than $\foil$ with predicates $\{\subseteq, \lel, \allpos, \allneg\}$.
\begin{proposition}\label{prop-ep-optdtfoil}
Assuming that the polynomial hierarchy does not collapse,
\textnormal{\optfoil} is strictly contained in \textnormal{\ffoil}, and \textnormal{\ffoil} is strictly contained in \textnormal{\foil} with extended predicates.
\end{proposition}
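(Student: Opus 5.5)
We split the statement into four claims: two containments and two strictness results.

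\emph{Containments.} $\optfoil$ is contained in $\ffoil$ by construction. The semantics of $\minf[\varphi,\rho]$ is given by $\theta_{\minf}$. In $\theta_{\minf}$, $\varphi$ is a quantified-layer formula and $\exists y\,(\varphi(y,\ldots)\wedge\rho(y,x,\ldots))$ is again a quantified-layer formula, since $\rho$ is atomic and the quantified layer is closed under conjunction after prenexing. Prenexing is harmless here: an existential quantifier can be pulled over a conjunct that does not mention the bound variable, after renaming. So $\theta_{\minf}$ is a Boolean combination of quantified-layer formulas, hence an $\ffoil$ formula. For the second containment, every $\ffoil$ formula is syntactically a first-order formula over $\{\subseteq,\lel,\allpos,\allneg\}$. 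Guarded quantifiers are just relativized quantifiers, so they cause no difficulty.

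\emph{Strictness of $\ffoil\subsetneq\foil$ with extended predicates.} We argue through complexity. By Theorem~\ref{theo:thm:eval-ffoil}, every $\ffoil$ formula has evaluation over $\dt$ in $\bh_k$ for some $k$, hence in $\ptime^{\np}$. By Theorem~\ref{thm:eval-folistar}, for each $k$ there is a $\foil$ formula $\varphi_k$ over $\{\subseteq,\pos\}$ whose evaluation over $\dt$ is $\Sigma_k^{\rm P}$-hard. We have to check that $\varphi_k$ is expressible over the extended vocabulary. $\pos$ is definable there as $\pos(x):=\full(x)\wedge\allpos(x)$, and $\full$ is definable from $\subseteq$. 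Take $k=2$. If $\varphi_2$ were equivalent to some $\ffoil$ formula, then $\stp\subseteq\bh\subseteq\ptime^{\np}$, so $\ph$ collapses, contradicting the assumption. Alternatively, the formula $\exists x\,\varphi(x,y,z)$ from Theorem~\ref{thm:comp-ffoil} is already $\np^{\np}$-hard and lies in extended $\foil$, so it can serve as the witness directly.

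\emph{Strictness of $\optfoil\subsetneq\ffoil$.} This is the subtle step, because the two logics produce formulas of different shapes. An $\optfoil$ formula $\Psi[\bar u,\bar v](x)$ has evaluation problem equal to that of $\theta_{\minf}$. That problem is the conjunction of an $\np$ condition ($\varphi$) and a $\conp$ condition (the negated existential), so it lies in $\bh_2$. On the other side, Theorem~\ref{theo:thm:eval-ffoil}(2) gives an $\ffoil$ formula whose evaluation over $\dt$ is $\bh_3$-hard. If that formula were equivalent to some $\optfoil$ formula, where equivalence means after identifying free variables, then $\bh_3\subseteq\bh_2$. This collapses the Boolean hierarchy, which by Kadin's theorem collapses $\ph$ to its third level, contradicting the assumption.

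The main obstacle is making the free-variable bookkeeping precise, since an $\optfoil$ formula has a distinguished variable $x$ plus parameters. I would handle this by treating equivalence as equivalence of the underlying formulas in all free variables. The variable tuple of the $\bh_3$-hard formula can then be arranged as $(x,\bar u,\bar v)$ in any split, and the $\bh_2$ upper bound holds regardless of how the split is made.
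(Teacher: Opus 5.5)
Your proof is correct. For the two containments and for the strictness of \ffoil{} in extended \foil{}, you follow the paper's argument, and you add a check the paper omits: $\pos$ is definable as $\full(x)\wedge\allpos(x)$ over the extended vocabulary.

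For the strictness of \optfoil{} in \ffoil{}, however, you take a genuinely different route.

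\begin{itemize}
\item The paper separates the two logics through the \emph{computation} problem. It takes the \ffoil{} formula $\varphi[y,z](x)$ of Theorem~\ref{thm:comp-ffoil}, whose computation problem over $\dt$ is not in $\fptime^{\np}$ unless $\ph$ collapses to $\ptime^{\np}$. It then notes that an equivalent \optfoil{} formula $\Psi[y,z](x)$ would have its computation problem in $\fptime^{\np}$ by Theorem~\ref{theo-comp-optdtfoil}.
\item You separate them through \emph{evaluation}. Every \optfoil{} formula evaluates in $\bh_2$, since $\theta_{\minf}$ is an $\np$ condition conjoined with a $\conp$ condition. Theorem~\ref{theo:thm:eval-ffoil}(2) supplies a $\bh_3$-hard \ffoil{} formula. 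Equivalence would therefore force $\bh_3\subseteq\bh_2$, and by Kadin's theorem $\ph$ would collapse.
\end{itemize}

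Your route could not reuse the paper's witness. The formula of Theorem~\ref{thm:comp-ffoil} is an atomic formula conjoined with a negated quantified-layer formula, so its evaluation is already in $\bh_2$. That is exactly why the paper has to move to the function problem.

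What your route buys:
\begin{itemize}
\item It does not care which free variable is the output and which are parameters. The paper's argument tacitly assumes the equivalent \optfoil{} formula keeps $x$ as its distinguished variable and $y,z$ as parameters; yours works for any split.
\item It yields the sharper by-product that \optfoil{} evaluation lies in $\bh_2$.
\end{itemize}

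What it costs:
\begin{itemize}
\item It depends on an external, nontrivial theorem (BH collapse implies PH collapse) and gives only a collapse to the third level.
\item The paper's collapse is elementary, since $\ptime^{\np}=\np^{\np}$ gives $\ph=\ptime^{\np}$, and it uses only results proved in the paper.
\item The paper's argument also ties strictness directly to the motivation for \optfoil{}, namely computing explanations in $\fptime^{\np}$.
\end{itemize}
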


\begin{proof}
	For the first containment, let $$\Psi[u_1, \ldots, u_k, v_1, \ldots, v_\ell](x) = \minf[\varphi[u_1, \ldots, u_k](x), \rho[v_1, \ldots, v_\ell](y, z)]$$ be an $\optfoil$ formula. As we discussed before, we can consider the equivalent $\ffoil$ formula $$\theta_{\minf}(x, u_1, \ldots, u_k, v_1, \ldots, v_\ell).$$
	Now, for the sake of contradiction, suppose that the containment is not strict. Let $\varphi[y, z](x)$ be an $\ffoil$ formula such that \textnormal{$\comp(\varphi[y, z](x),\, \dt) \not \in \fptime^{\np}$} unless $\ph$ collapses to $\ptime^{\np}$ (whose existence is guaranteed by Theorem~\ref{thm:comp-ffoil}). Let $\Psi[y, z](x)$ be its equivalent expression in $\optfoil$.
	By Theorem~\ref{theo-comp-optdtfoil}, the problem~\logicComputation$(\Psi[y, z](x), \dt)$ is in \fpnp. Thus, \logicComputation$(\varphi[y, z](x), \dt)$ can be solved in \fpnp. This would imply the collapse of the polynomial hierarchy to $\ptime^{\np}$. We conclude that the containment is strict.

	For the second containment, each formula in $\ffoil$ is a $\foil$ formula with extended predicates by definition. It is strict because we can express $\stp$-hard problems in $\foil$ over decision trees (Theorem \ref{thm:eval-folistar}), but no $\ffoil$ formula can express a $\stp$-hard evaluation problem over decision trees unless $\stp \subseteq \bh$ (Theorem \ref{theo:thm:eval-ffoil}).
\end{proof}
The logic $\optfoil$ allows us to express in a simple way all notions
of explainability that we study in this paper. For example, recall from Section~\ref{sec-quant} that $\sr(u,x)$ can be expressed as a formula from the quantified layer of \textnormal{\ffoil}. Therefore, taking $\varphi[u](x) = \sr(u,x)$, the following $\optfoil$ formulas encode the notions of minimal and minimum abductive explanations:
\begin{eqnarray*}
\minsr[u](x) &=& \minf[\varphi[u](x), y \subset z],\\
\msr[u](x) &=& \minf[\varphi[u](x), y \lnel z ].
\end{eqnarray*}
Likewise, $\minf[\varphi[u](x), \rho_2[v_1,v_2](y, z)]$
encodes the notion of abductive explanations for the order
$\rho_2[v_1,v_2](y, z)$ that disregards a feature. The different variants of contrastive explanations can be expressed similarly.

As a second example, consider the notion of minimum change
required and the predicate $\led$ defined in
Section \ref{app:def-aux}.  Then, taking $$\varphi[u](x)
= \full(u) \wedge \full(x) \wedge \neg
(\allpos(u) \leftrightarrow \allpos(x))$$ and $\rho_3[u](y,z)
= \led(u,y,z) \wedge \neg \led(u,z,y)$, we can express the notion of
minimum change required in $\optfoil$ as follows:
\begin{align*}
	\mcr[u](x) = \minf[\varphi[u](x), \rho_3[u](y,z)].
\end{align*}
By reversing the order, the logic $\optfoil$ can also be used to express notions of
explainability that involve maximality conditions. For example, consider the query of maximum change allowed that asks for the maximum number of changes that can be made to an instance without
changing the output of the classification model.  Taking
$$\varphi[u](x) = \full(u) \wedge \full(x) \wedge (\allpos(u)
\leftrightarrow \allpos(x))$$ and defining the reverse order $\rho_4[u](y,z)
= \rho_3[u](z,y)$, we can express the notion of
maximum change allowed in $\optfoil$ as follows:
\begin{align*}
	\mca[u](x) = \minf[\varphi[u](x), \rho_4[u](y,z)].
\end{align*}
An important advantage of $\optfoil$ is that it allows for the combination of explainability notions.
For example, given
two instances $u_1$ and $u_2$ of the same dimension, consider the query $\csr[u_1, u_2](x) = \sr(u_1,x) \wedge
\sr(u_2,x)$ that checks whether $x$ is a common weak abductive explanation for $u_1$ and $u_2$. Then the following $\optfoil$ formula computes a common weak abductive explanation
for two instances (if such an explanation exists):
\begin{align*}
\Psi_1[u_1,u_2](x) = \minf[\csr[u_1, u_2](x), y \subset z].
\end{align*}
Note that an answer to this query is not necessarily minimal with respect to all weak abductive explanations for either $u_1$ or $u_2$.

Finally, another advantage of $\optfoil$ is that it allows for the exploration of the
space of explanations for a given classification. For example, assume that we already have an abductive explanation $u_1$ for an instance $u$, which can be computed using the $\optfoil$ formula $\minsr[u](x)$. Our aim is to compute a second abductive explanation
$u_2$ for $u$. Consider the formula:
\begin{align*}
\nsr[u,u_1](x) = \sr(u, x) \wedge \sr(u, u_1) \wedge \neg (u_1 \subseteq x).
\end{align*}
This formula checks whether $x$ is a weak abductive explanation for $u$ that does not subsume the abductive explanation $u_1$. Thus, an abductive explanation for the instance $u$ that is different from $u_1$ can be computed using the following $\optfoil$ formula:
\begin{align*}
\Psi_2[u,u_1](x) = \minf[\nsr[u, u_1](x), y \subset z].
\end{align*}
We can apply the same idea to other notions of explanation, such as the $\mcr$ explainability query, in order to compute multiple explanations for the output of a classification model.



%% file: new-sections/comp-optdtfoil.tex
The computation problem for $\optfoil$ has to be defined considering
the different roles of the variables in the formula
$\Psi[u_1, \ldots, u_k, v_1, \ldots, v_\ell](x)$. In particular, the
parameters $u_1, \ldots, u_k, v_1, \ldots, v_\ell$ should be given as
input, while the value of $x$ is the explanation to be computed. The
following definition takes these considerations into account. As usual,
we write $\C$ to denote some class of models.
\begin{center}
\fbox{
    \begin{tabular}{rl}
            {\sc Problem:} & \logicComputation$(\Psi[u_1, \ldots, u_k, v_1, \ldots, v_\ell](x), \C)$\\
            {\sc Input:} & A model $\M \in \C$ and partial
            instances  $\es'_1$, $\ldots$, $\es'_k$, $\es''_1$, $\ldots$, $\es''_\ell$\\
            {\sc Output:} & Partial instance
            $\es$ such that $\M \models \Psi[\es'_1, \ldots, \es'_k, \es''_1, \ldots, \es''_\ell](\es)$, \\ & and \textsc{No} if no
            such partial instance exists
    \end{tabular}
        }
\end{center}
We now show that $\optfoil$ fulfills our criteria by establishing that the computation problem for $\optfoil$ can be solved in polynomial time with a polynomial number of calls to an $\np$ oracle:

\begin{theorem}\label{theo-comp-optdtfoil}
Let $\C$ be a class of models such that \textnormal{\logicEvaluation}$(\allpos(x), \C) \in \ptime$ and \textnormal{\logicEvaluation}$(\allneg(x), \C) \in \ptime$. Then \logicComputation$(\Psi, \C) \in \fptime^{\np}$ for every formula $\Psi$ in \textnormal{\optfoil}.
\end{theorem}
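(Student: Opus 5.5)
The plan is to compute a minimal solution by \emph{iterated descent} along the strict partial order, using an $\np$ oracle both to test whether a better solution exists and to construct it. The crux is a polynomial bound on the length of strictly descending $\rho$-chains among partial instances of dimension $n$.

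Fix $\Psi = \minf[\varphi[u_1,\ldots,u_k](x), \rho[v_1,\ldots,v_\ell](y,z)]$ and an input $\M, \es'_1,\ldots,\es'_k,\es''_1,\ldots,\es''_\ell$ of dimension $n$. We use two decision problems, both in $\np$ by the first item of Theorem~\ref{theo:np-guarded}, because $\exists x\,\varphi$ and $\exists y\,(\varphi(y,\bar u)\wedge \rho(y,x,\bar v))$ are formulas of the quantified layer ($\rho$ is atomic and conjunction is allowed).
\begin{enumerate}
\item[(a)] Does there exist $x$ with $\M\models\varphi(x,\bar\es')$?
\item[(b)] Given $\es$, does there exist $y$ with $\M\models\varphi(y,\bar\es')\wedge\rho(y,\es,\bar\es'')$?
\end{enumerate}
Witnesses are obtained by the standard prefix search. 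We fix the value of $y$ (or $x$) in $\{0,1,\bot\}$ one coordinate at a time, each time asking an $\np$ oracle whether a witness extending the current prefix exists. This is again an $\np$ question: the nondeterministic machine of Theorem~\ref{theo:np-guarded} simply guesses a witness consistent with the prefix. So each witness costs $O(n)$ oracle calls.

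The algorithm is as follows. If (a) fails, output \textsc{No}. Otherwise compute some $\es^{(0)}$ satisfying $\varphi$. Repeatedly, while (b) holds for $\es^{(i)}$, compute a witness $\es^{(i+1)}$. When (b) fails, $\es^{(i)}$ satisfies $\theta_{\minf}$ by definition, so we output it. Correctness is immediate. What remains is to show that the loop terminates after polynomially many rounds, i.e., that every chain $\rho(\es^{(i+1)},\es^{(i)},\bar\es'')$ has polynomial length.

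This chain bound is the main obstacle, and I would prove it by a type-counting argument based on Lemma~\ref{presburger}.
\begin{itemize}
\item Call the \emph{type} of $x$ the vector of pattern counts $c^{\Gamma,s}_\pi$ for $\Gamma=(x,v_1,\ldots,v_\ell)$. There are at most $(n+1)^{3^{\ell+1}}$ types, which is polynomial since $\ell$ is fixed.
\item If $x$ and $x'$ have the same type, there is an automorphism $\sigma$ of $\mathfrak B_n$ fixing every $v_j$ and mapping $x$ to $x'$. Concretely, $\sigma$ is a permutation of coordinates that matches coordinates with equal patterns; coordinate permutations preserve $\subseteq$ and $\lel$.
\item Since $\rho$ is first-order, $\rho$ is invariant under $\sigma$. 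The permutation $\sigma$ has some finite order $m$. If $\rho(x',x)$, i.e.\ $\rho(\sigma x,x)$, then $\rho(\sigma^{i+1}x,\sigma^i x)$ holds for all $i$. Transitivity then yields $\rho(\sigma^m x,x)=\rho(x,x)$, contradicting irreflexivity.
\item Hence two elements of the same type are never $\rho$-comparable. More generally, no chain can contain two elements of the same type: if $x_j <_\rho x_i$ with equal types, then transitivity gives a direct comparison between them, which is impossible.
\end{itemize}
Therefore chains have length at most $(n+1)^{3^{\ell+1}}$.

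The total number of oracle calls is polynomial, and all other work (building prefix queries, bookkeeping) is polynomial time. This gives $\logicComputation(\Psi,\C)\in\fptime^{\np}$.
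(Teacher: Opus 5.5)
Your proposal is correct and follows the paper's proof essentially step by step. It uses the same iterated descent, with $\np$-oracle tests and coordinate-by-coordinate witness extraction, and the same chain bound as Lemma~\ref{lem-fixed-rho}: your pattern-count types for $(x,v_1,\ldots,v_\ell)$ are exactly the paper's classes given by the counts $\#_{s,t}$, and the contradiction comes from iterating a parameter-fixing coordinate permutation and using transitivity and irreflexivity. The only differences are cosmetic. You fix each witness coordinate to an exact value in $\{0,1,\bot\}$ by arguing directly about the $\np$ machine, whereas the paper asks for witnesses subsuming a growing partial instance, which keeps every query inside the quantified layer. You also borrow only the notation of Lemma~\ref{presburger}, not its content.
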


As a corollary of this result, we obtain that $\optfoil$ can be used to compute explanations in polynomial time using
a polynomial number of calls to an $\np$ oracle for the class of decision trees. Moreover, the same holds for
more expressive representation classes, including $\ddnnf$ circuits and fragments of the class of circuits
corresponding to propositional formulas in conjunctive normal form ($\cnf$) whose satisfiability is
decidable in polynomial time, such as 2-CNF formulas ($\tcnf$) and Horn CNF formulas ($\horn$).
We formally state these results in the following corollary.

\begin{corollary}
	\begin{sloppypar}
Let $\varphi$ be an \textnormal{\optfoil} formula. Then \textnormal{\logicComputation}$(\varphi, \ddnnf) \in \fptime^{\np}$,
\textnormal{\logicComputation}$(\varphi, \tcnf) \in \fptime^{\np}$, and
\textnormal{\logicComputation}$(\varphi, \horn) \in \fptime^{\np}$.
	\end{sloppypar}
\end{corollary}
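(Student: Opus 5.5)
The corollary follows by instantiating Theorem~\ref{theo-comp-optdtfoil} three times, with $\C$ equal to $\ddnnf$, $\tcnf$ and $\horn$. For each class it suffices to show that \logicEvaluation$(\allpos(x), \C) \in \ptime$ and \logicEvaluation$(\allneg(x), \C) \in \ptime$. In every case the plan uses the same two steps. First, given a model $\M \in \C$ of dimension $n$ and a partial instance $\es$, we \emph{condition} $\M$ on $\es$: every variable $x_i$ with $\es[i] \neq \bot$ is replaced by the constant $\es[i]$. Second, we observe that $\M \models \allpos(\es)$ iff the conditioned model is valid over the remaining variables $\es_\bot$, and that $\M \models \allneg(\es)$ iff the conditioned model is unsatisfiable. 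So in each class we need conditioning, a validity check and a consistency check, all in polynomial time.

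\textbf{CNF-based classes ($\tcnf$, $\horn$).} Conditioning a CNF does three things: it deletes clauses containing a literal made true by $\es$, it deletes literals made false, and it can only shorten clauses. Hence the result is still in $\tcnf$ (respectively $\horn$), possibly containing the empty clause. For $\allneg$ we test unsatisfiability of the conditioned formula. For $\tcnf$ this is the standard linear-time implication-graph algorithm, and for $\horn$ it is unit propagation. For $\allpos$ no special structure is needed: a CNF is valid iff every remaining clause contains a complementary pair $x_j, \neg x_j$. In particular, the formula is not valid if some clause is empty. This check is clearly polynomial.

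\textbf{$\ddnnf$.} Conditioning is done by replacing each variable gate on a defined feature, together with the negation gate above it if there is one, by the corresponding constant gate. Determinism and decomposability are preserved: determinism can only be helped by fixing inputs, and the variable sets of the children of an $\land$-gate can only shrink. For $\allneg$ we use the linear-time consistency test for DNNF. This is a bottom-up evaluation in which an $\lor$-gate is satisfiable iff some child is, and an $\land$-gate is satisfiable iff all children are; the latter rule is correct by decomposability. For $\allpos$ we use polynomial-time model counting on deterministic decomposable circuits. First smooth the circuit in polynomial time, so that all children of every $\lor$-gate mention the same variables. Then compute counts bottom-up, using products at $\land$-gates and sums at $\lor$-gates, and accept iff the count of the output equals $2^{|\es_\bot|}$. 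Counting is done relative to the variables $\es_\bot$, with constant gates treated as having no variables.

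I expect the main obstacle to be this last step, namely justifying validity for $\ddnnf$ via counting. The subtle point is the bookkeeping of free variables after conditioning: variables of $\es_\bot$ that no longer appear below the output must still contribute their factor of $2$. I would handle this by smoothing with respect to $\es_\bot$ at the root, or by multiplying by $2^{m}$, where $m$ is the number of such missing variables. I would cite the knowledge-compilation map of \cite{DarwicheCompilation} for the facts that conditioning, consistency and model counting are polytime on $\ddnnf$. With these three polynomial-time checks in hand, Theorem~\ref{theo-comp-optdtfoil} immediately yields the three memberships in $\fptime^{\np}$.
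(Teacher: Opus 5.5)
Your proposal is correct and takes the same route as the paper: the corollary follows by instantiating Theorem~\ref{theo-comp-optdtfoil} with $\C \in \{\ddnnf, \tcnf, \horn\}$, and the paper simply asserts that these classes meet its hypothesis, viewing $\allpos$ and $\allneg$ as conditioning followed by a validity or consistency check. You go further and spell out the standard polynomial-time checks the paper leaves implicit, including the correct factor of $2$ for variables of $\es_\bot$ that disappear after conditioning in the $\ddnnf$ model-counting step.
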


\begin{proof}[Proof of Theorem \ref{theo-comp-optdtfoil}]
	Let $\rho[v_1, \ldots, v_\ell](x,y)$ be a formula from the atomic layer of \textnormal{\ffoil} that represents a strict partial order. We say that a sequence $(\es_1, \ldots, \es_k)$ of partial instances of dimension $n$ is a \emph{path} of dimension $n$ in $\rho[v_1, \ldots, v_\ell](x,y)$ if there exist partial instances $\es'_1$, $\ldots$, $\es'_\ell$ of dimension $n$ such that, for every $i \in \{1, \ldots, k-1\}$, it holds that $$\mathfrak{B}_n \models \rho[\es'_1, \ldots, \es'_\ell](\es_i,\es_{i+1}).$$

	The following lemma shows that, for a fixed formula $\rho[v_1, \ldots, v_\ell](x,y)$, the lengths of such paths are polynomially bounded with respect to $n$ (refer to Section~\ref{proof:fixed-rho} for the proof).
	\begin{lemma}\label{lem-fixed-rho}
		Let $\rho[v_1, \ldots, v_\ell](x,y)$ be a formula from the atomic layer of \textnormal{\ffoil} that represents a strict partial order. Then there exists a fixed polynomial $p$ such that for every path $(\es_1, \ldots, \es_k)$ of dimension $n$ in
		$\rho[v_1, \ldots, v_\ell](x,y)$, it holds that $k \leq p(n)$.
	\end{lemma}

	Lemma \ref{lem-fixed-rho} gives us a simple algorithm to compute a solution for an
	$\optfoil$ formula
	\begin{align*}
		\minf[\varphi[u_1, \ldots, u_k](x), \rho[v_1, \ldots, v_\ell](y, z)],
	\end{align*}
	given as input a model $\M \in \C$ of dimension $n$ and partial instances $\es'_1$, $\ldots$, $\es'_k$, $\es''_1$, $\ldots$, $\es''_\ell$ of dimension $n$. We first use an $\np$ oracle to verify whether $\M \models \exists x \, \varphi[\es'_1, \ldots, \es'_k](x)$, which is a formula from the quantified layer of \textnormal{\ffoil}. If $\M \not\models \exists x \, \varphi[\es'_1, \ldots, \es'_k](x)$, then the answer is \textsc{No}. Otherwise, the $\np$
	oracle can be used to construct an initial partial instance $\es_0$ such that
	$\M \models \varphi[\es'_1, \ldots, \es'_k](\es_0)$. The idea is to maintain a current partial assignment $\es''$ (originally set to $\{\bot\}^n$) of the features that is known to extend to some witness. For each feature, we query whether there exists a witness extending $\es''$ but with that feature fixed to $0$. If the answer is positive, we keep that feature as $0$ in $\es''$, otherwise we query whether there exists one extending $\es''$ but with that feature fixed to $1$. If that answer is positive, we keep that feature as $1$, and if both answers are negative, then we leave the feature undefined. This way, the invariant is preserved at every step, and after at most $2n$ oracle queries we obtain the partial instance $\es_0$.

	We then use the $\np$ oracle to verify whether
	\begin{align*}
		\M \ \models \ \exists x \, \big(\varphi[\es'_1, \ldots, \es'_k](x) \ \wedge \ \rho[\es''_1, \ldots, \es''_\ell](x, \es_0)\big);
	\end{align*}
	which can be written as a formula from the quantified layer of \textnormal{\ffoil} by appending `$\wedge \, \rho[\es''_1, \ldots, \es''_\ell](x, \es_0)$' within all the quantifiers of $\varphi[\es'_1, \ldots, \es'_k](x)$. If the answer is positive, then again we use the $\np$ oracle as described before to construct a partial instance $\es_1$ such that
	\begin{align*}
		\M \ \models \ \varphi[\es'_1, \ldots, \es'_k](\es_1) \ \wedge \ \rho[\es''_1, \ldots, \es''_\ell](\es_1, \es_0).
	\end{align*}
	The algorithm continues in this way, constructing a sequence of
	partial instances $(\es_i, \es_{i-1}, \ldots, \es_0)$ that constitutes
	a path of dimension $n$ in $\rho[v_1, \ldots, v_\ell](y, z)$. The
	algorithm stops when the condition
	\begin{align*}
		\M \ \models \ \exists x \, \big(\varphi[\es'_1, \ldots, \es'_k](x) \wedge \rho[\es''_1, \ldots, \es''_\ell](x, \es_i)\big)
	\end{align*}
	does not hold, which by construction guarantees that $\es_i$ is a minimal
	instance. Lemma \ref{lem-fixed-rho} guarantees that $\es_i$ will be
	found in a polynomial number of steps. Since in each step we call the $\np$ oracle a polynomial number of times, this concludes the proof of the theorem.
\end{proof}

%% file: new-sections/conclusions.tex
We have proposed a declarative approach to model interpretability based
on query languages for explaining Boolean classification models. The
starting point of our work is the observation that the growing number of
explanation notions studied in formal XAI calls for a uniform language in
which such notions can be specified, combined, and analyzed. This view is
natural from a data management perspective: explanation notions become
queries, models become the structures over which these queries are
evaluated, and the main questions are those of expressiveness,
evaluation complexity, and computation of answers.

Our first contribution was to revisit $\foil$ from this perspective. We
showed that, despite its foundational role, $\foil$ is not well suited as
a practical query language for explanations. On the one hand, it cannot
express some central optimality-based notions, such as minimum abductive
explanations, even over decision trees. On the other hand, its evaluation
problem over decision trees is hard for every level of the polynomial
hierarchy. These results show that a useful explainability language must
carefully balance expressive power with controlled evaluation complexity.

To address this challenge, we introduced \ffoil, a layered query language
with an extended vocabulary for reasoning about partial instances and the
behavior of Boolean models. We showed that \ffoil\ can express a broad
family of explanation notions, including abductive, contrastive,
feature-based, and distance-based queries. At the same time, we proved
that the evaluation problem for each fixed \ffoil\ query belongs to the
Boolean hierarchy over every class of Boolean models for which the
predicates $\allpos$ and $\allneg$ can be evaluated in polynomial time.
This condition holds not only for decision trees, but also for more
general representation classes such as deterministic and decomposable
Boolean circuits. 

We also introduced \optfoil, an optimization-oriented fragment of
\ffoil\ for computing explanations that are minimal with respect to
strict partial orders. This fragment captures a wide range of
optimality-based explanation tasks while retaining controlled
computational behavior: under the same assumptions on $\allpos$ and
$\allneg$, explanations specified in \optfoil\ can be computed in
$\mathrm{FP}^{\mathrm{NP}}$. Together, the results for \ffoil\ and
\optfoil\ show that declarative specification and complexity-theoretic
analysis can provide a principled foundation for model interpretability.

Several directions remain open. A first direction is to extend the
framework beyond Boolean classification models. Although Boolean models
are standard in formal XAI and already capture many explanation tasks,
many applications involve multi-class outputs, non-Boolean features, or
structured feature domains. It would be interesting to understand which
parts of the present framework extend directly to these richer settings,
and which additional predicates or language constructs are needed.

A second direction is to study further model representations. Our upper
bounds are stated for every class of Boolean models over which
$\allpos$ and $\allneg$ can be evaluated in polynomial time, and this
already includes decision trees and deterministic decomposable Boolean
circuits. A natural next step is to identify additional representation
classes that satisfy this condition. This would help clarify the
connection between explainability languages and knowledge compilation
more broadly.

A third direction concerns query optimization. One of the motivations for
a declarative language is that different explanation notions can share
common subqueries and operators. This suggests the possibility of
developing optimization techniques for explainability queries, in the
same spirit as query optimization in databases. Such techniques could
exploit common subformulas, reuse calls to procedures for $\allpos$ and
$\allneg$, or identify fragments with better evaluation strategies.

Finally, it would be valuable to study richer answer mechanisms for
explainability queries. In this paper, explanations are treated as
partial instances satisfying a logical specification, possibly optimized
with respect to a strict partial order. However, users may require
different levels of detail, multiple alternative explanations, or
rankings of explanations according to several criteria. Extending the
language with principled mechanisms for enumeration, ranking, and
comparison of explanations is an important step toward a more complete
declarative framework for model interpretability.

%% file: new-sections/acknowledgements.tex
Part of this work has been funded by ANID - Millennium Science Initiative Program - Code ICN17002.
Diego Bustamante was partially funded by ANID - Subdirección de Capital Humano (Magíster Nacional, 2023, folio 22231282).
Mar\'ia Alejandra Schild was financially supported by ANID (Doctorado Nacional, 2025, folio 21251617).
Bernardo Subercaseaux is (partially) supported by the DARPA expMath program through the DARPA CMO contract number HR0011262E028.

%% file: new-sections/app-proofs.tex
	\subsection{Proof of Lemma~\ref{lemma:pointed}}
	\label{proof:pointed}

	We will first prove an auxiliary result. We start by introducing some terminology.

	Let $U = \{a_i \mid i > 0\}$ be a countably infinite set.
		We take a disjoint copy $\overline U = \{\overline a_i \mid i > 0\}$ of $U$.
		For an $X \subseteq U \cup \overline U$, we define
		\begin{align*}
			X_{U \setminus \overline U} \ & := \ \{a \in U \mid a \in X \text{ and } \overline a \not\in X\} \\
			X_{U \cap \overline U} \ & := \ \{a \in U \mid a \in X \text{ and } \overline a \in X\} \\
			X_{\overline U \setminus U} \ & := \ \{\overline a \in \overline U \mid \overline a \in X \text{ and } a \not\in X\}
		\end{align*}
		The {\em $\ell$-type} of $X$, for $\ell \geq 0$, is the tuple
		$$\big(\min{\{\ell,|X_{U \setminus \overline U}|\}},\, \min{\{\ell,|X_{U \cap \overline U}|\}}, \,  \min{\{\ell,|X_{\overline U \setminus U}|\}}\big).$$
		We write $X \leftrightarrows_\ell X'$, for $X,X' \subseteq U \cup \overline U$, if $X$ and $X'$ have the same
		$\ell$-type.
		If $X \subseteq U \cup \overline U$, then $X$ is {\em well formed} (wf) if for each $i > 0$ at most one element from $\{a_i,\overline a_i\}$ is in $X$.

		\begin{lemma}
			\label{lemma:double}
			Assume that $X \leftrightarrows_{3^{k+1}} Y$, for $X,Y \subseteq U \cup \overline U$ and $k \geq 0$. Then:
			\begin{itemize}
				\item
				For every wf $X_1 \subseteq X$, there exists a wf $Y_1 \subseteq Y$
				such that
				$$X_1 \leftrightarrows_{3^k} Y_1 \ \ \text{ and } \ \ X\setminus X_1 \leftrightarrows_{3^k} Y \setminus Y_1.$$
				\item
				For every wf $Y_1 \subseteq Y$, there exists a wf $X_1 \subseteq X$
				such that
				$$X_1 \leftrightarrows_{3^k} Y_1 \ \ \text{ and } \ \ X\setminus X_1 \leftrightarrows_{3^k} Y \setminus Y_1.$$
			\end{itemize}
		\end{lemma}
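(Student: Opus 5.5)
Write $m = 3^k$, so the hypothesis gives $X \leftrightarrows_{3m} Y$. The plan is to reduce the statement to a combinatorial fact about splitting natural numbers with thresholds. For $X$, write $A = X_{U \setminus \overline U}$, $B = X_{U \cap \overline U}$ and $\overline C = X_{\overline U \setminus U}$, and define $A', B', \overline{C'}$ analogously for $Y$. The hypothesis says that $|A|$ and $|A'|$ are either equal or both $\geq 3m$, and likewise for $B, B'$ and for $C, C'$.

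A wf $X_1 \subseteq X$ is described by the following data:
\begin{itemize}
\item a subset $A_1 \subseteq A$;
\item a subset $C_1 \subseteq C$, contributing the barred elements $\overline{C_1}$;
\item a partition $B = B_1 \cup B_2 \cup B_0$, where $X_1$ takes $b$ for $b \in B_1$, takes $\overline b$ for $b \in B_2$, and takes neither for $b \in B_0$.
\end{itemize}
Since $X_1$ is wf, its $m$-type is
\[
\bigl(\min\{m, |A_1|+|B_1|\},\ 0,\ \min\{m, |C_1|+|B_2|\}\bigr).
\]
A direct computation of the three parts of $X \setminus X_1$ shows that its $m$-type is
\[
\bigl(\min\{m, |A\setminus A_1|+|B_2|\},\ \min\{m,|B_0|\},\ \min\{m, |C\setminus C_1|+|B_1|\}\bigr).
\]
The same formulas hold for any wf $Y_1 \subseteq Y$ built from $A'_1$, $C'_1$ and $B' = B'_1 \cup B'_2 \cup B'_0$. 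Since $\min\{m, x+y\} = \min\{m, \min\{m,x\} + \min\{m,y\}\}$, both types are determined by the capped values $\min\{m,\cdot\}$ of the seven numbers $|A_1|, |A \setminus A_1|, |C_1|, |C\setminus C_1|, |B_1|, |B_2|, |B_0|$. It therefore suffices to split $A'$, $C'$ and $B'$ so that each piece has the same $m$-capped size as the corresponding piece of $X$.

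\textbf{Splitting lemma.} If $N, N'$ are equal or both $\geq 3m$, and $N = n_1 + \dots + n_r$ with $r \leq 3$, then there is a decomposition $N' = n'_1 + \dots + n'_r$ with $\min\{m,n_i\} = \min\{m,n'_i\}$ for all $i$.
\begin{itemize}
\item If $N = N'$, copy the decomposition.
\item Otherwise keep every small part (those with $n_i < m$) unchanged. The large parts (those with $n_i \geq m$) share the remainder evenly.
\end{itemize}
At least one part is large, since $N \geq 3m$. If there are $j$ large parts, the remainder is at least $3m - (3-j)(m-1)$. This is at least $jm$ for $j = 1, 2, 3$, so every large part receives at least $m$.

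Apply the splitting lemma to $A$ with $r=2$, to $C$ with $r=2$, and to $B$ with $r=3$. Realize the resulting numbers by actual subsets $A'_1 \subseteq A'$, $C'_1 \subseteq C'$ and a partition $B' = B'_1 \cup B'_2 \cup B'_0$. Let $Y_1 = A'_1 \cup \overline{C'_1} \cup B'_1 \cup \overline{B'_2}$. This set is wf and has the required types by the formulas above. The second item follows by the same argument with the roles of $X$ and $Y$ exchanged, since the hypothesis is symmetric.

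The only real obstacle is the three-way split of $B$: the factor $3$ in $3^{k+1}$ is exactly what makes the remainder big enough when up to three pieces must each stay at least $m$. The bookkeeping of which pieces of $X$ land in which component of the type of $X \setminus X_1$ also needs care. In particular, for $b \in B_2$ the element $b$ remains in $X \setminus X_1$ unpaired, and for $b \in B_1$ the element $\overline b$ remains unpaired.
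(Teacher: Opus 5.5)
Your proof is correct and follows essentially the paper's route. You decompose $X_1$ region by region, splitting the doubled region $X_{U\cap\overline U}$ three ways. You then choose the corresponding pieces of $Y_1$ so that each piece has the same $3^k$-capped size, and the threshold $3^{k+1}$ leaves room for up to three large pieces. Your uniform splitting lemma replaces the paper's explicit case analysis (seven cases for the middle region). Your formulas for the types of $X_1$ and $X\setminus X_1$, together with the identity $\min\{m,x+y\}=\min\{m,\min\{m,x\}+\min\{m,y\}\}$, supply the final verification that the paper calls straightforward.
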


	\begin{proof}

		Given $Z \subseteq U$, we use $\overline Z$ to denote the set $\{ \overline a \in \overline U \mid a \in Z\}$, and given  $W \subseteq \overline U$, we use $\overline W$ to denote the set $\{ a \in U \mid \overline a \in W\}$.
	Let $X_1$ be a wf subset of $X$. Then we have that $X_1 = X_{1,1} \cup X_{1,2} \cup X_{1,3} \cup X_{1,4}$, where
	\begin{eqnarray*}
		X_{1,1} & \subseteq & X_{U \setminus \overline U},\\
		X_{1,2} & \subseteq & \{ a \in U \mid a \in X_{U \cap \overline U}\},\\
		X_{1,3} & \subseteq & \{ \overline a \in \overline U \mid a \in X_{U \cap \overline U}\},\\
		X_{1,4} & \subseteq & X_{\overline U \setminus U},
	\end{eqnarray*}
	and $\overline{X_{1,2}} \cap X_{1,3} = \emptyset$ (since $X_1$ is wf).
	We construct a set $Y_1 = Y_{1,1} \cup Y_{1,2} \cup Y_{1,3} \cup Y_{1,4}$
	by considering the following rules.
	\begin{enumerate}
		\item If $|X_{U \setminus \overline U}| < 3^{k+1}$, then $|Y_{U \setminus \overline U}|
		= |X_{U \setminus \overline U}|$ since $X \leftrightarrows_{3^{k+1}} Y$. In
		this case, we choose $Y_{1,1} \subseteq Y_{U \setminus \overline
			U}$ in such a way that $|Y_{1,1}| = |X_{1,1}|$ and $|Y_{U
			\setminus \overline U} \setminus Y_{1,1}| = |X_{U \setminus
			\overline U} \setminus X_{1,1}|$.

		If $|X_{U \setminus \overline U}| \geq 3^{k+1}$, then $|Y_{U \setminus \overline U}|
		\geq 3^{k+1}$ since $X \leftrightarrows_{3^{k+1}} Y$. In this
		case, we choose $Y_{1,1} \subseteq Y_{U \setminus \overline U}$ in
		the following way. If $|X_{1,1}| < 3^k$, then $|Y_{1,1}| =
		|X_{1,1}|$, and if $|X_{U \setminus \overline U} \setminus
		X_{1,1}| < 3^k$, then $|Y_{U \setminus \overline U} \setminus
		Y_{1,1}| = |X_{U \setminus \overline U} \setminus
		X_{1,1}|$. Finally, if $|X_{1,1}| \geq 3^k$ and $|X_{U \setminus
			\overline U} \setminus X_{1,1}| \geq 3^k$, then $|Y_{1,1}| \geq
		3^k$ and $|Y_{U \setminus \overline U} \setminus Y_{1,1}| \geq
		3^k$. Notice that we can choose such a set $Y_{1,1}$ since $|Y_{U
			\setminus \overline U}| \geq 3^{k+1}$.

		\item If $|X_{U \cap \overline U}| < 3^{k+1}$, then $|Y_{U
			\cap \overline U}| = |X_{U \cap \overline U}|$ since $X
		\leftrightarrows_{3^{k+1}} Y$. In this case, we choose $Y_{1,2}
		\subseteq \{ a \in U \mid a \in Y_{U \cap \overline U}\}$ and
		$Y_{1,3} \subseteq \{ \overline a \in \overline U \mid a \in Y_{U
			\cap \overline U}\}$ in such a way that
		$\overline{Y_{1,2}} \cap Y_{1,3} = \emptyset$,
		$|Y_{1,2}| = |X_{1,2}|$,
		$|Y_{1,3}| = |X_{1,3}|$ and
		$|Y_{U \cap \overline U} \setminus (Y_{1,2} \cup \overline{Y_{1,3}})| =
		|X_{U \cap \overline U} \setminus (X_{1,2} \cup \overline{X_{1,3}})|$.

		If $|X_{U \cap \overline U}| \geq 3^{k+1}$, then $|Y_{U
			\cap \overline U}| \geq 3^{k+1}$ since $X
		\leftrightarrows_{3^{k+1}} Y$. In this case, we choose $Y_{1,2}
		\subseteq \{ a \in U \mid a \in Y_{U \cap \overline U}\}$ and
		$Y_{1,3} \subseteq \{ \overline a \in \overline U \mid a \in Y_{U
			\cap \overline U}\}$ in the following way.
		\begin{enumerate}
			\item
			If $|X_{1,2}| < 3^k$, $|X_{1,3}| < 3^k$ and $|X_{U \cap
				\overline U} \setminus (X_{1,2} \cup \overline{X_{1,3}})| \geq
			3^k$, then $|Y_{1,2}| = |X_{1,2}|$, $|Y_{1,3}| = |X_{1,3}|$ and
			$\overline{Y_{1,2}} \cap Y_{1,3} = \emptyset$.
			Notice that we can choose such sets $Y_{1,2}$ and $Y_{1,3}$
			since $|Y_{U \cap \overline U}| \geq 3^{k+1}$.

			\item
			If $|X_{1,2}| < 3^k$, $|X_{1,3}| \geq 3^k$ and $|X_{U \cap
				\overline U} \setminus (X_{1,2} \cup \overline{X_{1,3}})| <
			3^k$, then $|Y_{1,2}| = |X_{1,2}|$,     $|Y_{U \cap \overline U} \setminus (Y_{1,2} \cup \overline{Y_{1,3}})| =
			|X_{U \cap \overline U} \setminus (X_{1,2} \cup \overline{X_{1,3}})|$ and
			$\overline{Y_{1,2}} \cap Y_{1,3} = \emptyset$.
			Notice that we can choose such sets $Y_{1,2}$ and $Y_{1,3}$
			since $|Y_{U \cap \overline U}| \geq 3^{k+1}$.

			\item
			If $|X_{1,2}| \geq 3^k$, $|X_{1,3}| < 3^k$ and $|X_{U \cap
				\overline U} \setminus (X_{1,2} \cup \overline{X_{1,3}})| <
			3^k$, then $|Y_{1,3}| = |X_{1,3}|$,     $|Y_{U \cap \overline U} \setminus (Y_{1,2} \cup \overline{Y_{1,3}})| =
			|X_{U \cap \overline U} \setminus (X_{1,2} \cup \overline{X_{1,3}})|$ and
			$\overline{Y_{1,2}} \cap Y_{1,3} = \emptyset$.
			Notice that we can choose such sets $Y_{1,2}$ and $Y_{1,3}$
			since $|Y_{U \cap \overline U}| \geq 3^{k+1}$.

			\item
			If $|X_{1,2}| < 3^k$, $|X_{1,3}| \geq 3^k$ and $|X_{U \cap
				\overline U} \setminus (X_{1,2} \cup \overline{X_{1,3}})| \geq
			3^k$, then $|Y_{1,2}| = |X_{1,2}|$, $|Y_{1,3}| \geq 3^k$, $|Y_{U \cap \overline U} \setminus (Y_{1,2} \cup \overline{Y_{1,3}})| \geq 3^k$
			and
			$\overline{Y_{1,2}} \cap Y_{1,3} = \emptyset$.
			Notice that we can choose such sets $Y_{1,2}$ and $Y_{1,3}$
			since $|Y_{U \cap \overline U}| \geq 3^{k+1}$.

			\item
			If $|X_{1,2}| \geq 3^k$, $|X_{1,3}| < 3^k$ and $|X_{U \cap
				\overline U} \setminus (X_{1,2} \cup \overline{X_{1,3}})| \geq
			3^k$, then $|Y_{1,3}| = |X_{1,3}|$, $|Y_{1,2}| \geq 3^k$, $|Y_{U \cap \overline U} \setminus (Y_{1,2} \cup \overline{Y_{1,3}})| \geq 3^k$
			and
			$\overline{Y_{1,2}} \cap Y_{1,3} = \emptyset$.
			Notice that we can choose such sets $Y_{1,2}$ and $Y_{1,3}$
			since $|Y_{U \cap \overline U}| \geq 3^{k+1}$.

			\item
			If $|X_{1,2}| \geq 3^k$, $|X_{1,3}| \geq 3^k$ and $|X_{U \cap
				\overline U} \setminus (X_{1,2} \cup \overline{X_{1,3}})| <
			3^k$, then $|Y_{U \cap \overline U} \setminus (Y_{1,2} \cup \overline{Y_{1,3}})| =
			|X_{U \cap \overline U} \setminus (X_{1,2} \cup \overline{X_{1,3}})|$, $|Y_{1,2}| \geq 3^k$, $|Y_{1,3}| \geq 3^k$ and
			$\overline{Y_{1,2}} \cap Y_{1,3} = \emptyset$.
			Notice that we can choose such sets $Y_{1,2}$ and $Y_{1,3}$
			since $|Y_{U \cap \overline U}| \geq 3^{k+1}$.

			\item
			If $|X_{1,2}| \geq 3^k$, $|X_{1,3}| \geq 3^k$ and $|X_{U \cap
				\overline U} \setminus (X_{1,2} \cup \overline{X_{1,3}})| \geq
			3^k$, then $|Y_{1,2}| \geq 3^k$, $|Y_{1,3}| \geq 3^k$,
			$|Y_{U \cap \overline U} \setminus (Y_{1,2} \cup \overline{Y_{1,3}})| \geq 3^k$ and
			$\overline{Y_{1,2}} \cap Y_{1,3} = \emptyset$.
			Notice that we can choose such sets $Y_{1,2}$ and $Y_{1,3}$
			since $|Y_{U \cap \overline U}| \geq 3^{k+1}$.

		\end{enumerate}

		\item If $|X_{\overline U \setminus U}| < 3^{k+1}$, then $|Y_{\overline U \setminus U}|
		= |X_{\overline U \setminus U}|$ since $X \leftrightarrows_{3^{k+1}} Y$. In
		this case, we choose $Y_{1,4} \subseteq Y_{\overline U \setminus
			U}$ in such a way that $|Y_{1,4}| = |X_{1,4}|$ and $|Y_{\overline U
			\setminus U} \setminus Y_{1,4}| = |X_{\overline U \setminus
			U} \setminus X_{1,4}|$.

		If $|X_{\overline U \setminus U}| \geq 3^{k+1}$, then $|Y_{\overline U \setminus U}|
		\geq 3^{k+1}$ since $X \leftrightarrows_{3^{k+1}} Y$. In this
		case, we choose $Y_{1,4} \subseteq Y_{\overline U \setminus U}$ in
		the following way. If $|X_{1,4}| < 3^k$, then $|Y_{1,4}| =
		|X_{1,4}|$, and if $|X_{\overline U \setminus U} \setminus
		X_{1,4}| < 3^k$, then $|Y_{\overline U \setminus U} \setminus
		Y_{1,4}| = |X_{\overline U \setminus U} \setminus
		X_{1,4}|$. Finally, if $|X_{1,4}| \geq 3^k$ and $|X_{\overline U \setminus
			U} \setminus X_{1,4}| \geq 3^k$, then $|Y_{1,4}| \geq
		3^k$ and $|Y_{\overline U \setminus U} \setminus Y_{1,4}| \geq
		3^k$. Notice that we can choose such a set $Y_{1,4}$ since $|Y_{\overline U
			\setminus U}| \geq 3^{k+1}$.
	\end{enumerate}
	By definition of $Y_{1,1}$, $Y_{1,2}$, $Y_{1,3}$ and $Y_{1,4}$, it is straightforward to conclude that
	$Y_1$ is wf, $X_1 \leftrightarrows_{3^{k}} Y_1$ and  $(X \setminus X_1) \leftrightarrows_{3^{k}} (Y \setminus Y_1)$.

	We have just proved that for every wf $X_1 \subseteq X$, there
	exists a wf $Y_1 \subseteq Y$ such that $X_1 \leftrightarrows_{3^k}
	Y_1$ and $X\setminus X_1 \leftrightarrows_{3^k} Y \setminus Y_1$.
	In the same way, it can be shown that for every wf $Y_1 \subseteq
	Y$, there exists a wf $X_1 \subseteq X$ such that $X_1
	\leftrightarrows_{3^k} Y_1$ and $X\setminus X_1
	\leftrightarrows_{3^k} Y \setminus Y_1$. This concludes the proof of the lemma.

	\end{proof}

	We now consider structures of the form
	$\astruct^* = \langle 2^{X},\subseteq^{\astruct^*} \rangle$, where $X \subseteq U \cup \overline U$ and
	$\subseteq^{\astruct^*}$ is the relation that contains all pairs $(Y,Z)$, for $Y,Z \subseteq X$, such that
	$Y \subseteq Z$. Given two structures $\astruct^*_1$ and $\astruct^*_2$ of this form, perhaps with constants,
	we write $\astruct^*_1 \equiv_k^{{\rm wf}} \astruct^*_2$ to denote that the Duplicator has a winning strategy in
	the $k$-round \EF game played on structures $\astruct^*_1$ and $\astruct^*_2$, but where Spoiler and Duplicator are forced to play wf subsets of $U \cup \overline U$ only.

	Consider structures $\astruct^*_1 = \langle 2^{X_1},\subseteq^{\astruct^*_1} \rangle$ and
	$\astruct^*_2 = \langle 2^{X_2},\subseteq^{\astruct^*_2} \rangle$
	of the form described above. We claim that, for every $k \geq 0$,

	\begin{equation}
		\label{eq:wf}
		X_1 \leftrightarrows_{3^k} X_2 \quad \Longrightarrow
		\quad \big(\astruct^*_1,(X_1 \cap U)\big) \ \equiv_k^{{\rm wf}} \  \big(\astruct^*_2,(X_2 \cap U)\big).
	\end{equation}

	Before proving the claim \eqref{eq:wf}, we explain how it implies Lemma \ref{lemma:pointed}. Take a structure
	of the form ${\astruct_n = \langle \{0,1,\bot\}^n,\subseteq^{\astruct_n}\rangle}$, where $\subseteq^{\astruct_n}$ is the subsumption relation
	over $\{0,1,\bot\}^n$. Take, on the other hand, the structure $\astruct_n^* = \langle 2^X,\subseteq^{\astruct_n^*} \rangle$,
	where $X = \{a_1,\dots,a_n,\bar a_1,\dots,\bar a_n\}$. It can be seen that there is an isomorphism $f$ between
	$\astruct_n$ and the substructure of $\astruct_n^*$ induced by the wf subsets of $X$. The isomorphism $f$ takes an instance $\es \in \{0,1,\bot\}^n$ and
	maps it to $Y \subseteq X$ such that for every $i \in \{1,\dots,n\}$, (a)
	if $\es[i] = 1$ then $a_i \in Y$, (b) if $\es[i] = 0$ then $\bar a_i \in Y$, and (c) if $\es[i] = \bot$ then neither $a_i$ nor $\bar a_i$ is in $Y$.
	By definition, the isomorphism $f$ maps the tuple $\{1\}^n$ in $\astruct_n$ to the set
	$X \cap U = \{a_1,\dots,a_n\}$ in $\astruct^*_n$.

	From claim \eqref{eq:wf}, it follows then that if $n,p \geq 3^k$ it is the case that $$(\astruct^*_n,\{a_1,\dots,a_n\}) \ \equiv_k^{{\rm wf}} \
	(\astruct^*_p,\{a_1,\dots,a_p\}).$$
	From our previous observations, this implies that
	$$(\astruct_n,\{1\}^n) \ \equiv_k \  (\astruct_p,\{1\}^p).$$
	We conclude, in particular, that $\astruct^+_n \equiv_k \astruct^+_p$, as desired.

	We now prove the claim in \eqref{eq:wf}.
	We do it by induction on $k \geq 0$. The base
	cases $k = 0$ and $k= 1$ are immediate. We now move to the induction case for $k+1$. Take structures
	$\astruct^*_1 = \langle 2^{X_1},\subseteq^{\astruct^*_1} \rangle$ and
	$\astruct^*_2 = \langle 2^{X_2},\subseteq^{\astruct^*_2} \rangle$
	of the form described above, such that $X_1 \leftrightarrows_{3^{k+1}} X_2$.
	Assume, without loss of generality, that for the first round the Spoiler picks the well formed element
	$X'_1 \subseteq X_1$ in the structure $\astruct^*_1$. From Lemma \ref{lemma:double}, there exists
	$X'_2 \subseteq X_2$ such that
	$$X'_1 \leftrightarrows_{3^k} X'_2 \ \ \text{ and } \ \ X_1\setminus X'_1 \leftrightarrows_{3^k} X_2 \setminus X'_2.$$
	By induction hypothesis, the following holds:
	\begin{eqnarray*}
		\big(\langle 2^{X'_1},\subseteq \rangle,(X'_1 \cap U)\big) & \equiv^{\rm wf}_k & \big(\langle 2^{X'_2},\subseteq \rangle,(X'_2 \cap U)\big)\\
		\big(\langle 2^{X_1 \setminus X'_1},\subseteq \rangle,((X_1 \setminus X'_1) \cap U)\big) & \equiv^{\rm wf}_k &
		\big(\langle 2^{X_2 \setminus X'_2},\subseteq \rangle,((X_2 \setminus X'_2) \cap U)\big).
	\end{eqnarray*}
	A simple composition argument allows to obtain the following from these two expressions:
	\begin{equation}
		\label{eq:final}
		\big(\langle 2^{X_1},\subseteq \rangle,(X_1 \cap U),X'_1\big) \ \equiv_k^{{\rm wf}} \ \big(\langle 2^{X_2},\subseteq \rangle,
		(X_2 \cap U),X'_2\big).
	\end{equation}
	This holds because $X'_1 = (X_1 \cap U)$ iff $X'_2 = (X_2 \cap U)$. In fact, assume that
	$X'_1 = (X_1 \cap U)$, so that $X'_1 \cap \overline U = \emptyset$. Since $X'_1 \leftrightarrows_{3^k} X'_2$, it follows that $X'_2 \subseteq X_2 \cap U$. On the other hand, $X_1 \setminus X_1' = X_1 \cap \overline U$. As $X_1 \setminus X_1' \leftrightarrows_{3^k} X_2 \setminus X'_2$, we conclude that $X_2 \setminus X'_2 \subseteq \overline{U}$, and hence $X_2 \cap U \subseteq X_2'$. Combining both inclusions, we obtain that $X'_2 = (X_2 \cap U)$. The other direction is completely analogous.

	But Equation \eqref{eq:final} is equivalent with the following fact:
	$$\big(\langle 2^{X_1},\subseteq \rangle,(X_1 \cap U)\big) \ \equiv_{k+1}^{{\rm wf}} \ \big(\langle 2^{X_2},\subseteq \rangle,
	(X_2 \cap U) \big).$$
	This finishes the proof of Lemma \ref{lemma:pointed}.

	\subsection{Proof of Lemma~\ref{lemma:comp}}
	\label{proof:comp}

	Let $\es_i$ and $\es'_i$ be the moves played by Spoiler and Duplicator in $\astruct_\M \oplus \astruct_{\M_1}$ and $\astruct_\M \oplus \astruct_{\M_2}$, respectively, for the first $i \leq k$ rounds of the \EF game
	\begin{align*}
		\big(\astruct_\M \oplus \astruct_{\M_1},\{1\}^{n+p},\{\bot\}^n \cdot \{1\}^p\big) \ \equiv_k \
		\big(\astruct_\M \oplus \astruct_{\M_2},\{1\}^{n+q},\{\bot\}^n \cdot \{1\}^q\big).
	\end{align*}
	We write $\es_i = (\es_{i1},\es_{i2})$ to denote that $\es_{i1}$ is the tuple formed by the first $n$ features of $\es_i$ and $\es_{i2}$ is the one formed
	by the last $p$ features of $\es_i$. Similarly, we write $\es'_i = (\es'_{i1},\es'_{i2})$ to denote that $\es'_{i1}$ is the tuple formed by the first $n$ features of $\es'_i$ and $\es'_{i2}$ is the one formed
	by the last $q$ features of $\es'_i$.

	The winning strategy for Duplicator is as follows. Suppose $i-1$ rounds have been played, and for round $i$
	the Spoiler picks element $\es_i \in \astruct_\M \oplus \astruct_{\M_1}$ (the case when he picks an element in $\astruct_\M \oplus \astruct_{\M_2}$ is symmetric).
	Assume also that $\es_i = (\es_{i1},\es_{i2})$. The duplicator then considers the position
	$$\big((\es_{12},\dots,\es_{(i-1)2}),(\es'_{12},\dots,\es'_{(i-1)2})\big)$$
	on the game $(\astruct_{\M_1},\{1\}^p) \equiv_k (\astruct_{\M_2},\{1\}^q)$, and finds his response $\es'_{i2}$ to $\es_{i2}$ in $\astruct_{\M_2}$.
	The Duplicator then responds to the Spoiler's move $\es_i \in \astruct_\M \oplus \astruct_{\M_1}$ by choosing the element $\es'_i = (\es_{i1},\es'_{i2}) \in \astruct_\M \oplus \astruct_{\M_2}$.

	Notice, by definition, that $\es_i = \{1\}^{n+p}$ iff $\es'_i =  \{1\}^{n+q}$. Similarly, $\es_i = \{\bot\}^n \cdot \{1\}^{p}$ iff $\es'_i =  \{\bot\}^n \cdot
	\{1\}^{q}$. Moreover, it is easy to see that playing in this way the Duplicator preserves the subsumption relation. Analogously, the strategy preserves the $\pos$ relation.
	In fact, $\es_i$ is a positive instance of $\astruct_\M \oplus \astruct_{\M_1}$ iff $\es_{i1}$ is a positive instance of $\astruct_\M$ or $\es_{i2}$ is a positive instance
	of $\astruct_{\M_1}$. By definition, the latter follows if and only if  $\es_{i1}$ is a positive instance of $\astruct_\M$ or $\es'_{i2}$ is a positive instance
	of $\astruct_{\M_2}$, which in turn is equivalent to $\es'_i$ being a positive instance of $\astruct_\M \oplus \astruct_{\M_2}$.

	We conclude that this is a winning strategy for the Duplicator, and hence that
	\begin{align*}
		\big(\astruct_\M \oplus \astruct_{\M_1},\{1\}^{n+p},\{\bot\}^n \cdot \{1\}^p\big) \ \equiv_k \
		\big(\astruct_\M \oplus \astruct_{\M_2},\{1\}^{n+q},\{\bot\}^n \cdot \{1\}^q\big).
	\end{align*}
	This finishes the proof of Lemma \ref{lemma:comp}.

	\subsection{Proof of Lemma~\ref{lemma:qbf}}
	\label{proof:qbf}

	To prove that \textnormal{$\Sigma_{k+1}$-\textsc{QBF}$(\dt)$} is in $\Sigma_{k}^{\text{P}}$, note that we can decide in polynomial time if a given decision tree encodes a tautology. Therefore, we can use a $\Sigma_k$-alternating Turing machine for guessing the values for the first $k$ quantifiers, we prune the decision tree according to those guesses, and then we solve the remaining universal quantifier directly.

	For the hardness, we use a reduction from the following standard $\Sigma_k^{\text{P}}$-hard problem (see \citep{MR3234575} for a reference):
	Given a 3CNF formula $\varphi$ over the set $X = \{x_1,\dots,x_m\}$ of propositional variables, is it the case that the expression $\psi = \exists X_1 \forall X_2 \cdots \exists X_k \, \varphi$ holds, where
	$X_1,\dots,X_k$ is a partition of $X$ in $k$ equivalence classes? Note that the hypothesis of $k$ being odd is important here because if the last quantifier were universal, we could solve it directly as in the case of decision trees.
	From $\psi$ we build in polynomial time a $\Sigma_{k+1}$-QBF $\alpha$ over $\M_\varphi$, where $\M_\varphi$ is a decision tree
	that can be
	built in polynomial time from $\varphi$, such that
	\begin{equation} \label{eq:holds}
		\text{$\psi$ holds} \quad \Longleftrightarrow \quad \text{$\alpha$ holds.}
	\end{equation}

	We now explain how to define $\M_\varphi$ from the CNF formula $\varphi$. Let
	$\varphi = C_1 \wedge \cdots \wedge C_n$ be a propositional formula,
	where each $C_i$ is a disjunction of three literals and does
	not contain repeated or complementary literals. Moreover, assume that
	$\{x_1, \ldots, x_m\}$ is the set of variables occurring in $\varphi$, and the proof will use partial instances of dimension $n+m$.
	Notice that the last $m$ features of such a partial instance $\es$ naturally define a truth assignment for the propositional formula $\varphi$. More precisely, for every $i \in \{1, \ldots, n\}$, we use notation $\es(C_i) = 1$ to indicate that there is a disjunct $\ell$ of $C_i$ such that $\ell = x_j$ and $\es[n+j] = 1$, or $\ell = \neg x_j$ and $\es[n+j] = 0$, for
	some $j \in \{1,\ldots,m\}$. Furthermore, we write $\es(\varphi) = 1$ if $\es(C_i) = 1$ for every $i \in \{1, \ldots, n\}$.

	For each clause $C_i$ ($i \in \{1, \ldots, n\}$), let $\M_{C_i}$ be
	a decision tree of dimension $n+m$ (but that will only use features $n+1, \ldots, n+m$) such that for every instance $\es$:
	$\M_{C_i}(\es) = 1$ if and only if $\es(C_i) = 1$.
	Notice that $\M_{C_i}$
	can be constructed in constant time as it only needs to contain at most
	eight paths of depth 3. For example, assuming that $C = (x_1 \vee x_2 \vee x_3)$,
	a possible decision tree $\M_C$ is depicted in the following figure:
	\begin{center}
		\begin{tikzpicture}[every node/.style={font=\footnotesize, scale=.8}]
			\node[circle,draw=black] (n) {$n+1$};
			\node[circle,draw=black,below left = 5mm and 28mm of n] (n0) {$n+2$};
			\node[circle,draw=black,below right = 5mm and 28mm of n] (n1) {$n+2$};
			\node[circle,draw=black,below left = 5mm and 10mm of n0] (n00) {$n+3$};
			\node[circle,draw=black,below right = 5mm and 10mm of n0] (n01) {$n+3$};
			\node[circle,draw=black,below left = 5mm and 10mm of n1] (n10) {$n+3$};
			\node[circle,draw=black,below right = 5mm and 10mm of n1] (n11) {$n+3$};
			\node[circle,draw=black,below left = 5mm and 5mm of n00] (n000) {$\false$};
			\node[circle,draw=black,below right = 5mm and 5mm of n00] (n001) {$\true$};
			\node[circle,draw=black,below left = 5mm and 5mm of n01] (n010) {$\true$};
			\node[circle,draw=black,below right = 5mm and 5mm of n01] (n011) {$\true$};
			\node[circle,draw=black,below left = 5mm and 5mm of n10] (n100) {$\true$};
			\node[circle,draw=black,below right = 5mm and 5mm of n10] (n101) {$\true$};
			\node[circle,draw=black,below left = 5mm and 5mm of n11] (n110) {$\true$};
			\node[circle,draw=black,below right = 5mm and 5mm of n11] (n111) {$\true$};

			\path[arrout] (n) edge node[above] {$0$} (n0);
			\path[arrout] (n) edge node[above] {$1$} (n1);
			\path[arrout] (n0) edge node[above] {$0$} (n00);
			\path[arrout] (n0) edge node[above] {$1$} (n01);
			\path[arrout] (n1) edge node[above] {$0$} (n10);
			\path[arrout] (n1) edge node[above] {$1$} (n11);
			\path[arrout] (n00) edge node[above] {$0$} (n000);
			\path[arrout] (n00) edge node[above] {$1$} (n001);
			\path[arrout] (n01) edge node[above] {$0$} (n010);
			\path[arrout] (n01) edge node[above] {$1$} (n011);
			\path[arrout] (n10) edge node[above] {$0$} (n100);
			\path[arrout] (n10) edge node[above] {$1$} (n101);
			\path[arrout] (n11) edge node[above] {$0$} (n110);
			\path[arrout] (n11) edge node[above] {$1$} (n111);
		\end{tikzpicture}
	\end{center}

	Moreover, define $\M_\varphi$ as the following decision tree.
	\begin{center}
		\begin{tikzpicture}
			\node[circle,draw=black] (c1) {$1$};
			\node[below left = 6mm and 6mm of c1] (tc1) {$\M_{C_1}$};
			\node[circle,draw=black,below right = 6mm and 6mm of c1] (c2) {$2$};
			\node[below left = 6mm and 6mm of c2] (tc2) {$\M_{C_2}$};
			\node[circle,draw=black,below right = 6mm and 6mm of c2] (c3) {$3$};
			\node[below left = 6mm and 6mm of c3] (tc3) {$\M_{C_3}$};
			\node[below right = 6mm and 6mm of c3] (d) {$\cdots$};
			\node[circle,draw=black,below right = 6mm and 6mm of d] (cn) {$n$};
			\node[below left = 6mm and 6mm of cn] (tcn) {$\M_{C_n}$};
			\node[circle,draw=black,below right = 6mm and 6mm of cn, minimum size=8mm] (o) {$\true$};

			\path[arrout] (c1) edge node[above] {$0$} (tc1);
			\path[arrout] (c1) edge node[above] {$1$} (c2);
			\path[arrout] (c2) edge node[above] {$0$} (tc2);
			\path[arrout] (c2) edge node[above] {$1$} (c3);
			\path[arrout] (c3) edge node[above] {$0$} (tc3);
			\path[arrout] (c3) edge node[above] {$1$} (d);
			\path[arrout] (d) edge node[above] {$1$} (cn);
			\path[arrout] (cn) edge node[above] {$0$} (tcn);
			\path[arrout] (cn) edge node[above] {$1$} (o);
		\end{tikzpicture}
	\end{center}

	Recall that the set of features of $\M_\varphi$ is $[1,n+m]$.
	The formula $\alpha$ is defined as
	$$\exists P_1 \forall P_2 \cdots \exists P_k \forall P \, \M_\varphi,$$
	assuming that $P_i$, for $1 \leq i \leq k$, is the set $\{n+\ell \mid x_\ell \in X_i\}$, and $P = \{1,\dots,n\}$.
	That is, $P_i$ is the set of features from $\M_\varphi$ that represent the variables in $X_i$ and $P$ is the set of features that are used to encode
	the clauses of $\varphi$.

	We show next that the equivalence stated in \eqref{eq:holds} holds. For simplicity, we only do it for the case $k = 1$. The proof for $k > 1$ uses exactly the same ideas, only that it is slightly more cumbersome.

	Assume, on the one hand, that $\psi = \exists X_1 \varphi$ holds. That is, there exists an assignment $\sigma_1 : X_1 \to \{0,1\}$
	such that $\varphi$ holds when variables in $X_1$ are interpreted according to $\sigma_1$. We show next
	that $\alpha = \exists P_1 \forall P \M_\varphi$ holds, where $P_1$ and $P$ are defined as above.
	Take the partial instance $\es_{\sigma_1}$ of dimension $n + m$ that naturally ``represents'' the assignment $\sigma_1$; that is:
	\begin{itemize}
		\item $\es_{\sigma_1}[i] = \bot$, for each
		$i \in \{1,\dots,n\}$,
		\item $\es_{\sigma_1}[n+i] = \sigma_1(x_i)$, for each $i \in \{1,\dots,m\}$ with $x_i \in X_1$, and
		\item $\es_{\sigma_1}[n+i] = \bot$,
		for each $i \in \{1,\dots,m\}$ with $x_i \not\in X_1$.
	\end{itemize}
	To show that $\alpha$ holds, it suffices to show that $\M_\varphi(\es) = 1$
	for every instance $\es$ of dimension $n+m$ that subsumes $\es_{\sigma_1}$. Take an arbitrary such an instance $\es \in \{0,1\}^{n+m}$.
	Notice that if $\es[i] = 1$, for every $i \in \{1,\dots,n\}$, then $\M_\varphi(\es) = 1$ by definition of $\M_\varphi$. Suppose then that there exists a minimum
	value $i \in \{1,\dots,n\}$ such that $\es[i] = 0$. Hence, to show that $\M_\varphi(\es) = 1$ we need to show that $\M_{C_i}(\es) = 1$.
	But this follows easily from the fact that $\es$ naturally represents an assignment $\sigma$ for $\varphi$ such that the restriction
	of $\sigma$ to $X_1$ is precisely $\sigma_1$. We know that any such an assignment $\sigma$ satisfies $\varphi$, and therefore it satisfies $C_i$. It follows
	that $\M_{C_i}(\es) = 1$.

	Assume, on the other hand, that $\alpha = \exists P_1 \forall P \M_\varphi$ holds. Then there exists a partial instance $\es$ of dimension $n+m$ such that the following statements hold:
	\begin{itemize}
		\item
		$\es[i] \neq \bot$ iff for some $j \in \{1,\dots,m\}$ it is the case that $i = n + j$ and $j \in P_1$, and
		\item for every $\es' \in \fullset(\es)$ we have that $\M_\varphi(\es') = 1$.
	\end{itemize}
	We show next that $\psi = \exists X_1 \varphi$ holds.
	Let $\sigma_1 : X_1 \to \{0,1\}$ be the assignment for the variables in $X_1$ that is naturally defined by $\es$. It suffices to show that each clause $C_i$ of $\varphi$, for $i \in \{1,\dots,n\}$,
	is satisfied by the
	assignment that interprets the variables in $X_1$ according to $\sigma_1$.
	Let us define a completion $\es'$ of $\es$ that satisfies the following:
	\begin{itemize}
		\item $\es'[i] = 0$,
		\item $\es'[j] = 1$, for each $j \in \{1,\dots,n\}$ with $i \neq j$, and
		\item $\es'[n+j] = \sigma_1(x_j)$, if $j \in \{1,\dots,m\}$ and $j \in P_1$.
	\end{itemize}
	We know that $\M_\varphi(\es') = 1$, which implies that $\M_{C_i}(\es') = 1$ (since $\es'$ takes value 0 for feature $i$).
	We conclude that $C_i$ is satisfied by the assignment which is naturally defined by $\es'$, which is precisely the one that interprets the variables in $X_1$ according to
	$\sigma_1$.

	\subsection{Proof of Lemma~\ref{presburger}}
	\label{proof:presburger}

	We induct on the depth of $\varphi$. First we will see the atomic cases:
\begin{itemize}
\item For atomic formulas of the form $x_i = x_j$, we take
\[
\sum_{\substack{\rho \in \{0, 1, \bot \}^k \\ \rho_i \neq \rho_j}}{z_\rho} \; = \; 0.
\]
\item For atomic formulas of the form $x_i \subseteq x_j$, we take
\[
\sum_{\substack{\rho \in \{0, 1, \bot \}^k \\ \rho_i \neq \rho_j \; \land \; \rho_i \neq \bot}}{z_\rho} \; = \; 0.
\]
\item For atomic formulas of the form $x_i \lel x_j$, we take
\[
\sum_{\substack{\rho \in \{0, 1, \bot \}^k \\ \rho_j = \bot}}{z_\rho} \; \leq \; \sum_{\substack{\rho \in \{0, 1, \bot \}^k \\ \rho_i = \bot}}{z_\rho}.
\]
\end{itemize}
Now we describe the structural induction. For the negation, it is enough to take $\operatorname{T}_\Gamma(\neg \psi)\bigl((z_\rho)_\rho\bigr)$ as $\neg \operatorname{T}_\Gamma(\psi)\bigl((z_\rho)_\rho\bigr)$.
If $\varphi$ is of the form $\psi_1 \circ \psi_2$, where $\circ$ is a binary logical connective, then we take $\operatorname{T}_\Gamma(\varphi)\bigl((z_\rho)_\rho\bigr)$ as $$\operatorname{T}_\Gamma(\psi_1)\bigl((z_\rho)\bigr) \; \circ \; \operatorname{T}_\Gamma(\psi_2)\bigl((z_\rho)\bigr).$$
Note that here it is important to use the inductive hypothesis with $\Gamma$ that contains both the free variables of $\psi_1$ and of $\psi_2$. We need to be more careful if $\varphi$ is of the form $\exists u\, \psi(y_1, \dots, y_\ell, u)$. By changing the name of the variable if necessary, we can assume that $u \not \in \Gamma$. Consider $\Lambda = (x_1, \dots, x_k, u)$ and take the formula
\[
	\operatorname{T}_\Lambda(\psi)\bigl((w_\tau)_{\tau\in\{0,1,\bot\}^{k+1}}\bigr)
\]
given by the inductive hypothesis. We take $\operatorname{T}_\Gamma(\varphi)\bigl((z_\rho)_{\rho\in\{0,1,\bot\}^{k}}\bigr)$ to be
\[
	\exists (w_\tau)_{\tau\in\{0,1,\bot\}^{k+1}} \quad \operatorname{Proj}_{\Lambda \to \Gamma}((w_\tau)_\tau,\, (z_\rho)_\rho) \; \land \; \operatorname{T}_\Lambda(\psi)\bigl((w_\tau)_{\tau\in\{0,1,\bot\}^{k+1}}\bigr),
\]
where $\operatorname{Proj}_{\Lambda \to \Gamma}((w_\tau)_\tau,\, (z_\rho)_\rho)$ is defined to be
\[
	\bigwedge_{\rho\in\{0,1,\bot\}^{k}}{z_{\rho} = w_{(\rho, 0)} + w_{(\rho, 1)} + w_{(\rho, \bot)}}.
\]
Finally, if $\varphi$ is of the form $\forall u\, \psi(y_1, \dots, y_\ell, u)$, we define $\Lambda$ as before and take $\operatorname{T}_\Gamma(\varphi)\bigl((z_\rho)_{\rho\in\{0,1,\bot\}^{k}}\bigr)$ to be
\[
	\forall (w_\tau)_{\tau\in\{0,1,\bot\}^{k+1}} \quad \bigl(\operatorname{Proj}_{\Lambda \to \Gamma}((w_\tau)_\tau,\, (z_\rho)_\rho) \; \rightarrow \; \operatorname{T}_\Lambda(\psi)\bigl((w_\tau)_{\tau\in\{0,1,\bot\}^{k+1}}\bigr) \bigr).
\]
For the second part of the lemma, note that for a subformula translated in a context $\Gamma$ of size $k$, the corresponding Presburger formula has $3^k$ variables. In the atomic cases we just need to manage sums without repetitions over those variables. Boolean connectives also do not cause any problems. In the case of quantifiers, we need to increase the size of the context from $k$ to $k+1$ and we also add $3^k$ projection formulas plus the recursive call. But because $k+1 \leq \rm{wd}(\varphi)$ and there are $O(|\varphi|)$ subformulas, the total output size is $O(|\varphi| \cdot 3^{\rm{wd}(\varphi)})$ up to polynomial factors of $\rm{wd}(\varphi)$. Notice that the same argument applies for proving that the computation itself can be done using at most that same space.

	\subsection{Proof of Lemma~\ref{lemma:maxp-hard}}
	\label{proof:maxp-hard}

	Consider the following similar problem. The input is a model $\M'$, an instance $\es'$ and a $k \in \mathbb{N}$, and the question is whether there exists a partial instance $\es$ that is a weak abductive explanation for $\es'$ on $\M'$ and whose number of defined features is at most $k$. This problem was studied in \cite{NEURIPS2020_b1adda14}, where it was shown to be $\np$-hard on decision trees. We show a reduction from this problem.

	First assume that $\M'(\es') = 1$. We create new variables $X_{i}$ for $i\in \{0,1,...,k\}$. Let $\M$ be a new decision tree such that $\dim(\M) = k + 1 + \dim(\M')$, depicted in the following figure:
	\begin{figure}[ht!]
		\begin{center}
			\includegraphics[width=0.3\textwidth]{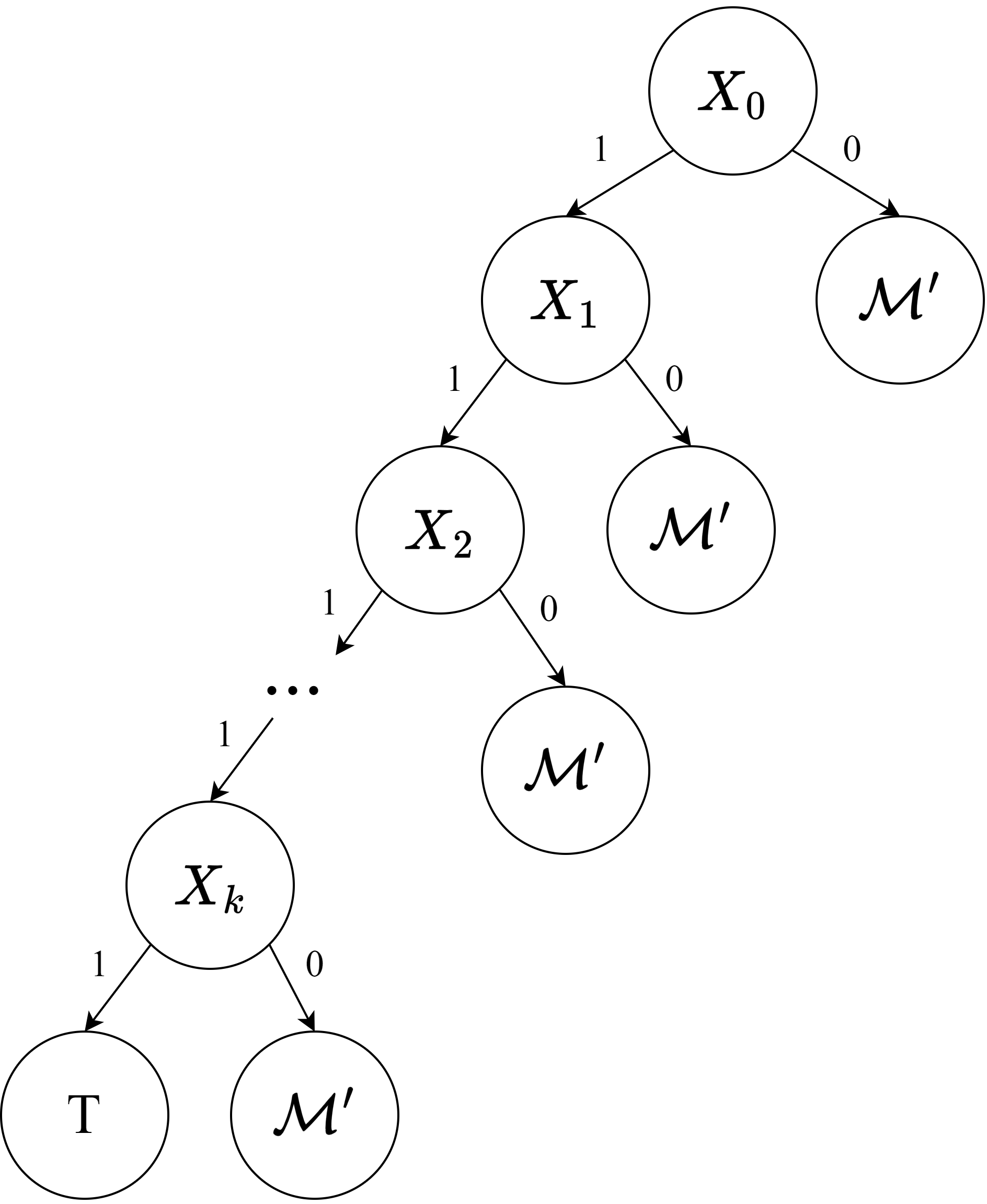}
			\Description[Visual description of the reduction.]
			{The figure shows the decision tree used in the reduction. It is a chain of new variables X_0, X_1, ..., X_k, where each edge labeled 1 continues along the chain and the final 1-edge reaches a leaf. Every edge labeled 0 branches to a copy of the original decision tree M'.}
		\end{center}
	\end{figure}

	We use $X_{0}$ as the root of $\M$. For every $i < k$, the outgoing edge of $X_i$ labeled by $1$ is connected to $X_{i+1}$, and the outgoing edge of $X_k$ labeled by $1$ is connected to a $\true$ leaf. Connect all outgoing edges labeled by $0$ to a copy of $\M'$. Let $\es_1 = \{1\}^{k+1} \cdot \es'$ and $\es_2 = \{1\}^{k+1} \cdot \{\bot\}^{\dim(\M')}$ be partial instances of size $\dm(\M)$. We claim that $\es_2$ is a minimum abductive explanation for $\es_1$ on $\M$ if and only if the answer to the original problem was negative. For the case $\M'(\es') = 0$ we can just set the value of the new leaf to $\false$ and the same construction will work.

	We now discuss why the reduction works. Suppose first that $(\M', \es', k)$ outputs $\yes$. It follows that there exists a partial instance $\es$ on $\M'$ that is a weak abductive explanation for $\es'$ with at most $k$ defined features. Now notice that the partial instance $\{\bot\}^{k+1} \cdot \es$ is a weak abductive explanation for $\es_1$ on $\M$ and has at most $k$ defined features. Because $\es_2$ has $k+1$ defined features, it follows that $\es_2$ is not a minimum abductive explanation for $\es_1$ on $\M$.

	Now suppose that $(\M', \es', k)$ outputs $\no$. Then there is no weak abductive explanation for $\es'$ on $\M'$ with at most $k$ defined features. This implies that there is also no partial instance $\es$ on $\M$ that is a weak abductive explanation for $\es_1$ with at most $k$ defined features. This is because any candidate weak abductive explanation for $\es_1$ with at most $k$ defined features leaves at least one of the new variables undefined, and therefore some completion of it reaches a copy of $\M'$. Once it enters that copy of $\M'$, its restrictions on the old coordinates would induce a weak abductive explanation for $\es'$ on $\M'$ with at most $k$ defined features, contradicting the assumption. But we know that $\es_2$ is a weak abductive explanation for $\es_1$ with $k+1$ defined features, so it is a minimum abductive explanation for $\es_1$ on $\M$, as we needed.

	\subsection{Proof of Lemma~\ref{lem-fixed-rho}}
	\label{proof:fixed-rho}

	We first treat the parameter-free case $\rho(x,y)$, and then we extend the idea to the general case.

	Fix a dimension $n$. Given a partial instance of dimension $n$, define $\#_0(\es)$ as the number of occurrences of the symbol $0$ in $\es$, and likewise for $\#_1(\es)$ and $\#_\bot(\es)$. Moreover, for every $(p,q,r) \in \mathbb{N}^3$ such that $p + q + r = n$, define
	\begin{align*}
		L_{(p,q,r)} \ := \ \{ \es \mid \es \text{ is a partial instance of
			dimension } n \text{ such that } \#_0(\es) = p,\, \#_1(\es) = q
		\text{ and } \#_\bot(\es) = r \}.
	\end{align*}
	Notice that there are at most $\binom{n+2}{2} \leq (n+1)^2$ different sets $L_{(p,q,r)}$. We claim that if $\es_1$ and $\es_2$ are partial instances of dimension $n$ such that $\es_1, \es_2 \in L_{(p,q,r)}$ for the same triple $(p, q, r)$, then $\mathfrak{B}_n \not\models \rho(\es_1, \es_2)$. From that we can conclude that the statement of the lemma holds
	for $p(n) = (n+1)^2$, since if $(\es_1, \ldots, \es_k)$ is a path of
	dimension $n$ in $\rho(x,y)$, then each $\es_i$ must belong to a
	different set $L_{(p,q,r)}$.

	For the sake of contradiction, suppose that $\es_1, \es_2$ are two different partial instances of dimension $n$ that belong to the same set $L_{(p, q, r)}$ and such that $\mathfrak{B}_n \models \rho(\es_1, \es_2)$. Then there exists a permutation $\pi : \{1, \ldots, n\}
	\to \{1, \ldots, n\}$ such that $\pi(\es_1) = \es_2$. Notice that for every pair
	$\es$, $\es'$ of partial instances of dimension $n$ it holds that:
	\begin{align*}
		\es \subseteq \es' &\quad\quad\Leftrightarrow\quad\quad   \pi(\es) \subseteq \pi(\es')\\
		\es \lel \es' &\quad\quad\Leftrightarrow\quad\quad   \pi(\es) \lel \pi(\es').
	\end{align*}
	Thus, $\pi$ is an automorphism for the structure $\mathfrak{B}_n$. Because we have that $\mathfrak{B}_n \models \rho(\es_1, \es_2)$, and $\rho$ is defined over the
	vocabulary $\{ \subseteq, \lel\}$, it follows $\mathfrak{B}_n \models \rho(\pi(\es_1), \pi(\es_2))$. But since $\es_2 = \pi(\es_1)$, we also have that $\mathfrak{B}_n \models \rho(\es_2, \pi(\pi(\es_1)))$. Because $\rho(x, y)$ is transitive, it follows that $\mathfrak{B}_n \models \rho(\es_1, \pi^2(\es_1))$. In the same way, we can conclude that $\mathfrak{B}_n \models \rho(\es_1, \pi^k(\es_1))$ for every $k \geq 1$. Given that the set of
	permutations of $n$ elements with the composition operator forms a group of order $n!$, we know that $\pi^{n!}$ is the identity permutation, so that $\pi^{n!}(\es_1) = \es_1$. Therefore, we
	conclude that $\mathfrak{B}_n \models \rho(\es_1, \es_1)$, which leads to a contradiction since $\rho(x,y)$ represents a strict partial order.

	Consider now a formula $\rho[v_1, \ldots, v_\ell](x,y)$ with parameters. As in the previous case, we fix a dimension $n$. Moreover, we also fix a sequence $\es'_1$, $\ldots$, $\es'_\ell$ of partial instances of dimension $n$ (notice that the bound $p(n)$ should not depend on those partial instances). Then, for every $(a_1, \ldots, a_\ell) \in \{0,1,\bot\}^\ell$, consider the set
	\begin{align*}
		P_{(a_1, \ldots, a_\ell)} \ = \ \{i \in \{1, \ldots, n\} \ \mid \ \es'_j[i] = a_j \text{ for all } j \in \{1, \ldots, \ell\}\},
	\end{align*}
	that is, all positions for which the sequence $\es'_1$, $\ldots$, $\es'_\ell$ realizes the pattern $(a_1, \ldots, a_\ell)$. Given $s \in \{0, 1, \bot\}$, a pattern $t \in \{0,1,\bot\}^\ell$ and a partial instance $\es$ of dimension $n$, define $\#_{s, t}(\es)$ as the number of indices $i \in P_t$ such that $\es[i]=s$. Notice that the numbers $\#_{s, t}(\es)$ are invariant under permutations of the features that map each pattern block onto itself.

	We define an equivalence relation as follows. For two partial instances $\es_1$ and $\es_2$ of dimension $n$, we write $\es_1 \sim \es_2$ if $\#_{s, t}(\es_1) = \#_{s, t}(\es_2)$ for every $s \in \{0, 1, \bot\}$ and every $t \in \{0,1,\bot\}^\ell$. Notice that there are at most
	$$\binom{n+3^{\ell+1}-1}{3^{\ell+1}-1} \leq \frac{\big(n+3^{\ell+1}-1\big)^{3^{\ell+1}-1}}{\big(3^{\ell+1}-1 \big)!}$$
	different equivalence classes. Because $\ell$ is fixed, we can consider that number as our polynomial $p(n)$.

	We claim that if $\es_1 \sim \es_2$, then $\mathfrak{B}_n \not\models \rho[\es'_1, \ldots, \es'_\ell](\es_1, \es_2)$. From that we can conclude, as in the parameter-free case, that the statement of the lemma holds. In fact, if $(\es_1, \ldots, \es_k)$ is a path of dimension $n$ in $\rho[v_1, \ldots, v_\ell](x,y)$, then each $\es_i$ must belong to a different equivalence class.

	For the sake of contradiction, suppose that $\es_1, \es_2$ are two different partial instances of dimension $n$ such that $\es_1 \sim \es_2$ and $\mathfrak{B}_n \models \rho[\es'_1, \ldots, \es'_\ell](\es_1, \es_2)$. For each pattern block $P_t$, consider a permutation $\pi_t$ of $P_t$ sending the restriction of $\es_1$ on $P_t$ to the restriction of $\es_2$ on $P_t$. Combining these permutations yields a permutation $\pi : \{1, \ldots, n\} \to \{1, \ldots, n\}$ such that $\pi(\es_1) = \es_2$. Notice that by the way we constructed the permutations $\pi_t$ and the pattern blocks $P_{t}$, we have that $\pi(\es'_j) = \es'_j$ for every $j \in
	\{1, \ldots, \ell\}$.

	As in the parameter-free case, we have that $\pi$ is an automorphism for the structure $\mathfrak{B}_n$. Since $\rho$ is a formula defined over the vocabulary $\{\subseteq, \lel\}$, and $\mathfrak{B}_n \models \rho[\es'_1, \ldots, \es'_\ell](\es_1, \es_2)$, it follows that
	$\mathfrak{B}_n \models \rho[\pi(\es'_1), \ldots, \pi(\es'_\ell)](\pi(\es_1), \pi(\es_2))$.
	Because $\pi(\es'_j) = \es'_j$ for every $j \in \{1, \ldots, \ell\}$ and $\es_2	= \pi(\es_1)$, we obtain that $\mathfrak{B}_n \models \rho[\es'_1, \ldots, \es'_\ell](\es_2, \pi(\pi(\es_1)))$, and using that $\rho[\es'_1, \ldots, \es'_\ell](x, y)$ is transitive, we also have that $\mathfrak{B}_n \models \rho[\es'_1, \ldots, \es'_\ell](\es_1, \pi^2(\es_1))$. In the same way, it is possible to conclude that $\mathfrak{B}_n \models
	\rho[\es'_1, \ldots, \es'_\ell](\es_1, \pi^k(\es_1))$ for every $k
	\geq 1$. Given that the set of permutations of $n$ elements with the
	composition operator forms a group of order $n!$, we know that
	$\pi^{n!}$ is the identity permutation, so that $\pi^{n!}(\es_1) =
	\es_1$. Therefore, we conclude that $\mathfrak{B}_n \models \rho[\es'_1, \ldots,
	\es'_\ell](\es_1, \es_1)$, which leads to a contradiction since
	$\rho[v_1, \ldots, v_\ell](x,y)$ represents a strict partial order.

        \subsection{Auxiliary predicates}
        \label{app:def-aux}

\paragraph{Definition of the formula $\led(x,y,z)$.}
\label{app:def-led} Let $\meet$ (\emph{Greatest Lower Bound}) be the following formula:
\begin{align*}
	\meet(x,y,z) \ := \ z \subseteq x \wedge z \subseteq y \wedge \forall w \, ((w \subseteq x \wedge w \subseteq y) \to w \subseteq z).
\end{align*}
The interpretation of this predicate is such that for every model $\M$ of dimension $n$ and every sequence of instances $\es_1, \es_2, \es_3$ it holds that $\M \models \meet(\es_1, \es_2, \es_3)$ if and only if $\es_3$ is the greatest partial instance subsumed by $\es_1$ and $\es_2$, i.e., the partial instance with most defined features subsumed by both. This property allows us to measure the number of defined features on which the two instances agree. By using this predicate, let $\led$ be a ternary predicate such that $\M \models \led(\es_1, \es_2, \es_3)$ if and only if the Hamming distance between $\es_1$ and $\es_2$ is less than or equal to the Hamming distance between $\es_1$ and $\es_3$. The relation $\led$ can be expressed as a formula from the atomic layer of $\ffoil$ as follows:
\begin{align*}
	\led(x,y,z) \ := \ \full(x) \wedge \full(y) \wedge \full(z) \ \wedge \exists w_1 \exists w_2 \, (\meet(x,y,w_1) \wedge
	\meet(x,z,w_2) \wedge w_2 \lel w_1).
\end{align*}

\paragraph{Definition of the formula $\add(x,y,z)$.}  Let $\lu$ (\emph{Level Up}) be the following formula:
\begin{align*}
	\lu(x, y) \ := \ x \lnel y \wedge \neg \exists z \ (x \lnel z \wedge z \lnel y),
\end{align*}
such that $\M \vDash \lu(\es_1, \es_2)$ if and only if $\es_1$ has exactly one less defined feature than $\es_2$. By using this predicate, let $\add$ be a ternary predicate such that $\M \vDash \add(\es_1, \es_2, \es_3)$ if and only if $\es_2$ is a feature subsumed by $\es_3$ and $\es_1$ is obtained from $\es_3$ by undefining the feature $\es_2$. The relation $\add$ can be expressed as formula from the atomic layer of $\ffoil$ as follows:
\begin{align*}
	\add(x,y,z) \ := \ {\sf Single}(y) \wedge x \subseteq z \wedge \lu(x, z) \wedge y \subseteq z \wedge \neg(y \subseteq x).
\end{align*}
Recall that ${\sf Single}(x)$ defines the set of partial instances with exactly one defined feature, and can be expressed as follows:
$${\sf Single}(x) := \exists y (y \subset x) \wedge \forall y (y \subset x \, \rightarrow \, \neg \exists z (z \subset y)).$$